\newcommand{\indep}{\perp \!\!\! \perp}
\newcommand{\notindep}{\not\!\perp\!\!\!\perp}
\def\eqref#1{equation~\ref{#1}}
\def\1{\bm{1}}
\DeclareMathAlphabet{\mathsfit}{\encodingdefault}{\sfdefault}{m}{sl}
\SetMathAlphabet{\mathsfit}{bold}{\encodingdefault}{\sfdefault}{bx}{n}
\crefname{section}{\S}{\S\S}
\crefname{subsection}{\S}{\S\S}
\newcommand{\BlackBox}{\rule{1.5ex}{1.5ex}}  
    \renewenvironment{proof}{\par\noindent{\bf Proof\ }}{\hfill\BlackBox\\[2mm]}
    \newenvironment{proof}{\par\noindent{\bf Proof\ }}{\hfill\BlackBox\\[2mm]}
\newtheorem{theorem}{Theorem}
\newtheorem{lemma}[theorem]{Lemma} 
\newtheorem{proposition}[theorem]{Proposition} 
\newtheorem{remark}[theorem]{Remark}
\newtheorem{corollary}[theorem]{Corollary}
\newtheorem{assumption}[theorem]{Assumption}
\newtcolorbox{intuitionbox}{
    colback=gray!15,  
    colframe=white,   
    boxrule=0pt,      
    boxsep=5pt,       
    sharp corners     
}
    \renewenvironment{proofsketch}{\par\noindent{\bf Proof Sketch\ }}{\hfill\BlackBox\\[2mm]}
\title{Causal Dynamic Variational Autoencoder for Counterfactual Regression in Longitudinal Data}
\author{%
  \name Mouad El Bouchattaoui\textsuperscript{1,2} \email mouad.elbouchattaoui@gmail.com \\
  \name Myriam Tami\textsuperscript{1} \\
  \name Benoit Lepetit\textsuperscript{2} \\
  \name Paul-Henry Cournède\textsuperscript{1} \\
  \addr \textsuperscript{1}Paris-Saclay University, CentraleSupélec, MICS Lab, Gif-sur-Yvette, France \\
  \addr \textsuperscript{2}Saint-Gobain, France
}
\begin{document}
\maketitle

\begin{abstract}
Accurately estimating treatment effects over time is crucial in fields such as precision medicine, epidemiology, economics, and marketing. Many current methods for estimating treatment effects over time assume that all confounders are observed or attempt to infer unobserved ones. In contrast, our approach focuses on unobserved adjustment variables—variables that specifically have a causal effect on the outcome sequence. Under the assumption of unconfoundedness, we address estimating Conditional Average Treatment Effects (CATEs) while accounting for unobserved heterogeneity in response to treatment due to these unobserved adjustment variables. Our proposed Causal Dynamic Variational Autoencoder (CDVAE) is grounded in theoretical guarantees concerning the validity of latent adjustment variables and generalization bounds on CATEs estimation error. Extensive evaluations on synthetic and real-world datasets show that CDVAE outperforms existing baselines. Moreover, we demonstrate that state-of-the-art models significantly improve their CATE estimates when augmented with the latent substitutes learned by CDVAE—approaching oracle-level performance without direct access to the true adjustment variables. \footnote{The implementation is available at  \url{https://github.com/moad-lihoconf/cdvae}}
\end{abstract}

\section{Introduction}
\label{sect:intro}

Estimating \emph{Conditional Average Treatment Effects (CATEs)} helps us understand how individuals uniquely respond to the same treatment, thereby enabling more personalized and effective decision-making. For example, in healthcare, two patients receiving the same drug might experience considerably different outcomes due to underlying genetic or lifestyle differences \citep{atan2018Deep-Treat, shalit2020ITPolicyClinical, mueller2023personalized}. The exact curriculum might yield more remarkable improvement for one student than another in education, depending on background factors like socioeconomic status \citep{morgan2013handbookSocial, Imbens2015CausalIF}. Likewise, in marketing, identical promotions might drive purchases in one customer segment but not another \citep{hair2021dataMarketing, fang2023alleviating}.

\emph{Longitudinal data} arise naturally in these domains. Consider, for instance, a medical dataset recording blood pressure, treatments (e.g., vasopressors), and vitals for each patient at regular intervals. Or a retail dataset tracking weekly customer purchases following commercial campaigns. Each of these longitudinal data describes a sequence of treatments, covariates, and responses causally interacting through time. However, this setting brings unique challenges:
\begin{inparaenum}[(1)]
    \item \emph{Time-dependent confounding}: Confounders influenced by past treatment can impact subsequent treatments and responses \citep{platt2009time};
    \item \emph{Selection bias}: Time-varying covariates exhibit imbalanced distributions across treatment regimes which should be accounted for to estimate treatment effects accurately \citep{robins2000MSM, schisterman2009overadjustment, lim2018RMSM};
    \item \emph{Long-term dependencies}: A treatment effect may unfold over extended periods, requiring models to capture complex long-range interactions between covariates, treatments, and responses \citep{choi2016retain, pham2017predicting};  
    \item \emph{Missing covariates}: Some variables crucial for predicting outcomes—like genetic predispositions or environmental exposures—introduce bias and less personalized treatment effect estimation unless properly accounted for.  
\end{inparaenum}

\paragraph{Assumptions over Confounders and Existing Approaches}  
The existing literature primarily addresses the first three challenges under the assumption of \emph{sequential ignorability}, where all confounders, whether static or time-varying, are fully observed. Methods such as Marginal Structural Models (MSMs) \citep{robins-time-varying-exposures}, Recurrent Marginal Structural Models (RMSM) \citep{lim2018RMSM}, Counterfactual Recurrent Networks (CRN) \citep{bica2020estimatingCRN}, G-Net \citep{Li2021GNetAR}, Causal Transformers (CT) \citep{Melnychuk2022CausalTF}, and Causal CPC \citep{bouchattaoui2024CausalCPC} have been developed based on this assumption. When the fourth challenge—missing covariates—is addressed, it is often in the form of \emph{missing confounders}, leading to the violation of sequential ignorability. Existing approaches either condition on \emph{observed proxies} of confounders to infer a latent representation of the unobserved confounders \citep{MeasbiasproxyConf, MiaoProxyUnmeasuredConf2016, louizos2017causalproxyconf, LuChengCMAproxynConf20121} or apply the \emph{deconfounding technique}, which involves imposing a factor model over the treatment assignment to mitigate hidden confounding \citep{lopezmultipletreatments, ranganath2018multiple, wang2019blessings, zhang2019medicalDeconf, Bica2020TimeSDecounf, hatt2024sequential}.

\paragraph{Our Focus} In contrast to missing confounders, we study for the first time the presence of \emph{unobserved static adjustment variables}—factors that affect only the outcome sequence and remain \emph{time-invariant}. In a medical context, these could include genetic factors, environmental conditions, or lifestyle attributes that impact treatment response but are not directly observed \citep{sadowski2024characterizing}. The absence of such variables can lead to a loss of heterogeneity in the estimated treatment effect. This phenomenon can also be understood through the lens of \emph{population structure}, which arises from distinct subgroups within a population that share characteristics such as geography, socioeconomic status, or cultural practices. Having knowledge of, or being able to accurately infer, population structure allows for a more precise estimation of CATEs by capturing variations in treatment responses across subgroups. In genomics \citep{laird2011fundamentalsgenetics, peter2020genetic}, for instance, such structure arises due to evolutionary or migration histories. This concept extends beyond genetics to fields such as economics, healthcare, and education, where differences among subgroups stem from factors such as socioeconomic conditions or environmental influences. Moreover, population structure often acts as an \emph{effect modifier} \citep{hernan_whatif_2020}, altering treatment effects across subgroups without directly affecting treatment assignment \citep{hyun2024increased}. By accounting for such effect modifiers through stratified or interaction analyses, treatment effects can be estimated more accurately, even in randomized trials \citep{schochet2024design}. 

\textbf{Adjustment Variables vs. Confounders.} We distinguish \textit{confounders}, pre-treatment variables that affect both treatment and outcome, whose omission biases estimates, from \textit{adjustment variables}, pre-treatment variables that affect the outcome and may modify treatment effects but not treatment assignment \citep{causainf_whatIf}. Adjusting for the latter clarifies further treatment-effect heterogeneity, even when confounding variables are fully observed. For example, in healthcare, pharmacogenetic variants (VKORC1, CYP2C9) markedly alter warfarin (a blood thinner) response yet typically do not determine treatment assignment. Modeling these variants (observed or represented) improves precision and personalization of effects \citep{mcclain2008warfarin, Ute2008Warfarin}. In education, Baseline test scores and socioeconomic indicators are important adjustment variables and often are effect modifiers. Covariate adjustment in school-level RCTs thus substantially increases precision and clarifies subgroup treatment effects \citep{HowardBaselinesCovRCTs2007, steingrimsson2017improving}. In online experiments/ marketing, pre-period behavior (preceding the A/B test) consists of important adjustment variables that do not affect the randomized assignment, yet adjusting for them leads to faster detection of heterogeneity in treatment effect and even a reduction of the required sample size to achieve a given statistical power \citep{deng2013improving,benkeser2021improving}.

In this work, we specifically address the challenge of missing covariates by focusing on estimating the \emph{contemporaneous treatment effect}—the effect of the current treatment on the subsequent response, given the history of confounding processes (see Remark \ref{remark:contemporaneous_effects}). We consider the estimation of an \emph{Augmented Conditional Average Treatment Effect (ACATE)} that depends not only on the confounding variables but also on adjustment variables. We aim to achieve \emph{near-oracle performance} in treatment effect estimation by leveraging the learned representation of unobserved adjustment variables. Specifically, we prove that for all baselines satisfying sequential ignorability, treatment effect estimation is substantially improved when these models are augmented with the learned representation of unobserved adjustment variables produced by our model—ultimately delivering performance that approaches the oracle scenario, where the true unobserved adjustment variables are directly fed into the baselines. Extensive experiments on synthetic data and semi-synthetic data derived from real-world datasets such as MIMIC-III \citep{johnson2016mimic} validate our approach.

\paragraph{Our Approach}  
We treat the unobserved adjustment variables as latent variables and leverage a probabilistic modeling approach based on the \emph{Dynamic Variational Autoencoder (DVAE)} framework \citep{Girin2021DynamicalVAReview} to estimate the ACATE, where we augment the CATE with the learned representation of unobserved adjustment variables, which we term \emph{substitutes}. We address the covariate imbalance induced by selection bias in longitudinal data using a weighted empirical risk minimization strategy, where the weights are a function of the propensity scores. Additionally, we derive a generalization bound for the error in estimating ACATE and use its approximation as a loss function for our model, which we refer to as \emph{Causal DVAE (CDVAE)}. To account for potential population structure induced by unobserved adjustment variables, we define a flexible prior over the latent variables using a learnable Gaussian Mixture Model. We ensure the causal validity of the learned substitutes by imposing a finite-order Conditional Markov Model (CMM) on the response series. The causal validity follows from our proof that, once the finite-order CMM holds, the learned substitutes account for all relevant adjustment variables (Theorem \ref{thm:valid_Z}) and that the ACATE leveraging the substitutes is identifiable. To experimentally support the theoretical analysis, we discuss the relevance of all CDVAE components through an ablation study. 

Furthermore, since we adopt a probabilistic approach, we represent the substitute for unobserved adjustment variables as a \emph{stochastic} latent variable. Consequently, the choice of the substitute given individual longitudinal data is not unique.  As a novel contribution, we study Causal DVAE in the \emph{near-deterministic regime}, where we reduce the variance of the responses (outputs of the Causal DVAE) to near zero. This process indirectly pushes the covariance matrix of the substitutes toward zero, preventing their distribution from becoming overly diffuse. An intuitive consequence of this approach is that any sampled substitute of the adjustment variables yields the same treatment effect as the mean substitute. We demonstrate that as the response variance approaches zero, the treatment effect remains consistent regardless of the specific instance of the substitutes. The near-deterministic regime of VAEs has not been previously explored in the context of causal inference, making this a novel contribution. To draw an analogy, in the deconfounding literature \citep{lopezmultipletreatments, ranganath2018multiple, wang2019blessings, zhang2019medicalDeconf, Bica2020TimeSDecounf, hatt2024sequential}, substitutes—referring to learned representations of missing confounders—are theoretically assumed to follow a Dirac posterior. This enables deterministic selection and facilitates treatment effect identifiability. However, the training of these models is based on a non-degenerate posterior, creating a disconnect between theoretical assumptions and practical implementation. Our approach bridges this gap by achieving near-deterministic behavior while maintaining a probabilistic framework.  

\paragraph{Contributions} Our contributions can be summarized as follows:
\begin{inparaenum}[(1)]
    \item We propose a principled approach to infer a latent representation of unobserved adjustment variables~(\cref{sect:causal_dvae}).
    \item We prove the theoretical validity of the learned adjustment variables by modeling the response series as a finite-order conditional Markov process, ensuring that ACATEs conditioned on learned substitutes are identifiable~(\cref{sect:CMM}).
    \item We study for the first time the near-deterministic regime of VAEs for causal inference and show that reducing the variance of response predictions enforces consistency in treatment effect estimation under stochastic modeling of adjustment variables~(\cref{sect:cdvae_near_det}).
    \item Numerical experiments confirm the effectiveness of CDVAE in estimating ACATEs, showing that our method consistently outperforms state-of-the-art (SOTA) baselines~(\cref{sect:experiments}), as further analyzed through an ablation study.
    \item We show that for SOTA baselines, augmenting their input covariates with the inferred representation of adjustment variables given by CDVAE substantially enhances their accuracy in estimating ACATEs~(\cref{sect:experiments}) and even provides near-oracle performances.
\end{inparaenum}

\section{Related Work}
\paragraph{Causal Inference in Time-Varying Settings} Assuming sequential ignorability, MSMs \citep{robins2000MSM} are widely used to adjust for time-varying confounders in causal inference, employing inverse probability of treatment weighting \citep{robins-time-varying-exposures} to balance covariates by estimating treatment probabilities based on past treatment and confounders. However, MSMs often yield high-variance estimates, and their effectiveness heavily relies on the accurate specification of the treatment assignment mechanism—a challenge in complex, high-dimensional settings. Recent neural network-based models address some of these challenges. The RMSM \citep{lim2018RMSM} uses Recurrent Neural Networks (RNNs) in propensity and outcome modeling for multi-step outcome forecasting and outperforms traditional MSMs. Furthermore, the CRN \citep{bica2020estimatingCRN} uses adversarial domain training to learn a treatment-invariant representation space to mitigate bias from time-varying confounders. G-Net \citep{Li2021GNetAR} combines g-computation \citep{vansteelandt2014structural} with sequential deep models for multi-timestep counterfactual prediction. Causal Transformer \citep{Melnychuk2022CausalTF} employs self-attention to capture temporal dynamics and to learn treatment-invariant representations using an adversarial approach, similar to \cite{bica2020estimatingCRN}. G-Transformer \citep{xiong2024g} extends g-computation by using a Transformer to model covariate dynamics and Monte-Carlo rollouts for counterfactuals under dynamic regimes and is thus conceptually a combination of Causal Transformer and G-Net. Additionally, Causal CPC \citep{bouchattaoui2024CausalCPC} leverages temporal dynamics through contrastive predictive coding and information maximization and learns a balanced representation using an adversarial training strategy. Our approach enhances these SOTA baselines by augmenting their input covariates with our inferred substitute of adjustment variables, achieving a more accurate estimation of ACATEs.

\paragraph{Combining Weighting and Representation Learning} Representation learning in causal inference helps correct covariate imbalance in high-dimensional settings by balancing learned features and reducing covariate shift. When combined with weighting methods \citep{zubizarreta2015stable, li2018balancing, johansson2018learningweighted, hassanpour2019counterfactual}, these approaches become more effective—showing that carefully chosen weights can improve causal estimation. However, there is a trade-off between covariate balance and predictive performance: overly strict balancing can lead to the loss of valuable features, increasing bias in treatment effect estimates. \cite{zhang2020learning} and \cite{johansson2019support} noted that such regularization might lead to a loss of ignorability in the representation and suggested learning representations in which the context information remains preserved but where treatment groups overlap. To this end, \cite{assaad2021counterfactual} introduced the Balancing Weights Counterfactual Regression (BWCFR) method, which aims to achieve balance \emph{within the reweighted covariate} representations instead of directly balancing the covariates representations. \cite{assaad2021counterfactual} argues that BWCFR provides bounds on the degree of imbalance as a function of the propensity model and offers theoretical guarantees for estimating the CATE using the overlapping weight. \cite{johansson2022generalization} built on the previous work on sampling weighting for counterfactual regression and representation learning \citep{johansson2016learning, shalit2017estimating, kallus2020deepmatch, jung2020learningWERM, assaad2021counterfactual}, and provide a comprehensive theory for weighted risk minimization for CATE for a learned representation from the data. In this paper, we adapt the weighted empirical risk minimization (WERM) framework traditionally developed for static settings to the longitudinal context. Our approach emphasizes \emph{contemporaneous treatments effects} over sequences of interventions (see Remark \ref{remark:contemporaneous_effects}), allowing a natural adaptation of WERM to time-varying data.

\paragraph{Probabilistic Modeling in Causal Inference} In scenarios where confounders are unobserved but proxy variables are available, probabilistic models are used to infer a representation for the unobserved confounding given the proxy variables \citep{MeasbiasproxyConf, MiaoProxyUnmeasuredConf2016, louizos2017causalproxyconf, LuChengCMAproxynConf20121}. While our work assumes the presence of observed confounding, it draws inspiration from the theory of deconfounding \citep{lopezmultipletreatments, ranganath2018multiple, wang2019blessings}, and its extensions to time-varying settings \citep{Bica2020TimeSDecounf, hatt2024sequential}. The deconfounding involves applying a factor model over the treatment assignment, where each treatment becomes conditionally independent given latent variables that serve as substitutes for the unobserved confounders. \cite{Bica2020TimeSDecounf} extended the application of the confounding method to sequential settings with multiple treatments to infer time-varying confounders. They assumed that the joint distribution of causes at each time step, conditioned on latent variables and observed confounders, could be decomposed into the product of the conditional distribution of each treatment. For a single treatment per time step, \cite{hatt2024sequential} assumed a conditional Markov model over the \emph{treatment sequences} given the sequence of confounders and the latent variables.  In this work, we demonstrate that the core idea of the factor model can be applied to learn valid substitutes for unobserved adjustment variables in the time-varying domain. Unlike previous deconfounder works, we show that assuming a higher-order conditional Markov model for the \emph{sequence of responses} is sufficient to infer a valid representation of the unobserved adjustment variables.

\section{Problem Definition}
\label{sect:problem_def}
Following the Potential Outcome (PO) framework \citep{robins2009estimation}, we consider a cohort of individuals indexed as $i \in \{1,2,\dots,n\}$, observed over $T$ time steps. At each time point $t \in \{1,2,\dots,T\}$, we define:
\begin{inparaenum}[(1)]
\item A \textbf{binary treatment} $W_{it} \in \mathcal{W} = \{0,1\}$, such as whether a cancer patient receives radiotherapy or not;
\item An \textbf{outcome} $Y_{it} \in \mathcal{Y} \subset \mathbb{R}$, which represents the response to the treatment (e.g., tumor volume);
\item A \textbf{context} $\mathbf{X}_{it} \in \mathcal{X} \subset \mathbb{R}^{d_x}$, a time-varying $d_x$-dimensional vector capturing confounders, such as health records and patient measurements;
\item \textbf{Partially observed potential outcomes} $Y_{it}(1)$ and $Y_{it}(0) \in \mathcal{Y} \subset \mathbb{R}$, which represent the outcomes that \emph{would} be observed under treatments $W_{it}=1$ and $W_{it}=0$, respectively; 
\item \textbf{Unobserved adjustment variables} $\mathbf{U}_i \in \mathcal{U} \subset \mathbb{R}^{d_u}$, which are static variables affecting the response series $Y_{i1}, Y_{i2}, \dots, Y_{iT}$.
\end{inparaenum}

\begin{remark}
\label{remark:contemporaneous_effects}
We could have defined the PO for an individual given their treatment history up to time $t$ as $Y_{it}(\omega_{i, \leq t})$, where $\omega_{i, \leq t}\coloneqq (\omega_{i,1}, \omega_{i,2}, \dots, \omega_{i,t}) \in \mathcal{W}^t$. Here, we focus on the \underline{contemporaneous treatment effect}, that relates to the current treatment $W_{it}$. We assume the treatment history up to $t-1$, $\omega_{i, < t}$, to be \underline{consistent with the observed history}, that is, $\omega_{i, < t} = W_{i, < t}$. 
\end{remark}
\begin{figure}[h!]
    \centering
    \includegraphics[width=0.3\textwidth]{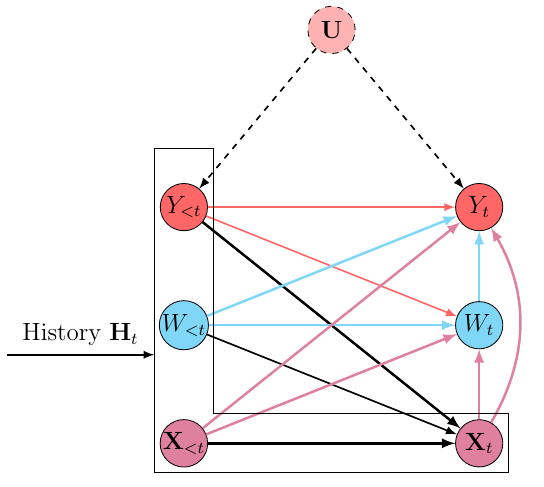}
    \caption{A simplified representation of the DGP at time $t$. Edges between $Y_{<t}$, $W_{<t}$, and $\mathbf{X}_{<t}$ are omitted for simplicity.}
    \label{fig:CG_compressed}
\end{figure}
We define the \emph{confounding history process} $\mathbf{H}_{it} = [\mathbf{X}_{i,\leq t}, W_{i,<t}, Y_{i,<t}]$, capturing information up to the assignment of $W_{it}$ depicted in Figure \ref{fig:CG_compressed} which also compactly represents the Data Generating Process (DGP) for which we can estimate the CATE, that is
\begin{equation}
\label{eq:def_cate} 
\tau_{t}(\mathbf{h}_{t}) \coloneqq \mathbb{E}(Y_t(1) - Y_t(0) \mid \mathbf{H}_{t} = \mathbf{h}_{t}).
\end{equation}

We assume the CATE identifiability from the observed data distribution using sequential ignorability \citep{robins2009estimation,lim2018RMSM, bica2020estimatingCRN} represented in Assumptions \ref{assp:cate_identif}.
\begin{assumption}[CATE Identifiability] 
We make three key identifiability assumptions at each time step $t$, for any potential treatment $\omega$, and given the realization $\mathbf{h}_{it}$ of the confounding history $\mathbf{H}_{it}$:
\begin{compactenum}
    \item \textbf{Consistency.} \label{assp:consistency} 
    If a unit $i$ receives treatment $\omega$, then the observed outcome corresponds to the potential outcome under $\omega$. Formally, $W_{it} = \omega \implies Y_{it} = Y_{it}(\omega)$.
    
    \item \textbf{Unconfoundedness/Ignorability.} \label{assp:ignorability} 
    The potential outcomes $Y_{it}(\omega)$ are independent of the treatment assignment $W_t$, given the confounding history $\mathbf{H}_{it} = \mathbf{h}_{it}$. Thus, $Y_{it}(\omega) \indep W_t \mid \mathbf{H}_{it} = \mathbf{h}_{it}$.
    
    \item \textbf{Overlap/Positivity.} \label{assp:overlap} 
    For any confounding context $\mathbf{h}_{t}$ where $p(\mathbf{h}_t) \neq 0$, there is a non-zero probability of observing each treatment regime. Formally, $p(W_t = \omega \mid \mathbf{h}_t) > 0$.
\end{compactenum}
\label{assp:cate_identif}
\end{assumption}

With Assumptions \ref{assp:cate_identif}, we ensure that the CATE is identifiable. Specifically, we have:
\begin{equation}
\label{eq:CATE_Identifiability}
\tau_t(\mathbf{h}_{t}) = \mathbb{E}(Y_{t} \mid \mathbf{H}_t = \mathbf{h}_t, W_{t} = 1) - \mathbb{E}(Y_{t} \mid \mathbf{H}_t = \mathbf{h}_t, W_{t} = 0).
\end{equation}
So far, we expressed the CATE as a function of the confounding history \(\mathbf{H}_t\). Now, assuming we \textbf{observe adjustment variables} $\mathbf{U}$ and aim to estimate a heterogeneous treatment effect depending on both $\mathbf{H}_{t}$ and $\mathbf{U}$. We define the \emph{Augmented CATE (ACATE)} as follows:
\begin{equation}
\label{eq:def_acate}
\tau_{t}(\mathbf{h}_{t}, \mathbf{u}) \coloneqq \mathbb{E}(Y_t(1) - Y_t(0) \mid \mathbf{H}_{t} = \mathbf{h}_{t}, \mathbf{U} = \mathbf{u}).
\end{equation}
Since the adjustment variables $\mathbf{U}$ influence only the response series and do not act as confounders in the treatment-response relationship at any time step, the ignorability of treatment given \(\mathbf{H}_t\) suffices for identifying the ACATE, as expressed below:
\begin{equation}
\label{eq:ACATE_identif_U}
\tau_{t}(\mathbf{h}_{t}, \mathbf{u}) = \mathbb{E}(Y_{t} \mid \mathbf{H}_t = \mathbf{h}_t, \mathbf{U} = \mathbf{u}, W_{t} = 1) - \mathbb{E}(Y_{t} \mid \mathbf{H}_t = \mathbf{h}_t, \mathbf{U} = \mathbf{u}, W_{t} = 0).
\end{equation}

\begin{intuitionbox}
\paragraph{Inference Problem}  
Deriving the ACATE becomes straightforward when both confounding and adjustment variables are fully observed. However, when \( \mathbf{U} \) is missing, unobserved heterogeneity in the treatment effect arises, leading to incomplete or biased conclusions about treatment effects. This raises the following key inference question: \textit{Under what conditions can a substitute \( \mathbf{Z} \) for the unobserved adjustment variables \( \mathbf{U} \) ensure the identifiability of the ACATE by replacing \( \mathbf{U} \) with \( \mathbf{Z} \), specifically guaranteeing that the following equality \( \color{red}{(i)} \) holds?}  
\begin{equation*}
\scalebox{0.9}{$
\tau_{t}(\mathbf{h}_{t}, \mathbf{z}) = \mathbb{E}(Y_t(1) - Y_t(0) \mid \mathbf{H}_{t} = \mathbf{h}_{t}, \mathbf{Z} =  \mathbf{z}) 
\overset{\color{red}{(i)}}{=}  \mathbb{E}(Y_{t} \mid \mathbf{H}_t = \mathbf{h}_t, \mathbf{Z} = \mathbf{z}, W_{t} = 1) 
- \mathbb{E}(Y_{t} \mid \mathbf{H}_t = \mathbf{h}_t, \mathbf{Z} = \mathbf{z}, W_{t} = 0).
$}
\end{equation*}
\end{intuitionbox}

\section{Causal DVAE}
\label{sect:causal_dvae}

\subsection{When does a Latent Representation Act as a Valid Substitute for Unobserved Adjustment Variables?}
\label{sect:CMM}
To address the causal inference problem defined in Section \ref{sect:problem_def}, we treat \(\mathbf{Z}\) as a latent variable to be learned from the observed data distribution \(p(\mathbf{X}_{\leq T}, W_{\leq T}, Y_{\leq T})\) within a probabilistic model defined over the conditional responses as follows:
\begin{assumption}[CMM($p$): Conditional Markov Model of order $p$]
\label{assp:cmm_y}
We say that a latent variable $\mathbf{Z}$ follows a Conditional Markov Model of order $p$, written \(\mathbf{Z} \sim CMM(p)\), if there exists a fixed order \( p \in \mathbb{N}^* \) and a parameter vector \( \theta \) such that the response distribution factorizes as:  
\[
p_\theta(y_{\leq T} , \mathbf{z} \mid  \mathbf{x}_{\leq T}, \omega_{\leq T}) = p(\mathbf{z})\prod_{t = 1}^T p_\theta\left(y_t \mid y_{t-1:t-p}, \mathbf{x}_{\leq t}, \omega_{\leq t}, \mathbf{z}\right).
\]
For \( t < 1 \), we set \( y_t = \emptyset \) by convention.
\end{assumption}

The fact that \(\mathbf{Z}\) is set only with a prior in Assumption \ref{assp:cmm_y} mimics the role of \(\mathbf{U}\) because \(\mathbf{U}\) consists of static adjustment variables that are parentless, as illustrated in the causal graph (Figure \ref{fig:CG_compressed}). The bounded memory assumption over the sequence of responses, that is, the direct causal effect of past responses on future ones stops at an arbitrary order \(p\), is a technical condition ensuring the causal validity of \(\mathbf{Z}\) as in the Theorem \ref{thm:seq_ign_augmented} where we demonstrate the sequential ignorability property when augmenting the history process with \(\mathbf{Z}\), replicating the same result as if the true adjustment variables \(\mathbf{U}\) were available. All the proofs are deferred to Appendix \ref{appendix:proofs}.

\begin{theorem}[Sequential Ignorability with Augmented History]
\label{thm:seq_ign_augmented}
Let $\mathbf{Z}$ be a latent variable verifying CMM($p$). Assume the response domain $\mathcal{Y}$ is a Borel subset of a compact interval. Therefore, sequential ignorability holds when augmenting the history process with $\mathbf{Z}$: 
$$Y_{t}(\omega) \indep  W_{t}| \mathbf{H}_{t} = \mathbf{h}_{t},\mathbf{Z}= \mathbf{z}\quad \forall (\omega,\mathbf{h}_{t},\mathbf{z}),$$
where $\mathbf{H}_{t}$ represents the history process up to time $t$. 
\end{theorem}
The ignorability result of Theorem \ref{thm:seq_ign_augmented} is the first step into answering the inference problem of \ref{sect:problem_def} as it allows us to establish the identifiability of the ACATE when $\mathbf{Z}$ replaces the true unobserved variable $\mathbf{U}$ in Eq. (\ref{eq:ACATE_identif_U}), summarized in the following corollary:
\begin{corollary}[Identifiability of ACATE with $\mathbf{Z}$]
\label{corollary:acate_identif} 
Let $\mathbf{Z}$ be a latent variable satisfying CMM($p$). The Augmented CATE, when augmented with $\mathbf{Z}$ instead of $\mathbf{U}$ as defined in Equation \ref{eq:ACATE_identif_U}, is identifiable. Specifically:
 \begin{equation}
 \label{eq:acate_identif}
 \tau_{t}(\mathbf{h}_{t}, \mathbf{z}) = \mathbb{E}(Y_{t}| \mathbf{H}_t = \mathbf{h}_t, \mathbf{Z}= \mathbf{z}, W_{t} = 1) - \mathbb{E}(Y_{t}| \mathbf{H}_t = \mathbf{h}_t, \mathbf{Z}= \mathbf{z}, W_{t} = 0) 
 \end{equation}
\end{corollary}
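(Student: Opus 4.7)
The plan is to mirror the proof of Proposition \ref{prop: ACATE_identif_U}, simply swapping the observed risk vector $\mathbf{U}$ for the latent substitute $\mathbf{Z}$, and verifying that each of the three classical identifiability ingredients carries over to the augmented conditioning set $(\mathbf{H}_t, \mathbf{Z})$. Augmented ignorability $Y_t(\omega) \indep W_t \mid \mathbf{H}_t, \mathbf{Z}$ is delivered directly by Theorem \ref{thm: seq_ign_augmented}. Consistency (Assumption \ref{assp: consistency}) is a pointwise identity between $Y_t$ and $Y_t(W_t)$ and is unaffected by what we condition on, so it transfers immediately. For overlap, I would reuse the randomization representation introduced in the proof of Theorem \ref{thm: seq_ign_augmented}: writing $W_{it} = k_t(\mathbf{H}_{it}, \epsilon_{it})$ with $\epsilon_{it} \indep \mathbf{Z}_i$ (a consequence of CMM($p$)), we obtain $W_t \indep \mathbf{Z} \mid \mathbf{H}_t$, hence $p(W_t = \omega \mid \mathbf{H}_t = \mathbf{h}_t, \mathbf{Z} = \mathbf{z}) = p(W_t = \omega \mid \mathbf{H}_t = \mathbf{h}_t)$, and the right-hand side is strictly positive by Assumption \ref{assp: overlap}.

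With the three prerequisites in hand, the identification computation would be routine and parallels the proof of Proposition \ref{prop: ACATE_identif_U}:
\begin{align*}
m_t^{\omega}(\mathbf{h}_t, \mathbf{z})
 &= \mathbb{E}_{Y_t(\omega) \mid \mathbf{H}_t, \mathbf{Z}}\bigl(Y_t(\omega) \mid \mathbf{H}_t = \mathbf{h}_t, \mathbf{Z} = \mathbf{z}\bigr) \\
 &= \mathbb{E}_{Y_t(\omega) \mid \mathbf{H}_t, \mathbf{Z}, W_t}\bigl(Y_t(\omega) \mid \mathbf{H}_t = \mathbf{h}_t, \mathbf{Z} = \mathbf{z}, W_t = \omega\bigr) \\
 &= \mathbb{E}_{Y_t \mid \mathbf{H}_t, \mathbf{Z}, W_t}\bigl(Y_t \mid \mathbf{H}_t = \mathbf{h}_t, \mathbf{Z} = \mathbf{z}, W_t = \omega\bigr),
\end{align*}
where the second equality uses augmented ignorability and the third uses consistency. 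Subtracting the two treatment arms yields the claimed formula, and both conditional expectations are well-defined thanks to overlap and the regularity of $\mathcal{Y}$ (Assumption \ref{assp: regularity_y_space}).

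The only non-cosmetic step I expect to require care is the overlap verification. Since $\mathbf{Z}$ is latent and the past responses in $\mathbf{H}_t$ are descendants of both $\mathbf{Z}$ and of past treatments, collider-style reasoning might a priori create a spurious conditional dependence between $W_t$ and $\mathbf{Z}$ given $\mathbf{H}_t$, which would threaten positivity in the augmented conditioning. The CMM($p$) factorization rules this out structurally: $\mathbf{Z}$ is drawn from its marginal $p(\mathbf{z})$ and enters the joint only through the response likelihoods, so the treatment mechanism remains a measurable function of history and an exogenous noise independent of $\mathbf{Z}$, as the proof of Theorem \ref{thm: seq_ign_augmented} makes explicit. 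Once this is spelled out, the rest of the corollary is a direct transcription of the argument for the observed-$\mathbf{U}$ case.
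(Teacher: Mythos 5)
Your proposal is correct and follows essentially the same route as the paper: invoke Theorem \ref{thm: seq_ign_augmented} for augmented ignorability and then repeat the identification computation of Proposition \ref{prop: ACATE_identif_U} with $\mathbf{Z}$ in place of $\mathbf{U}$. Your explicit verification of overlap under the augmented conditioning (via the randomization representation $W_{it}=k_t(\mathbf{H}_{it},\epsilon_{it})$ with $\epsilon_{it}\indep\mathbf{Z}_i$) is a detail the paper leaves implicit, and it is a worthwhile addition rather than a deviation.
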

As a result, Corollary \ref{corollary:acate_identif} addresses the identifiability problem and its conditions as raised in Section \ref{sect:problem_def}: A CMM of arbitrary order over the conditional distribution of responses, along with a mild regularity condition on the response domain as stated in Theorem \ref{thm:seq_ign_augmented}, constitutes a sufficient condition to ensure the identifiability of the ACATE. To further emphasize the relevance of \(\mathbf{Z} \sim CMM(p)\), we show that any two substitutes satisfying \(CMM(p)\) must necessarily be related through a measurable map given the entire process history \(\mathbf{H}_T\):
\begin{theorem}
\label{thm:valid_Z} 
Let \(\mathbf{Z}\) be a latent variable such that \(\mathbf{Z} \sim CMM(p)\), and assume the domain \(\mathcal{Y}\) is a Borel subset of a compact interval. Then, any static adjustment variable that influences the entire series of responses in the panel must be measurable with respect to \((\mathbf{Z}, \mathbf{H}_T)\).
\end{theorem}

\begin{figure}
    \centering
    \includegraphics[width=0.5\textwidth, height=0.2\textwidth]{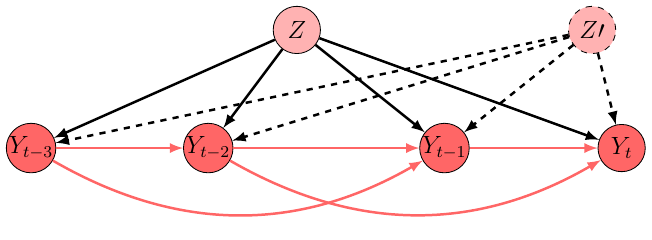}
    \caption{A simplified causal graph for the sketch of the proof for Theorem \ref{thm:valid_Z}. We do not represent $W_{\leq t}, \mathbf{X}_{\leq t}$ for simplicity.}
    \label{fig:cg_proof_meas}
\end{figure}

\begin{intuitionbox}
\textbf{Intuition behind CMM(p)} We provide an intuition behind the CMM(p) assumption by sketching a proof of Theorem \ref{thm:valid_Z} using d-separation properties, assuming the data generation process follows the graph depicted in Figure \ref{fig:CG_compressed}. Let $\mathbf{Z'}$ be an unobserved risk variable independent of $\mathbf{Z}$, as illustrated in Figure \ref{fig:cg_proof_meas}. Given that $\mathbf{Z} \sim CMM(p)$ and for $m > p$, $Y_{t}$ and $Y_{t-m}$ are d-separated given $\{Y_{t-1:t-m+1}, \mathbf{Z}, W_{\leq t}, \mathbf{X}_{\leq t}\}$ implying that $Y_{t} \indep Y_{t-m} \mid Y_{t-1:t-m+1}, \mathbf{Z}, W_{\leq t}, \mathbf{X}_{\leq t}$. Now, suppose $\mathbf{Z'}$ influences the entire series of responses similarly to $\mathbf{Z}$. In this case, $\mathbf{Z'}$ acts as a common parent for both $Y_{t}$ and $Y_{t-m}$, creating a path $Y_{t-m} \leftarrow \mathbf{Z'} \rightarrow Y_{t}$ that cannot be blocked by conditioning on $\{Y_{t-1:t-m+1}, \mathbf{Z}, W_{\leq t}, \mathbf{X}_{\leq t}\}$. Consequently, $Y_{t} \notindep Y_{t-m} \mid Y_{t-1:t-m+1}, \mathbf{Z}, W_{\leq t}, \mathbf{X}_{\leq t}$, which contradicts the implications of the conditional Markov assumption.
\end{intuitionbox}

\subsection{Definition of the Probabilistic Model}
\label{subsect:def_cdvae_model}
We define in the section the probabilistic model to learn a substitute \(\mathbf{Z}\) building on the DVAE approach. We break down the approach into three steps, where we first deal with selection bias and then incorporate the inductive bias of population structure, defining a GMM over the substitutes and finally define the suitable Evidence Lower Bound (ELBO) to be maximized.

\paragraph{Step 1: Handling Selection Bias} In a purely factual setting, modeling data with a DVAE involves maximizing:
\vspace{-5pt} 
\[
\log p_{\theta}(y_{\leq T} \mid \mathbf{x}_{\leq T}, \omega_{\leq T}) = \sum_{t=1}^{T} \log p_{\theta}(y_{t} \mid y_{<t}, \mathbf{x}_{\leq t}, \omega_{\leq t}) = \sum_{t=1}^{T} \log p_{\theta}(y_{t} \mid \mathbf{h}_{t}, \omega_{t})
\]
based on the observed data distribution. However, this model cannot perform counterfactual regression—switching treatment assignment during inference to estimate counterfactual responses—nor does it allow for causal inference. This limitation arises in observational studies where treatment assignment is typically non-random and depends on observed covariates, resulting in a selection bias. We adopt the importance sampling strategy to reduce the selection bias by reweighing the likelihood of the factual model \citep{shimodaira2000WeightedLL}. The key idea is to assign weights $\{\alpha(\mathbf{h}_{t}, \omega_t)\}_{t=1}^{T}$ to units in the population such that the resulting distribution in a given treatment regime matches that of the entire population. Therefore, we seek to maximize the likelihood \(L\) defined as:
\[
L \coloneqq \sum_{t=1}^{T} \mathbb{E}_{\mathbf{H}_{t}, W_t}\left[\alpha(\mathbf{H}_t, W_t) \mathbb{E}_{Y_t \mid \mathbf{H}_{t}, W_t}\log p_{\theta}(y_{t} \mid \mathbf{H}_t, W_t)\right].
\]
A more convenient formulation expresses \(L\) as an expectation over all repeated cross-sectional data of a population:
\begin{equation}
    \label{eq:weighted_ll}
    L =  \sum_{t=1}^{T} \mathbb{E}_{\mathcal{D}_T}\left[\alpha(\mathbf{H}_t, W_t) \log p_{\theta}(Y_{t} \mid \mathbf{H}_t, W_t)\right],
\end{equation}
with \(\mathcal{D}_T = \{Y_{\leq T}, \mathbf{X}_{\leq T}, W_{\leq T} \}\). The formulation of a \emph{conditional} DVAE model specifies the conditional generative model \( p_{\theta}(y_{\leq T}, \mathbf{z} \mid \mathbf{x}_{\leq T}, \omega_{\leq T}) \), which factorizes as:
\( p_{\theta}(y_{\leq T}, \mathbf{z} \mid \mathbf{x}_{\leq T}, \omega_{\leq T}) = \prod_{t=1}^{T} \left[ p_{\theta}(y_{t} \mid y_{<t}, \mathbf{x}_{\leq t}, \omega_{\leq t}, \mathbf{z}) \right] p(\mathbf{z}).\)

\paragraph{Step 2: Incorporating Population Structure} To account for an eventual population structure induced by unobserved adjustment variables, we place a Gaussian mixture prior over the latent variable \(\mathbf{Z}\), with \(K\) components:
\begin{equation} 
\label{eq:prior_Z_gmm}
p(\mathbf{z}) = \sum_{c=1}^{K} \pi_{c} \mathcal{N}(\mathbf{z}|\mu_c, \Sigma_c).
\end{equation}
This prior corresponds to first generating a cluster index \(C \sim \mathrm{Cat}(\mathbf{\pi})\), followed by generating \(\mathbf{Z} \mid C \sim \mathcal{N}(\mu_C, \Sigma_C)\). Here, \(\mathbf{\pi}\) represents the probability distribution over cluster assignments, verifying \(\sum_{c=1}^K \pi_{c} = 1\), with \(\mathrm{Cat}(\mathbf{\pi})\) denoting the categorical distribution parameterized by \(\mathbf{\pi}\). Each cluster \(c\) is associated with a learnable Gaussian distribution of mean \(\mu_c\) and covariance \(\Sigma_c\). The complete generative model becomes:
\begin{equation}
\label{eq:gen_model_complete}
p_{\theta}(y_{\leq T}, \mathbf{z}, c \mid \mathbf{x}_{\leq T}, \omega_{\leq T}) = \prod_{t=1}^{T} \left[ p_{\theta}(y_{t} \mid y_{<t}, \mathbf{x}_{\leq t}, \omega_{\leq t}, \mathbf{z}) \right] p(\mathbf{z} \mid c) p(c).
\end{equation}

We approximate the true posterior over \((\mathbf{z}, c)\) using a factorized variational distribution  
\begin{equation}
\label{eq:approx_posterior_facto}
q_{\phi}(\mathbf{z}, c \mid \mathcal{D}_T) = q_{\phi_z}(\mathbf{z} \mid \mathcal{D}_T) q_{\phi_c}(c \mid \mathcal{D}_T),
\end{equation}
where \(\phi = \phi_z \cup \phi_c\) collects the variational parameters. The following theorem characterizes the corresponding variational bound.
\begin{theorem}[Weighted ELBO Decomposition]
\label{thm:elbo_def}
Under the generative model defined in Eq. (\ref{eq:gen_model_complete}) and the approximate posterior factorization (Eq. (\ref{eq:approx_posterior_facto})), the weighted likelihood in Eq. (\ref{eq:weighted_ll}) can be decomposed as:
\begin{equation}
L = \mathbb{E}_{\mathcal{D}_T} \left[ \mathrm{ELBO_0}(\mathcal{D}_T; \theta, \phi) \right] + \mathbb{E}_{\mathcal{D}_T}\left[\Delta_0(\mathcal{D}_T; \theta, \phi)\right],
\end{equation}
where the individual gap term \(\Delta_0\) is given by:
\begin{equation}
\label{eq:indiv_gap}
    \Delta_0(\mathcal{D}_T; \theta, \phi) \coloneqq \sum_{t = 1}^{T} \alpha(\mathbf{h}_t, \omega_t) D_{KL}(q_{\phi}(\mathbf{z}, c \mid \mathcal{D}_T) \parallel p_{\theta}(\mathbf{z}, c \mid \mathcal{D}_T)),
\end{equation}
and the individual ELBO termed \(\mathrm{ELBO_0}\) is expressed as:
\begin{align}
\mathrm{ELBO_0}(\mathcal{D}_T; \theta, \phi) &=  \sum_{t = 1}^{T} \mathbb{E}_{\mathbf{Z} \sim q_{\phi_z}(\cdot \mid \mathcal{D}_T)} \left[ \alpha(\mathbf{h}_t, \omega_t) \log p_{\theta}(y_{t} \mid \mathbf{h}_t, \omega_t, \mathbf{Z}) \right] \label{eq:elbo_orig}\\
&- \left( \sum_{t = 1}^{T} \alpha(\mathbf{h}_t, \omega_t) \right) \left\{ D_{KL}(q_{\phi_z}(\mathbf{z} \mid \mathcal{D}_T) \parallel p(\mathbf{z}))  + \mathbb{E}_{\mathbf{Z} \sim q_{\phi_z}(\cdot \mid \mathcal{D}_T)} D_{KL}(q_{\phi_c}(c \mid \mathcal{D}_T) \parallel p(c \mid \mathbf{Z}) ) \right\}. \notag 
\end{align}
\end{theorem}

The individual gap \(\Delta_0\) is a positive term that quantifies the inaccuracy of the variational approximation of \( L \) using \(\mathrm{ELBO_0}\) (Eq. (\ref{eq:elbo_orig})). The ELBO term, $\mathrm{ELBO}_0$, comprises three components:  \begin{inparaenum}[(1)]
\item A weighted sum of the conditional log-likelihoods of the responses (reconstruction term).  
\item A KL divergence between the approximate posterior of the continuous latent variables \( q_{\phi_z}(\cdot \mid \mathcal{D}_T) \), which serve as substitutes for adjustment variables, and the unconditional prior \(p(\mathbf{z})\).  
\item A KL divergence between the approximate posterior of the discrete latents \( q_{\phi_c}(\cdot \mid \mathcal{D}_T) \) and the true conditional posterior \(p(\mathbf{C} \mid \mathbf{Z} )\), averaged over \( q_{\phi_z}(\cdot \mid \mathcal{D}_T) \).  
\end{inparaenum} 
The decoupling of these two approximate posteriors arises from the assumed factorization over the joint approximate posterior as in Eq. (\ref{eq:approx_posterior_facto}).

\paragraph{Step 3: Addressing the Discrete Latent Variable \(c\)}  
The approximate posterior \(q_{\phi}(\mathbf{z}, c \mid \mathcal{D}_T)\) involves both continuous and discrete latent variables, \(\mathbf{z}\) and \(c\), respectively. Training deep generative models with discrete latent variables presents challenges due to difficulties in reparameterization and handling high cardinality, even when using classical approaches like Gumbel-Softmax \citep{jang2016categorical, tucker2017rebar, huijben2022review}. To avoid designing an additional inference network for \(q_{\phi_c}(c \mid \mathcal{D}_T)\) and the associated computational cost, we follow \citet{Vade, falck2021multi} and define a \emph{Bayes-optimal posterior} for the discrete latent variable. Specifically, we leverage the decomposition of Eq. (\ref{eq:approx_posterior_facto}) and choose \(q_{\phi_c}(c \mid \mathcal{D}_T)\) in such a way that the third term in the ELBO of Eq. (\ref{eq:elbo_orig}) is minimized by definition:
\begin{equation}
\label{eq:opt_problem_VaDE}
\min_{q_{\phi_c}(\cdot \mid \mathcal{D}_T)} \mathbb{E}_{\mathbf{Z} \sim q_{\phi_z}(\cdot \mid \mathcal{D}_T)} D_{KL}(q_{\phi_c}(c \mid \mathcal{D}_T) \parallel p(c \mid \mathbf{Z})).
\end{equation}
The minimal value of this optimization problem is
\begin{equation}
    \min_{q_{\phi_c}(\cdot \mid \mathcal{D}_T)} \mathbb{E}_{\mathbf{Z} \sim q_{\phi_z}(\cdot \mid \mathcal{D}_T)} D_{KL}(q_{\phi_c}(\cdot \mid \mathcal{D}_T) \parallel p(\cdot \mid \mathbf{Z})) = -\log \mathrm{Const}(q_{\phi_z}(\cdot \mid \mathcal{D}_T)), 
\end{equation}
where \(\mathrm{Const}(q_{\phi_z}(\cdot \mid \mathcal{D}_T))\) is a constant that depends only on the approximate posterior over the continuous latents: 
\[
\mathrm{Const}(q_{\phi_z}(\cdot \mid \mathcal{D}_T)) = \sum_{c=1}^K \exp(\mathbb{E}_{\mathbf{Z} \sim q_{\phi_z}(\cdot \mid \mathcal{D}_T)} \log p(c \mid \mathbf{Z})).
\]
The minimizer is given by:
\begin{equation}
\label{eq:bayes_opt_posterior}
    \mathbf{\pi}(c \mid q_{\phi_z}(\cdot \mid \mathcal{D}_T)) = \frac{\exp(\mathbb{E}_{\mathbf{Z} \sim q_{\phi_z}(\cdot \mid \mathcal{D}_T)} \log p(c \mid \mathbf{Z}))}{Cst(q_{\phi_z}(\cdot \mid \mathcal{D}_T))}.
\end{equation}
Moreover, the minimal value is:
\begin{equation}
\label{eq:min_VaDE_trick}
    \min_{q_{\phi_c}(\cdot \mid \mathcal{D}_T)} \mathbb{E}_{\mathbf{Z} \sim q_{\phi_z}(\cdot \mid \mathcal{D}_T)} D_{KL}(q_{\phi_c}(\cdot \mid \mathcal{D}_T) \parallel p(c \mid \mathbf{Z})) = -\log Z(q_{\phi_z}(\cdot \mid \mathcal{D}_T)).
\end{equation}
Thus, we update the original \(\mathrm{ELBO}_0\) in Eq. (\ref{eq:elbo_orig}) to obtain a \emph{modified} ELBO by replacing \(q_{\phi_c}(\cdot \mid \mathcal{D}_T)\) with the minimizer \(\mathbf{\pi}(c \mid q_{\phi_z}(\cdot \mid \mathcal{D}_T))\) and plugging in the minimal value from Eq. (\ref{eq:min_VaDE_trick}):
\begin{equation}
\label{eq:welbo_def}
\begin{split}
\mathrm{ELBO}(\mathcal{D}_T; \theta, \phi) &=  \sum_{t = 1}^{T} \mathbb{E}_{\mathbf{Z} \sim q_{\phi_z}(\cdot \mid \mathcal{D}_T)} 
\left[ \alpha(\mathbf{h}_t, \omega_t) \log p_{\theta}(y_{t} \mid \mathbf{H}_t, W_t, \mathbf{Z}) \right] \\
&- \left[\sum_{t = 1}^{T}\alpha(\mathbf{h}_t, \omega_t)\right] \left\{D_{KL}(q_{\phi_z}(\cdot \mid \mathcal{D}_T) \parallel p(\mathbf{z})) - \log Z(q_{\phi_z}(\cdot \mid \mathcal{D}_T))\right\}.
\end{split}
\end{equation}

As a result, the modified ELBO is computationally more efficient than the original one, as it bypasses the need for applying a reparameterization trick to discrete latents and eliminates the necessity of training over specific parameters \(\phi_c\), all while remaining a valid lower bound for the weighted likelihood \(L\). This validity holds since Theorem \ref{thm:elbo_def} applies to any arbitrary approximate posterior \(q_{\phi_c}(\cdot \mid \mathcal{D}_T)\), and in particular, to \(\mathbf{\pi}(c \mid q_{\phi_z}(\cdot \mid \mathcal{D}_T))\), which is the minimizer of Eq. (\ref{eq:opt_problem_VaDE}).  We henceforth use \(\phi\) to denote \(\phi_z\).

\subsection{CDVAE in the Near-Deterministic Regime}
\label{sect:cdvae_near_det}
In this section, we explore the behavior of Causal DVAE in the \emph{near-deterministic regime}, where the variance of the generative model approaches zero. We examine how this regime affects both inference and causal identifiability, demonstrating that it leads to a non-diffuse posterior suitable for treatment effect estimation. 

\paragraph{Model Setup}  For the remainder of our discussion, we define the DVAE model parameters as $\mathcal{M}_{vae} = \{\theta, \phi\}$ and assume the approximate posterior to be Gaussian \(q_{\phi}(\mathbf{z} \mid \mathcal{D}_T) \sim \mathcal{N}(\mathbf{z}|\mu_{\mathbf{z}}, \Sigma_{\mathbf{z}})\), where the mean is defined as $\mu_{\mathbf{z}}= f_{\mu_{\mathbf{z}}}(\mathcal{D}_T, \phi)$, and the covariance matrix is given by $\Sigma_{\mathbf{z}} = S_{\mathbf{z}}S_{\mathbf{z}}^\top$, with $S_{\mathbf{z}} = f_{S_{\mathbf{z}}}(\mathcal{D}_T, \phi)$.  Moreover, we assume the conditional generative model to be Gaussian such that:
\begin{equation}
\label{eq:proba_yt_gaussian}
p_{\theta}(y_{t} \mid y_{<t}, \mathbf{x}_{\leq t}, \omega_{\leq t}, \mathbf{z}) = \mathcal{N}(f_t(\mathbf{h}_{t}, \omega_t, \mathbf{z}; \theta), \sigma^2),
\end{equation}
with $\sigma > 0$, a spatially and temporally uniform scale parameter.

\paragraph{Motivation} In probabilistic modeling, our inference model for the substitute adjustment variable is \emph{stochastic}. However, to calculate an individualized ACATE for each individual $i$, that is
$$
\tau_{t}(\mathbf{h}_{it}, \mathbf{z}_i) = \mathbb{E}(Y_{t}| \mathbf{H}_{it} = \mathbf{h}_{it}, \mathbf{Z}= \mathbf{z}_i, W_{t} = 1) - \mathbb{E}(Y_{t}| \mathbf{H}_{t} = \mathbf{h}_{it}, \mathbf{Z}= \mathbf{z}_i, W_{t} = 0), 
$$
we need to choose a unique instance of the latent variable representation. Otherwise, infinitely many ACATEs for the same individual $i$ could be generated by sampling repeatedly from \(q_{\phi_z}(\cdot \mid \mathcal{D}_{iT})\). A natural choice for this deterministic representation is the \emph{mean} of the posterior. Yet, because we sample from the approximate posterior during training rather than using the mean, it is essential to prevent the posterior from becoming \emph{overly diffuse}—ensuring that the mean remains a valid representation during causal inference. Previous studies \citep{dai2018diagnosingvae, takida2022preventingoversmoothing} show that operating in the \emph{near-deterministic regime} of the VAE decoder—where $\sigma \to 0^+$—leads the encoder’s covariance matrix toward zero, resulting in near-deterministic behavior. We, therefore, explore the near-deterministic regime in our Causal DVAE by controlling the decoder’s variance to induce a non-diffuse, stable posterior distribution suitable for causal inference in the latent space.
 
\paragraph{Side Benefit: Preventing Posterior Collapse} A well-known challenge in training VAEs is \emph{posterior collapse}, where the approximate posterior converges to the uninformative prior, making the latent space irrelevant \citep{bowman2015generating, sonderby2016ladder, higgins2017betavae, dai2018diagnosingvae, fu2019cyclical, lucas2019dontblameElbo, wang2021PosteriorNonIdentif, takida2022preventingoversmoothing}. A notable advantage of the near-deterministic regime is its ability to \emph{avoid posterior collapse} within the latent space, a benefit demonstrated in \emph{static VAEs} \citep{lucas2019dontblameElbo, takida2022preventingoversmoothing}. Specifically, \cite{takida2022preventingoversmoothing} showed that the decoder’s output variance and covariance could influence the latent space by causing over-smoothing by affecting the gradient regularization strength, which in turn leads to posterior collapse. By controlling variance in the near-deterministic regime, we mitigate this collapse and thereby preserve meaningful latent representations.

We advance the study of \emph{dynamic} VAE models by investigating their behavior in the near-deterministic regime within the context of causal inference. This work is the first to explore dynamic VAEs in this regime for causal applications, addressing a gap in the literature, as such behavior has yet to be thoroughly examined even in non-causal, factual settings. For the remainder, and unless otherwise stated, we omit dependence on the individual $i$ for notational ease.

We prove that in our framework, replacing the original lower bound $\mathrm{ELBO_0}(\mathcal{D}_T; \theta, \phi)$ with the modified $\mathrm{ELBO}(\mathcal{D}_T; \theta, \phi)$ still ensures the existence of a parameterization under which the approximate likelihood of responses converges to the true likelihood. Specifically, Theorem \ref{thm:vaes_to_truell} demonstrates that, for a sequence of inference and generative models parameterized by \(\sigma\), the gap between the modified and true importance-weighted likelihoods asymptotically approaches zero.  

\begin{theorem}[Asymptotic Likelihood Recovery]
\label{thm:vaes_to_truell}
Suppose $d_{\mathbf{z}} \leq T$. There exists a family of autoencoders $\{\phi_{\sigma}, \theta_{\sigma}\}_{\sigma > 0}$ such that 
\[
\lim_{\sigma \to 0^+} p_{\theta_{\sigma}}(y_{\leq T} \mid \mathbf{x}_{\leq T}, \omega_{\leq T}) = p(y_{\leq T} \mid \mathbf{x}_{\leq T}, \omega_{\leq T}),
\]
and when operating under the Bayes-optimal posterior over the discrete latents, i.e.,  
\[
q_{\phi}(c \mid \mathcal{D}_T) = \pi(c \mid q_{\phi}(\cdot \mid \mathcal{D}_T)),
\]  
the gap \(\Delta(\mathcal{D}_T; \theta_{\sigma}, \phi_{\sigma})\) between the modified ELBO (Eq. (\ref{eq:welbo_def})) and the true weighted likelihood \(L\) (Eq. (\ref{eq:weighted_ll})) satisfies  
\[
\Delta(\mathcal{D}_T; \theta_{\sigma}, \phi_{\sigma}) = \mathrm{ELBO}(\mathcal{D}_T; \theta_{\sigma}, \phi_{\sigma}) - L(\mathcal{D}_T; \theta_{\sigma})
\]
and converges to zero:  
\[
\lim_{\sigma \to 0^+}  \Delta(\mathcal{D}_T; \theta_{\sigma}, \phi_{\sigma}) = 0.
\]
\end{theorem}

This result generalizes prior findings for static VAEs \citep{dai2018diagnosingvae} to the more complex dynamic setting with GMM priors and weighted objectives. It also addresses an oversight in \cite{dai2018diagnosingvae} regarding the proof of gap convergence. 

More importantly, we establish a critical property in our setting within the near-deterministic regime: the causal effect remains asymptotically consistent whether one uses an arbitrary realization of the latent variable \(\mathbf{z}\) or the posterior mean \(\mu_{\mathbf{z}}\). Theorem \ref{thm:sigma_to_0} formalizes this insight as follows: 
\begin{theorem}[Realization-Invariant Causal Consistency]
\label{thm:sigma_to_0}
For a fixed $\sigma > 0$, let a parametrization of Causal DVAE \(\mathcal{M}_{vae}^*(\sigma) = \{\theta_{\sigma}^*, \phi_{\sigma}^*\}\) be an optimal solution to the ELBO defined in Eq. (\ref{eq:welbo_def}). Then, \underline{for any} substitute realization \(\mathbf{z} \in \mathbb{R}^{d_z}\), the conditional expected outcome satisfies the following consistency in the near-deterministic regime:
\begin{equation}
    \lim_{\sigma \to 0^+} \mathbb{E}_{\mathcal{M}_{vae}^*(\sigma)}\left[ Y_t \mid \mathbf{H}_{t}, W_t = \omega, \mathbf{Z}= \mathbf{z} \right] 
    = \lim_{\sigma \to 0^+} \mathbb{E}_{\mathcal{M}_{vae}^*(\sigma)}\left[ Y_t \mid \mathbf{H}_{t}, W_t = \omega, \mathbf{Z}= \mu_{\mathbf{z}} \right].
\end{equation}
As a result of Corollary \ref{corollary:acate_identif}, the Augmented CATE satisfies the same consistency in the near-deterministic regime:
\begin{equation}
    \lim_{\sigma \to 0^+} \tau_{\mathcal{M}_{vae}^*(\sigma)}(\mathbf{H}_{t}, \mathbf{z}) 
    = \lim_{\sigma \to 0^+} \tau_{\mathcal{M}_{vae}^*(\sigma)}(\mathbf{H}_{t}, \mu_{\mathbf{z}}).
\end{equation}
\end{theorem}
Theorem \ref{thm:sigma_to_0} demonstrates that as the variance parameter \(\sigma\) of the generative model approaches zero, both the expected outcome and the estimated causal effect remain consistent, regardless of whether one relies on an arbitrary realization of the latent variable \(\mathbf{z}\) or its posterior mean \(\mu_{\mathbf{z}}\). Such consistency ensures that, in the near-deterministic regime, the causal effect is robust to the specific realization of the latent variable. 

\subsection{Generalization Bound over Treatment Effect Estimation and Derivation of Model Loss}
\label{sect:gen_bound_tot_loss}

We establish in this section a theoretical generalization bound for the error in estimating ACATE, which serves as a foundation for deriving the training loss of CDVAE. We begin by defining a suitable representation function to address the high dimensionality of the history \(\mathbf{H}_t\). Next, we introduce a weighted risk minimization framework that ensures covariate balance between treatment groups. We then derive a generalization bound for treatment effect estimation, relating the error in estimating ACATE to the model’s weighted risk. Finally, we construct the CDVAE training objective by integrating these theoretical insights into a structured loss function.

\paragraph{Step 1: Representation Learning} To address the high dimensionality of the history \(\mathbf{H}_t\), we define a representation function \(\Phi\) that reduces dimensionality while preserving essential information in a latent space. Assuming \(\Phi\) is invertible, we ensure sequential ignorability holds in the latent space if and only if it holds in the original space. In practice, the outcome and treatment models share the same representation \(\Phi(\mathbf{H}_t)\), capturing information predictive of both treatment and response. While invertibility is theoretically assumed for identifiability, it is not strictly enforced in implementation. Instead, we prioritize ensuring \(\Phi(\mathbf{H}_t)\) remains predictive across treatment regimes, applying regularization techniques to balance representations as in \cite{shalit2017estimating, lim2018RMSM, bica2020estimatingCRN, assaad2021counterfactual, Melnychuk2022CausalTF, johansson2022generalization}.

\begin{remark}
\label{remark:from_ft_to_f}
Following Eq. (\ref{eq:proba_yt_gaussian}), the generative model for the responses is defined as \(Y_t = f_t(\mathbf{H}_t, W_t, \mathbf{Z}) + \epsilon_t \), where \( \epsilon_t \sim \mathcal{N}(0, \sigma^2) \) is the error term. A common modeling \citep{johansson2016learning, shalit2017estimating, shi2019Dragonnet, johansson2022generalization} approach for \( f_t \) is to assume  
\[
f_t(\mathbf{H}_t, W_t, \mathbf{Z}) \coloneqq W_t f_{t,1}(\mathbf{H}_t, \mathbf{Z}) + (1 - W_t) f_{t,0}(\mathbf{H}_t, \mathbf{Z}),
\]  
where \( f_{t,0} \) and \( f_{t,1} \) denote the mappings to the response under each treatment regime \( W_t = 0,1 \).  

The representation function \( \Phi \) over \( \{\mathbf{H}_t\}_{t=1}^{T} \) produces lower-dimensional vectors \( \{\mathbf{r}_t\}_{t=1}^{T} \), living in a space \( \mathcal{R} \subset \mathbb{R}^r \). Instead of defining \underline{time-dependent} generative functions \( f_t \), we define a single hypothesis function  
\[
f: \mathcal{R} \times \mathcal{W} \times \mathbb{R}^{d_{\mathbf{z}}} \rightarrow \mathcal{Y},
\]  
such that:  
\[
 Y_t = f(\Phi(\mathbf{H}_t), W_t, \mathbf{Z}) + \epsilon_t = W_t f_{1}(\Phi(\mathbf{H}_t), \mathbf{Z}) + (1-W_t) f_{0}(\Phi(\mathbf{H}_t), \mathbf{Z}) + \epsilon_t.
\]
\end{remark}

\paragraph{Step 2: Weighted Risk Minimization} To integrate causal inference into the DVAE framework, we employ weighted risk minimization techniques \citep{kallus2020deepmatch, johansson2022generalization}. First, we justify the choice of weights $\alpha(\mathbf{H}_t, W_t)$ used in the definition of the weighted ELBO in Section~\ref{subsect:def_cdvae_model}, Eq. (\ref{eq:welbo_def}). In general, these weights can be any arbitrary mapping $\alpha: \mathbf{H}_t \times \mathcal{W} \rightarrow \mathbb{R}^+$ such that for every \(\omega \in \mathcal{W}\), \(\mathbb{E}_{\mathbf{H}_t \mid W_t = \omega} \left[ \alpha(\mathbf{H}_t, \omega) \right] = 1\). The weights $\alpha(\mathbf{h}_t, \omega)$ induce, therefore, a weighted probability \emph{target distribution} $g$ over the population s.t $g(\mathbf{h}_t\mid W_t = \omega):= \alpha(\mathbf{h}_t, \omega)p(\mathbf{h}_t \mid W_t = \omega)$. Choosing an appropriate weighting strategy \(\alpha(\mathbf{h}_t, \omega)\) is essential for achieving covariate balance between treatment groups, specifically \(g(\mathbf{h}_t \mid W_t = 1) = g(\mathbf{h}_t \mid W_t = 0)\). We use \emph{overlap weights} \citep{li2018balancing, assaad2021counterfactual}, defined as:
\[
    \alpha(\mathbf{h}_t, \omega) \propto \frac{e(\Phi(\mathbf{h}_t)) \left( 1 - e(\Phi(\mathbf{h}_t)) \right)}{\omega \, e(\Phi(\mathbf{h}_t)) + (1 - \omega) \left( 1 - e(\Phi(\mathbf{h}_t)) \right)},
    \label{eq:weights_fam}
\]
where \(e(\Phi(\mathbf{h}_t)) = p(W_t = 1 \mid \Phi(\mathbf{h}_t))\) is the propensity score within the representation space. Overlap weights emphasize units with propensity scores near 0.5, concentrating on regions where treated and control groups have the most overlap, thereby enhancing comparability and improving covariate balance \citep{li2018balancing}. 

We now define the \emph{weighted population risk} for a given treatment regime \( W_t = \omega \), measuring the expected risk for a given \( (f, \Phi) \) at time \( t \) and over the weighted population distributed according to \( g \), as:
\[    
R_{t, g}^{\omega}(f, \Phi) \coloneqq \mathbb{E}_{\mathbf{H}_t \mid W_t} \left[ \alpha(\mathbf{H}_t, W_t) \, \ell_{f, \Phi}(\mathbf{H}_t, W_t) \mid W_t = \omega \right],
\]
where \( \ell_{f, \Phi}(\mathbf{h}_t, \omega) \) is the expected pointwise loss associated with the hypothesis \( f \):
\begin{equation}
\label{eq:expected_pt_loss}
\ell_{f, \Phi}(\mathbf{h}_t, \omega) \coloneqq \mathbb{E}_{\mathbf{Z} \sim q_{\phi}(\mathbf{Z} \mid \mathcal{D}_{\leq t-1})} \, \mathbb{E}_{Y_t(\omega) \mid \mathbf{H}_t, \mathbf{Z}} \left[ L\left( Y_t(\omega), f\left( \Phi(\mathbf{H}_t), \mathbf{Z}, W_t \right) \right) \mid  \mathbf{H}_t=\mathbf{h}_t, \mathbf{Z} \right].
\end{equation}

In the definition of the expected pointwise loss, we respect the temporal order by conditioning the approximate posterior of the continuous latents only on the longitudinal data up to time step \( t \). In line with our probabilistic modeling in Section \ref{subsect:def_cdvae_model} and the distributional assumptions in Section \ref{sect:cdvae_near_det}, we define the loss as the negative log-likelihood:
\[
L\left( Y_t(\omega), f\left( \Phi(\mathbf{H}_t), \mathbf{Z}, W_t \right) \right) = -\log \mathcal{N}\left(Y_t(\omega); f\left( \Phi(\mathbf{H}_t), \mathbf{Z}, W_t \right), \sigma^2\right).
\]

\paragraph{Step 3: Generalization Bound} Our aim during training is to minimize the factual quantities $\{R_{t, g}^{\omega}(f, \Phi)\}_{t=1}^{T}$ to reduce the error in estimating treatment effects. To formalize and theoretically justify this objective, we establish a generalization bound for the \emph{Precision in Estimation of Heterogeneous Effects (PEHE)} \citep{hill2011bayesian}, which accounts for our weighted population risks $\{R_{t, g}^{\omega}(f, \Phi)\}_{t=1}^{T}$. PEHE measures the mean squared error (MSE) between the true and estimated ACATE at time $t$ as \footnote{Note that $p_{\theta}(\mathbf{H}_t, \mathbf{Z}) = p(\mathbf{Z} \mid \mathcal{D}_{\leq t-1}) \, p(\mathbf{H}_t)$ because $\mathbf{X}_t$ is d-separated from $\mathbf{Z}$ given $\mathcal{D}_{\leq t-1} = [Y_{\leq t-1}, \mathbf{X}_{\leq t-1}, W_{\leq t-1}]$.} 
\[
\epsilon_{\mathrm{PEHE}_t} = \mathbb{E}_{\mathbf{H}_t} \, \mathbb{E}_{\mathbf{Z} \sim q_{\phi}(\mathbf{Z} \mid \mathcal{D}_{\leq t-1})} \left[ \left( \tau(\mathbf{H}_t, \mathbf{Z}) - \hat{\tau}_{f, \Phi}(\mathbf{H}_t, \mathbf{Z}) \right)^2 \right],
\]
where $\tau(\mathbf{H}_t, \mathbf{Z})$ is the true ACATE and $\hat{\tau}_{f, \Phi}(\mathbf{H}_t, \mathbf{Z})$ is the estimated ACATE. Similarly, we define the weighted PEHE relative to the target distribution \( g \):
\[
\epsilon_{\mathrm{PEHE}_{t, g}} =  \mathbb{E}_{\mathbf{H}_t \sim g}\mathbb{E}_{\mathbf{Z} \sim q_{\phi}(\mathbf{Z} \mid \mathcal{D}_{\leq t-1})} \left[ \left( \tau(\mathbf{H}_t, \mathbf{Z}) - \hat{\tau}_{f, \Phi}(\mathbf{H}_t, \mathbf{Z}) \right)^2 \right].
\]
We provide an upper bound for the weighted PEHE for CDVAE similar to \citet{assaad2021counterfactual}. This bound consists of three key components: the weighted factual prediction error, a term capturing the discrepancy between treatment and control distributions in the confounders representation space, and a term that accounts for the variance of the generative model over responses.
\begin{theorem}[Generalization Bound for Weighted PEHE]
\label{thm:pehe_wbound}
Let \( \mathcal{M}_{\text{VAE}}(\sigma) = \{\theta_{\sigma}, \phi_{\sigma}\} \) be the VAE model defined in Section~\ref{subsect:def_cdvae_model}. For a given class of functions \( G \), assume there exists a constant \( B_{\Phi} \) such that \( \frac{\ell_{f,\Phi}}{B_{\Phi}} \in G \). Assume the representation \( \Phi \) is invertible. Then, the error in estimating the treatment effect at time \( t \) for a weighted population is upper bounded by:
\begin{equation}
\epsilon_{\mathrm{PEHE}_{t, g}} \leq 2\sigma^2 \left\{ R^{\omega = 1}_{t, g}\left(f, \Phi \right) + R^{\omega = 0}_{t, g}\left(f, \Phi \right) + B_{\Phi} \, \mathrm{IPM}_{G} \left( g_{\Phi}(\cdot \mid W_t = 1), \, g_{\Phi}(\cdot \mid W_t = 0) \right) - \log(2\pi\sigma^2) \right\},
\label{eq:bound_pehe}
\end{equation}
where \( g_{\Phi}(r) \) is the distribution in the representation space \( \mathcal{R} \). The Integral Probability Metric (IPM) \citep{muller1997IPM, sriperumbudur2009integral} measures the dissimilarity between distributions.

Assuming strict overlap in treatment assignment, where \( \delta \in (0, 0.5) \) such that \( \delta < e(\mathbf{h}_t) < 1 - \delta \), then there exists constants \(A_{t,g}, B_{t,g}\)  such that the unweighted PEHE \( \epsilon_{\mathrm{PEHE}_{t}} \) is bounded:
\[
A_{t,g} \cdot \epsilon_{\mathrm{PEHE}_{t,g}}(\hat{\tau}) \leq \epsilon_{\mathrm{PEHE}_{t}}(\hat{\tau}) \leq B_{t,g} \cdot \epsilon_{\mathrm{PEHE}_{t,g}}(\hat{\tau}).
\label{eq:bound_pehe_strict_delta}
\]
\end{theorem}

The representation discrepancy in \ref{eq:bound_pehe} quantifies the imbalance between treatment groups, measured using IPMs like the Wasserstein distance or Maximum Mean Discrepancy \citep{gretton2012kernel}. The connection between \( \epsilon_{\mathrm{PEHE}_{t}} \) and \( \epsilon_{\mathrm{PEHE}_{t,g}} \) indicates that minimizing the weighted PEHE can also minimize \( \epsilon_{\mathrm{PEHE}_{t}} \), thereby enhancing the reliability of CATEs estimation across the original population.   

To justify the pertinence of our probabilistic model defined in Section \ref{subsect:def_cdvae_model}, we will show how maximizing the likelihood term in the ELBO of Eq. (\ref{eq:welbo_def}) implies minimization of the risks \( R^{\omega = 1}_{t, g}\left(f, \Phi \right) \) and \( R^{\omega = 0}_{t, g}\left(f, \Phi \right) \).

\begin{proposition}[ELBO-Risk Connection]
\label{prop:approx_factual_risk}
Assume \underline{stationarity} of the approximated posterior of \( \mathbf{Z} \) given sub-longitudinal data; that is, there exists \( t_0 \) such that for \( t \geq t_0 \), \( q_{\phi}(\mathbf{Z} \mid \mathcal{D}_{t}) \approx q_{\phi}(\mathbf{Z} \mid \mathcal{D}_T) \). Given a finite sample \( \mathcal{B} = \{ \mathcal{D}_{iT} = \{ W_{it}, Y_{it}, \mathbf{X}_{it} \}_{t=1}^{T}; i=1, \dots, |\mathcal{B}|\} \), we have the following approximation:
\begin{equation}
\label{eq:approx_recons_welbo}
\sum_{t = t_0}^{T} \mathbb{E}_{\mathbf{Z} \sim q_{\phi}(\cdot \mid \mathcal{D}_T)} 
\left[ \alpha(\mathbf{H}_t, W_t) \log p_{\theta}(Y_{t} \mid \mathbf{H}_t, W_t, \mathbf{Z}) \right] \approx - \frac{1}{|\mathcal{B}|} \sum_{t = t_0}^{T} \left\{ n_{1}^{(t)} R^{\omega = 1}_{t, g}\left(f, \Phi \right) + n_{0}^{(t)} R^{\omega = 0}_{t, g}\left(f, \Phi \right) \right\},  
\end{equation}
with
\[
R^{\omega}_{t, g}\left(f, \Phi \right) \approx -\frac{1}{n_{\omega}^{(t)}} \sum_{i \in \mathcal{B}, \, W_{it} = \omega } \mathbb{E}_{\mathbf{Z} \sim q_{\phi}(\mathbf{Z} \mid \mathcal{D}_{iT})} \left[ 
\alpha(\mathbf{H}_{it}, \omega) \log p_{\theta}(Y_{it} \mid \mathbf{H}_{it}, \omega, \mathbf{Z}) \right],
\]
where \( n_{\omega}^{(t)} \) represents the number of instances in the batch \( \mathcal{B} \) for which \( W_{it} = \omega \).
\end{proposition}
As a result, maximizing the ELBO in Eq. (\ref{eq:welbo_def}) not only maximizes the weighted conditional likelihood over responses (the "reconstruction term") but also, as shown in Eq. (\ref{eq:approx_recons_welbo}), minimizes the weighted population risk. This, in turn, decreases the generalization bound in Theorem \ref{thm:pehe_wbound}, reducing the error in treatment effect estimation. To further reduce this error, we include the IPM term from Theorem \ref{thm:pehe_wbound}, regularizing the WELBO with the IPM term at each time step: 
\[
\mathcal{L}_{\mathrm{IPM}}(\theta_{\omega}, \Phi) \coloneqq \sum_{t=1}^T \mathrm{IPM}_{G} \left( g_{\theta_{\omega}, \Phi}(\cdot \mid W_t = 1), \, g_{\theta_{\omega}, \Phi}(\cdot \mid W_t = 0) \right).
\]
where \(\theta_{\omega}\) are the parameters of the treatment network. 

\begin{intuitionbox}
\paragraph{Intuition: Stationarity of the inferred latent variables} We further justify the core assumption made in Proposition \ref{prop:approx_factual_risk}. The representation learning for the adjustment variables is achieved by approximating the posterior \(p_{\theta}(\mathbf{z}|\mathcal{D}_T)\) with \(q_{\phi}(\mathbf{z} |\mathcal{D}_T)\). Here, the missing baseline covariates are inferred by analyzing \emph{all longitudinal data} \(\{y_{\leq T}, \mathbf{x}_{\leq T}, \omega_{\leq T}\}\). However, since these covariates are \emph{static} and \emph{pre-response} variables, we should, ideally, be able to infer the exact substitute from a shorter longitudinal data \(\{y_{\leq T'}, \mathbf{x}_{\leq T'}, \omega_{\leq T'}\}\) where \(T' < T\), or from any temporal slice \(\{y_{t_1:t_2}, \mathbf{x}_{t_1:t_2}, \omega_{t_1:t_2}\}\) with \(t_2 > t_1\). Thus, ensuring a form of stationarity in our posterior approximation is crucial. To understand the importance of this property, consider estimating the Average Treatment Effect (ATE) for panel data at time \(t\):
\[
\tau_t \coloneqq \mathbb{E}(Y_t(1)- Y_t(0)) = \mathbb{E}_{\mathbf{Z}, \mathbf{H_t}}\left[\mathbb{E}_{Y_t\mid \mathbf{Z}, \mathbf{H_t}, W_t }[Y_t\mid \mathbf{Z}, \mathbf{H_t}, W_t=1] - \mathbb{E}_{Y_t\mid \mathbf{Z}, \mathbf{H_t}, W_t }[Y_t\mid \mathbf{Z}, \mathbf{H_t}, W_t=0]\right].
\]
To compute \(\tau_t\), we need to marginalize the conditional response over the joint distribution of covariates and latent adjustment variables, \(p(\mathbf{h}_t, \mathbf{z}) = p(\mathbf{z} \mid \mathcal{D}_{\leq t}) p(\mathbf{h}_t)\). Because we model \(p_{\theta}(\mathbf{z}|\mathcal{D}_T)\), which depends on the \emph{entire} history, there is, generally, no guarantee that \( p(\mathbf{z} \mid \mathbf{h}_t ) \approx p_{\theta}(\mathbf{z}|\mathcal{D}_T) \).   
\end{intuitionbox}

As a consequence of the stationarity assumption in Proposition \ref{prop:approx_factual_risk}, we introduce a penalty term in the form of the Wasserstein distance between consecutive posterior distributions \(q_{\phi}(\mathbf{z}|\mathcal{D}_t)\) and \(q_{\phi}(\mathbf{z}|\mathcal{D}_{t-1})\), penalizing significant variations as the data history grows. To preserve the model’s capacity to capture time-varying dependencies, we begin regularization from \(t_0 \gg 0\), typically \(t_0 = \frac{T}{2}\):  
\[
\mathcal{L}_{\mathrm{DistM}}(\phi) = \sum_{t=t_0}^T \mathbb{E}_{\mathcal{D}_{t}} W(q_{\phi}(\mathbf{z} \mid \mathcal{D}_{t}), q_{\phi}(\mathbf{z} \mid \mathcal{D}_{t-1})).
\]
Assuming a Gaussian approximate posterior, the Wasserstein distance simplifies to  
\[
\mathcal{L}_{\mathrm{DistM}}(\phi) = \sum_{t=t_0}^T  \mathbb{E}_{\mathcal{D}_{t}}||\mu_{\mathbf{z}}(\mathcal{D}_{t}) - \mu_{\mathbf{z}}(\mathcal{D}_{t-1})||_2^2 + \mathbb{E}_{\mathcal{D}_{t}}||S_{\mathbf{z}}(\mathcal{D}_{t}) - S_{\mathbf{z}}(\mathcal{D}_{t-1})||^2_{F},
\]
where the penalty regularizes the first and second moments of the posterior distribution as the data history grows.  

\paragraph{Step 4: Total Loss Function and Training Strategy}  
The overall loss for the CDVAE model combines the $\mathrm{ELBO}$ from Eq. (\ref{eq:welbo_def}) with two additional terms: \(\mathcal{L}_{\mathrm{IPM}}\), which reduces covariate imbalance at each time step and addresses residual imbalance not corrected by weighting, and \(\mathcal{L}_{\mathrm{DistM}}\), which captures the global, static nature of adjustment variables. To push the Causal DVAE toward a near-deterministic regime, we treat the variance parameter \(\sigma\) as learnable and fit it using the following loss function:  

\begin{equation}
\label{eq:tot_loss}
\mathcal{L}_{\mathrm{tot}}(\theta, \theta_{\omega}, \phi, \Phi, \sigma) = -\mathrm{ELBO}(\theta, \theta_{\omega}, \phi, \Phi, \sigma) + \lambda_{\mathrm{IPM}} \mathcal{L}_{\mathrm{IPM}}(\theta_{\omega}, \Phi) + \lambda_{\mathrm{DistM}} \mathcal{L}_{\mathrm{DistM}}(\phi).
\end{equation}

Since the loss \(\mathcal{L}_{\mathrm{tot}}\) depends on the treatment parameters \(\theta_{\omega}\), as the weighted \(\mathrm{ELBO}\) is also a function of the propensity scores, we introduce an additional loss function \(\mathcal{L}_{W}\), a binary cross-entropy loss to predict treatment from confounders representation. To optimize CDVAE, we adopt an adversarial training strategy, simultaneously minimizing \(\mathcal{L}_{\mathrm{tot}}\) with respect to \(\theta, \phi, \Phi,\) and \(\sigma\), while minimizing \(\mathcal{L}_{W}\) with respect to \(\theta_{\omega}\) (Algorithm \ref{alg:cdvae_training}, Appendix \ref{appendix:Algorithmic_details}):  
\[
\begin{cases}
\min_{\theta, \phi, \Phi, \sigma} \mathcal{L}_{\mathrm{tot}}(\theta, \theta_{\omega}, \phi, \Phi, \sigma), \\
\min_{\theta_{\omega}} \mathcal{L}_{W}(\theta_{\omega}, \Phi).
\end{cases}
\]
The adversarial nature of the training arises because, in one optimization step, we learn the representation \(\Phi\) to predict the response while applying an IPM regularization to ensure balance between the weighted covariates. In the subsequent optimization step, for a fixed representation, we train the treatment classifier to predict the treatment. We use the Wasserstein distance as a specific case of IPM, with details on its computation provided in Algorithm \ref{alg:wasserstein_distance}, Appendix \ref{appendix:algos_desc}. A computational complexity analysis of CDVAE is also provided in Appendix \ref{appendix:complexity_analysis_cdvae}.

\paragraph{CDVAE: Model Specification} \label{sect:CDVAE_archi}
We use a three-head neural network architecture for CDVAE to learn a shared representation between outcome and treatment models inspired by \cite{shi2019Dragonnet}. This architecture consists of two prediction heads for potential outcomes and one for treatment (Appendix \ref{appendix:extended_archi_CDVAE}). We use a GRU \citep{cho2014GRU} to learn a representation \(\Phi(\mathbf{h}_{t+1})\) of the context history \(\mathbf{h}_{t+1}\). The input of the outcome model  \(f_{\theta_{y}}\) is the shared representation \(\Phi(\mathbf{h}_{t+1})\) and a sampled substitute \(\mathbf{z}\) from  \(q_{\phi}(\mathbf{z} \mid \mathcal{D}_T) \). The outcome model \(f_{\theta_{y}}\) is represented by two non-linear functions \(f_{\theta_{y}^1}\) and \(f_{\theta_{y}^0}\) for treatment assignments \(1\) and \(0\), respectively. The third head is a binary classifier \(f_{\theta_{\omega}}\) estimating the propensity score. The inference model (encoder) \(q_\phi\left(\mathbf{z} \mid y_{\leq T}, \mathbf{x}_{\leq T}, \omega_{\leq T}\right)\) is modeled by a GRU, producing a hidden state that represents the sequence. Two non-linear mappings, \(\mu_{\phi_2}\) and \(\Sigma_{\phi_3}\), learn the mean and diagonal covariance matrix in the latent space.

\section{Experiments}
\label{sect:experiments}

\paragraph{Baselines} In all our experiments, we compare CDVAE against the relevant baselines: RMSN, CRN, GNet, Causal Transformer, and Causal CPC. Each baseline is evaluated in three different settings:  
\begin{inparaenum}
    \item The "base approach," where models are trained using only static and time-varying confounders.  
    \item The "substitute approach," where models are augmented with substitutes for the unobserved adjustment variables, obtained from CDVAE and represented by the mean of the approximated posterior.  
    \item The "oracle approach," where models are trained with the true adjustment variables.  
\end{inparaenum}

\paragraph{Hyperparameter Selection} All models are fine-tuned using a grid search over hyperparameters, including architecture and optimizer settings. Model selection is based on the mean squared error (MSE) of factual outcomes on a validation set, which is also used as the criterion for early stopping. More details are in Appendix \ref{sect:hyperparams_details}.

\paragraph{Adapting Baselines to Our Causal Settings} \label{parag:adapat_baselines} All the considered baselines do not assume the observation of adjustment variables; however, they only consider time-varying and static confounders. It is not straightforward to incorporate adjustment variables into these models when observed. Naively augmenting their input with adjustment variables and treating them as static confounders often results in suboptimal performance or even worse outcomes compared to excluding the adjustment variables altogether. This is primarily due to two reasons: 
\begin{inparaenum}
\item Most baselines concatenate static confounders with time-varying ones at each time step. When high-dimensional adjustment variables are included, this dramatically increases the number of parameters, leading to overfitting. 
\item When comparing different versions of the same baseline, it is important to ensure comparable complexity represented by the number of learnable parameters (Table \ref{tab:models_complexities}).
\item Models such as CRN, Causal Transformer, and Causal CPC learn a balanced representation. Including adjustment variables naively as input may bias the learned representation towards these variables, as they are inherently balanced. 
\end{inparaenum}

\noindent
\begin{minipage}[t]{0.5\textwidth}
\vspace{-70pt} 
    To ensure fair experimentation, we adapt the baselines as follows:\\
    \begin{inparaenum}
        \item \textbf{CRN, Causal Transformer, and Causal CPC:} Instead of solely feeding the balanced representation to the outcome network, we augment it with static adjustment variables, similar to the CDVAE design, where the representation \(\Phi(\mathbf{H}_t)\) is augmented with substitutes \(\mathbf{Z}\).\\
        \item \textbf{GNet:} Adjustment variables are only used to augment the representation fed to the outcome model, not the representation used to reconstruct the current covariates \(\mathbf{X}_t\).\\
        \item \textbf{RMSN:} Stabilized weights are learned solely as a function of \(\mathbf{H}_t\), while the adjustment variables are concatenated with the representation of \(\mathbf{H}_t\).
    \end{inparaenum}
\end{minipage}%
\hfill
\begin{minipage}[t]{0.45\textwidth}
    \centering
    \resizebox{.80\textwidth}{!}{ 
\begin{tabular}{|p{4cm}|p{2cm}|p{2cm}|} \hline 
    Model & Params (k), synthetic data & Params (k), MIMIC-III data \\ \hline 
    CDVAE & 11.5 & 5.7 \\ \hline 
    Causal CPC (base) & 6.7 & 5.3 \\  
    Causal CPC (with substitute) & 7.1 & 5.7 \\ 
    Causal CPC (oracle) & 14.3 & 7.5 \\ \hline 
    CT (base) & 18.7 & 16.3 \\  
    CT (with substitute) & 18.9 & 16.7 \\ 
    CT (oracle) & 26 & 18.7 \\ \hline 
    G-Net (base) & 21.2 & 4.1 \\  
    G-Net (with substitute) & 21.5 & 4.3 \\  
    G-Net (oracle) & 28.5 & 5.7 \\ \hline 
    CRN (base) & 8.3 & 3.5 \\  
    CRN (with substitute) & 8.5 & 3.7 \\ 
    CRN (oracle) & 14.2 & 4.9 \\ \hline 
    RMSN (base) & 14 & 6.7 \\  
    RMSN (with substitute) & 15.3 & 6.9 \\  
    RMSN (oracle) & 20.9 & 8.0 \\ \hline
\end{tabular}%
}
    \captionof{table}{Trainable parameters in thousands (k) for baselines in all configurations for the synthetic and semi-synthetic MIMIC-III data.}
    \label{tab:models_complexities}
\end{minipage}

\subsection{Synthetic Data Sets}
\paragraph{Generation} We simulate longitudinal data with a time length of \(T=75\) using an autoregressive model for both confounders and treatment assignments. The confounders \(\mathbf{X}_t\) have a dimensionality \(d_x = 100\). The outcome model includes unobserved adjustment variables \(\mathbf{U}\), with dimensionality \(d_u = 100\), generated from a Gaussian mixture model. A detailed description is provided in the Appendix \ref{appendix:ar_sim_descrip}. The experiment is conducted with 5000 samples for training, 500 for validation, and 1000 for testing.

\paragraph{Controlling the Effect of Unobserved Adjustment Variables}  To assess the impact of unobserved variables \(\mathbf{U}\), we vary the parameter \(\gamma_{(1)}^{YU}\) (Appendix \ref{appendix:ar_sim_descrip}), which determines the contribution of \(\mathbf{U}\) in generating \(Y_t(1)\). By design, \(Y_t(1)\) increases with \(\gamma_{(1)}^{YU}\), and the parameter \(\gamma_{(1)}^{YU}\) is varied within the range \([0, 2.5]\) with a step size of \(0.25\), and we generate a longitudinal dataset at each level of \(\gamma_{(1)}^{YX}\). We report the mean and standard deviation over 10 different seeds of PEHE in estimating the one-step-ahead ACATE, i.e., \(\epsilon_{\mathrm{PEHE}_{T+1}}\), over the test individuals. 

\begin{figure}[ht]
  \centering
  \begin{subfigure}[b]{0.49\textwidth}
    \centering
    \includegraphics[width=8cm, height=4cm]{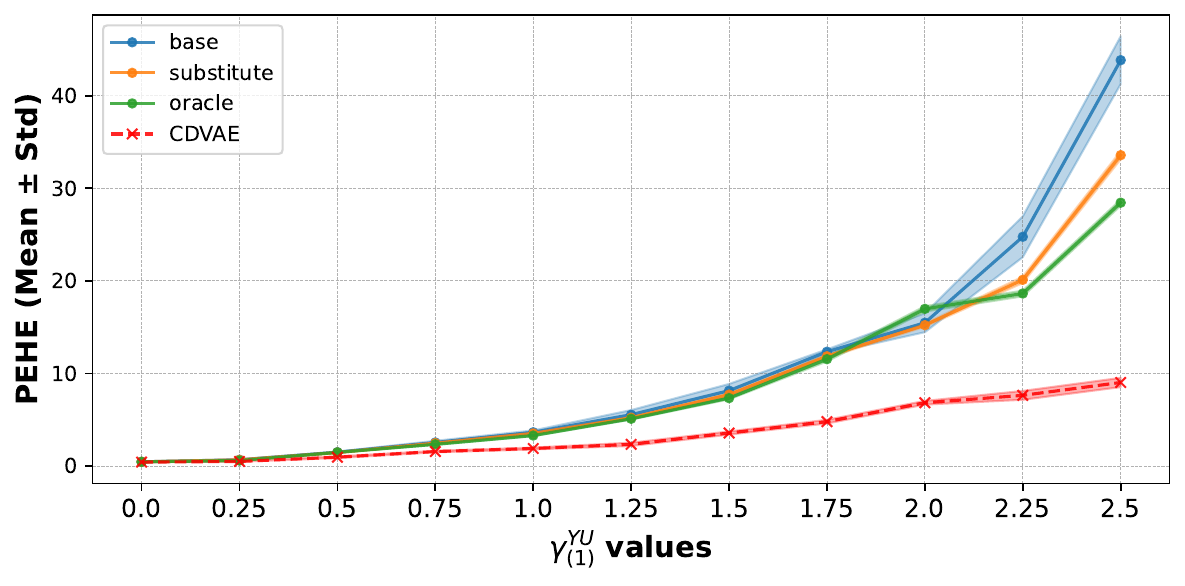}
    \caption{Performance of Causal Transformer with CDVAE Comparison}
    \label{fig:perf_sim_ct}
  \end{subfigure}%
  \hfill
  \begin{subfigure}[b]{0.49\textwidth}
    \centering
    \includegraphics[width=8cm, height=4cm]{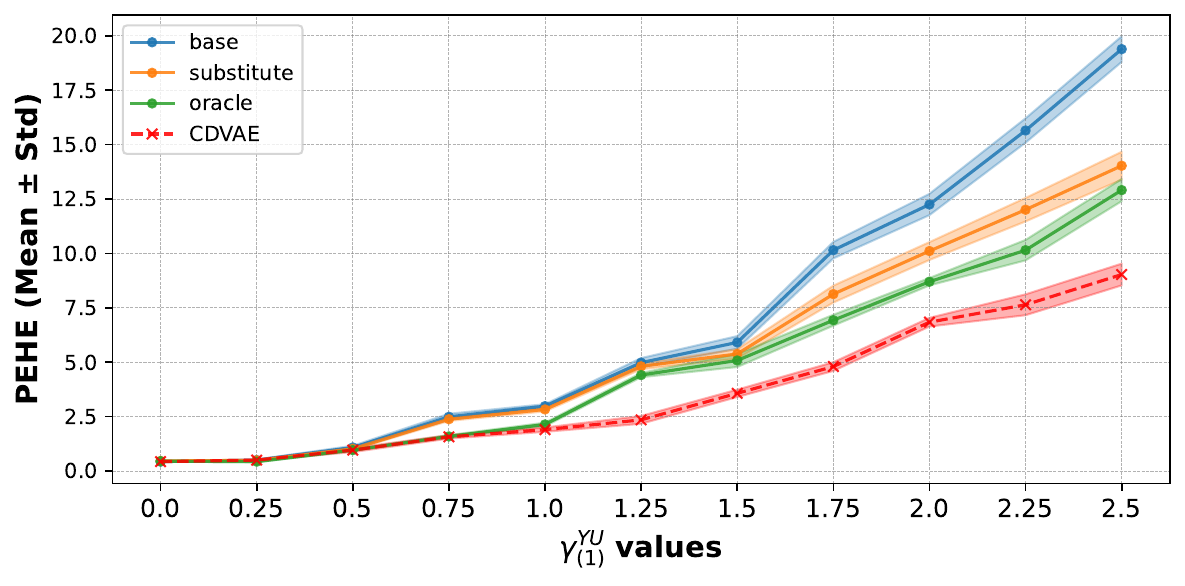}
    \caption{Performance of Causal CPC with CDVAE Comparison}
    \label{fig:perf_sim_ccpc}
  \end{subfigure}%
  
  \vspace{0.5cm} 

  \begin{subfigure}[b]{0.49\textwidth}
    \centering
    \includegraphics[width=8cm, height=4cm]{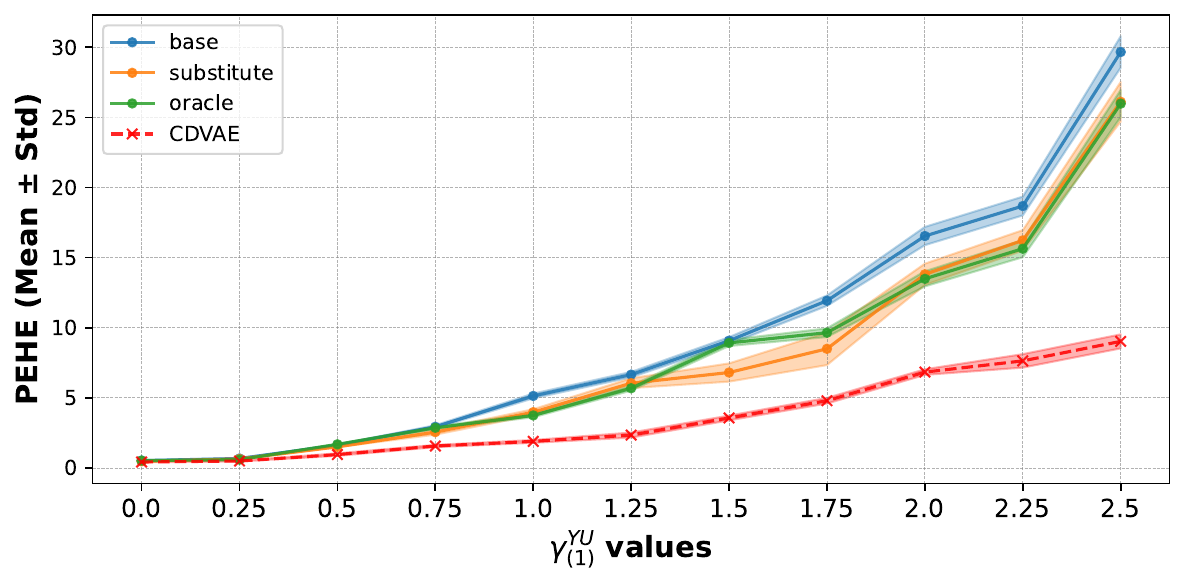}
    \caption{Performance of CRN with CDVAE Comparison}
    \label{fig:}
  \end{subfigure}%
  \hfill
  \begin{subfigure}[b]{0.49\textwidth}
    \centering
    \includegraphics[width=8cm, height=4cm]{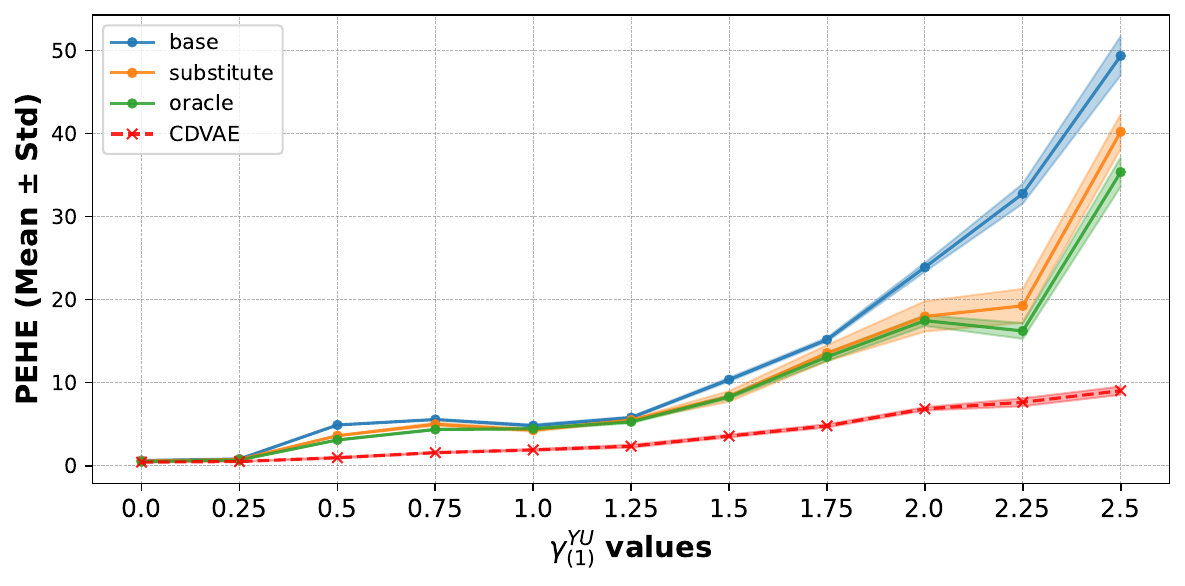}
    \caption{Performance of G-Net with CDVAE Comparison}
    \label{fig:perf_sim_gnet}
  \end{subfigure}%

  \vspace{0.5cm} 
  \begin{subfigure}[b]{0.49\textwidth}
    \centering
    \includegraphics[width=8cm, height=4cm]{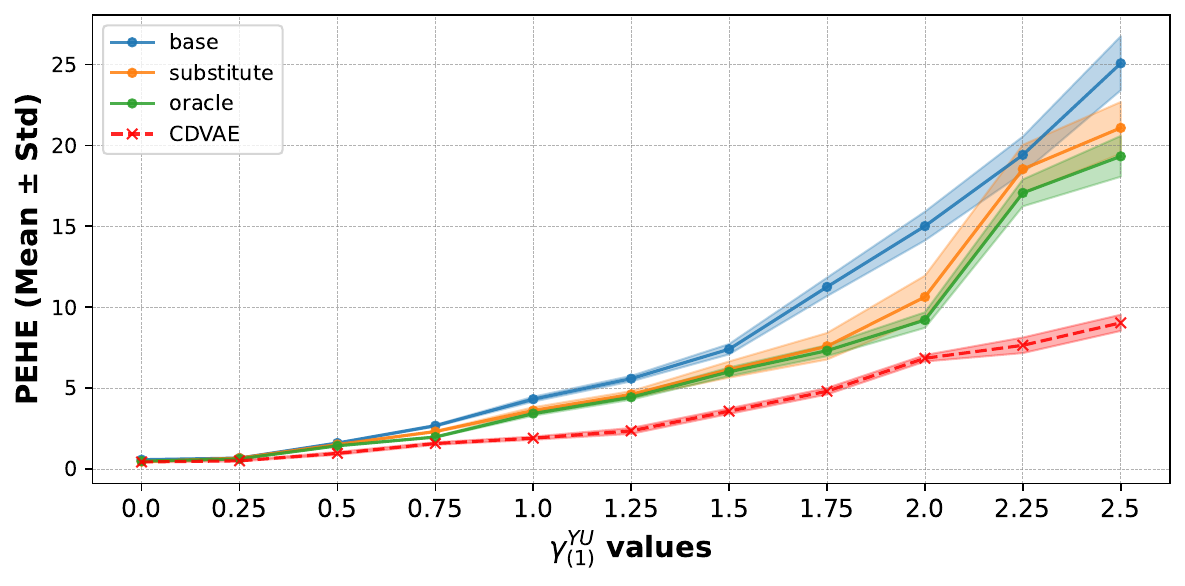}
    \caption{Performance of RMSN with CDVAE Comparison}
    \label{fig:perf_sim_rmsn}
  \end{subfigure}
\caption{Evolution of PEHE in estimating ACATE for synthetic data across increasing levels of heterogeneity induced by adjustment variables $\mathbf{U}$. }
  \label{fig:perf_sim_combined}
\end{figure}

\paragraph{Results} The Figure \ref{fig:perf_sim_combined} shows the performances of all models given the three possible configurations. First, the error in estimating the ACATE increases substantially as the coefficient \(\gamma_{(1)}^{YU}\) increases because the contribution of the unobserved static variables becomes considerably more important in the writing of the potential outcomes. However, CDVAE notably displays a slower increase in error and superior performance in all data configurations. Second, the errors decrease across all levels of unobserved heterogeneity when the baselines are provided with the substitutes learned features by CDVAE, highlighting the superiority of the substitute approach assisted by CDVAE over the base one. As expected, a baseline in the oracle approach generally performs better than with substitutes, especially at higher values of \(\gamma_{(1)}^{YU}\), while the substitute approach provides near-oracle performance. Extended results are provided in Appendix \ref{appendix:arsim_extended_results}, which we complement by stressing performances by varying sequence length $T$ and number of simulated vitals $d_x$ in Appendix \ref{appendix:arsim_comput_scalability}.

\subsection{Semi-Synthetic MIMIC-III Data}  
We adapt to our setting a semi-synthetic dataset constructed by \cite{Melnychuk2022CausalTF} based on the MIMIC-III dataset \citep{johnson2016mimic}. The covariates are high-dimensional and include several vital measurements recorded in intensive care units. We investigate the effect of vasopressor treatments on blood pressure (response) for patients. The static covariates in the dataset (gender, ethnicity, and age) are included in the data construction as adjustment variables for the response. Detailed descriptions are provided in Appendix \ref{appendix:mimic_descrip}. We conduct our experiments with 1400 patients for training, 200 patients for validation, and 400 patients for testing.

Similar to the synthetic data experiment, we evaluate all baselines using three approaches (base, with substitutes, and oracle), and report the PEHE in estimating the one-step-ahead ACATE for a patient trajectory.

\begin{figure}[ht]
\begin{center}
\includegraphics[width=0.7\linewidth]{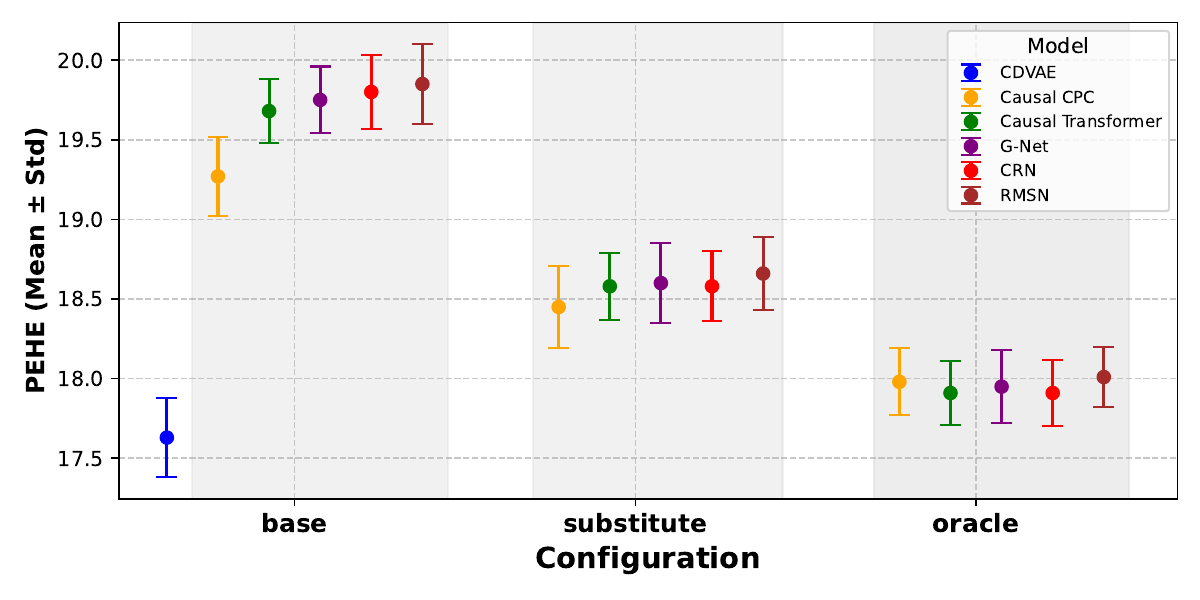}
\caption{Results on the MIMIC-III data reported by PEHE and organized following the three possible configurations. Smaller is better.}
\label{fig:perf_mimic}
\end{center}
\end{figure}

\paragraph{Results} Figure \ref{fig:perf_mimic} reports the mean and standard deviation of PEHE for CDVAE and all baselines across all configurations. Similar to the synthetic data results, the base approach always performs worse than the substitute and oracle approaches. However, the substitute approach substantially enhances performance compared to the base approach, highlighting the relevance of the substitutes learned by CDVAE. Across all configurations, CDVAE achieves the best performance compared to all baselines. Extended results are provided in Appendix \ref{appendix:mimic_extended_results}.

\section{Discussion}
\paragraph{Ablation Study} To experimentally validate the design of CDVAE, we perform an ablation study by exploring several model configurations: the full CDVAE model with all components; CDVAE without the IPM term (\(\lambda_{\mathrm{IPM}} = 0\)); CDVAE without the distribution matching term (\(\lambda_{\mathrm{DistM}} = 0\)); CDVAE without both the IPM term and distribution matching term (\(\lambda_{\mathrm{IPM}} = 0, \lambda_{\mathrm{DistM}} = 0\)); CDVAE without the IPM term and weighting (\(\lambda_{\mathrm{IPM}} = 0\), w/o weighting); CDVAE (\(\lambda_{\mathrm{IPM}} = 0, \lambda_{\mathrm{DistM}} = 0\), w/o weighting); and CDVAE with a fixed variance parameter (\(\sigma = 1\)).
\begin{wraptable}[11]{r}{0.5\textwidth} 
    \centering
    \resizebox{0.75\linewidth}{!}{%
    \begin{tabular}{|c|c|} 
    \hline
    CDVAE Configuration& $\mathrm{PEHE}$ \\ \hline 
    \textbf{Ours}& \textbf{17.63$\pm$0.25}\\
    \hline
    \textbf{$\lambda_{\mathrm{IPM}}=0$}& 18.38$\pm$0.26\\
    \hline
     \textbf{$\lambda_{\mathrm{DistM}}=0$}& 18.21$\pm$0.26\\
     \hline
     \textbf{$\sigma=1$}& 20.50$\pm$1.04\\
    \hline
    \textbf{$\lambda_{\mathrm{IPM}}=0, \lambda_{\mathrm{DistM}}=0$}& 18.45$\pm$0.23\\
     \hline
    \textbf{$\lambda_{\mathrm{IPM}}=0$, w/o weighting}& 18.97$\pm$0.28\\
    \hline
    \textbf{($\lambda_{\mathrm{IPM}}=0, \lambda_{\mathrm{DistM}}=0$, w/o weighting)} & 19.23$\pm$0.28\\
    \hline
    \end{tabular}%
    }
    \caption{CDVAE ablation study with semi-synthetic MIMIC-III data reported by PEHE. Smaller is better.}
    \label{tab:ablation_perf_mimic}
\end{wraptable}
We follow the same experimental protocol as in prior experiments, conducting the ablation for the synthetic dataset across all heterogeneity levels induced by \(\gamma_{(1)}^{YU}\) and for the semi-synthetic MIMIC-III dataset. Across all configurations, the full CDVAE consistently outperforms the other models. Furthermore, removing individual components results in progressively higher errors, as shown in Tables \ref{tab:ablation_perf_sim} and \ref{tab:ablation_perf_mimic}. Interestingly, the configurations (\(\lambda_{\mathrm{IPM}} = 0, \lambda_{\mathrm{DistM}} = 0\), without weighting) and (\(\sigma = 1\)) either perform comparably to or worse than the second-best model, Causal CPC, on both the synthetic dataset and the MIMIC-III dataset.  

\begin{table}[!htbp]
 \caption{Results on the synthetic data reported by PEHE. Smaller is better.}
 \centering
\renewcommand{\arraystretch}{2} 
\resizebox{1.05\textwidth}{!}{%
\begin{tabular}{|c|c|c|c|c|c|c|c|c|c|c|c|} 
\hline
CDVAE Configuration       & $\gamma_{(1)}^{YU} = 0$ & $\gamma_{(1)}^{YU} = 0.25$ & $\gamma_{(1)}^{YU} = 0.5$ & $\gamma_{(1)}^{YU} = 0.75$ & $\gamma_{(1)}^{YU} = 1$ & $\gamma_{(1)}^{YU} = 1.25$ & $\gamma_{(1)}^{YU} = 1.5$ & $\gamma_{(1)}^{YU} = 1.75$ & $\gamma_{(1)}^{YU} = 2$ & $\gamma_{(1)}^{YU} = 2.25$ & $\gamma_{(1)}^{YU} = 2.5$ \\ \hline 

\textbf{Ours} & \textbf{0.43$\pm$0.02}& \textbf{0.50$\pm$0.03}& \textbf{0.96$\pm$0.09}& \textbf{1.57$\pm$0.08}& \textbf{1.90$\pm$0.10}&  \textbf{2.35$\pm$0.18}& \textbf{3.57$\pm$0.17}& \textbf{4.80$\pm$0.20}& \textbf{6.84$\pm$0.20}& \textbf{7.64$\pm$0.48}& \textbf{9.03$\pm$0.50}\\
 \hline
\textbf{$\lambda_{\mathrm{IPM}}=0$}& 0.48$\pm$0.02& 0.57$\pm$0.03& 1.12$\pm$0.01& 1.80$\pm$0.15 & 2.23$\pm$0.12& 2.53$\pm$0.15  & 3.88$\pm$0.13& 5.23$\pm$0.11& 7.37$\pm$0.21& 7.95$\pm$0.12& 9.32$\pm$0.36\\ \hline

\textbf{$\lambda_{\mathrm{DistM}}=0$}& 0.46$\pm$0.02& 0.57$\pm$0.03& 1.10$\pm$0.02& 1.78$\pm$0.13& 2.18$\pm$0.13& 2.64$\pm$0.18& 3.95$\pm$0.15& 5.20$\pm$0.12& 7.14$\pm$0.13& 8.08$\pm$0.14&9.25$\pm$0.19\\\hline

\textbf{$\sigma=1$}& 0.81$\pm$0.40& 0.66$\pm$0.07& 2.17$\pm$0.53& 5.73$\pm$1.51& 9.27$\pm$3.27& 10.67$\pm$2.02& 12.66$\pm$2.16& 14.28$\pm$4.23& 15.63$\pm$4.04& 18.23$\pm$5.71&17.19$\pm$5.71\\\hline

\textbf{$\lambda_{\mathrm{IPM}}=0, \lambda_{\mathrm{DistM}}=0$}& 0.48$\pm$0.02& 0.57$\pm$0.03& 1.16$\pm$0.07& 1.84$\pm$0.17& 2.33$\pm$0.12& 2.66$\pm$0.14& 4.05$\pm$0.10& 5.36$\pm$0.13& 7.48$\pm$0.12& 8.20$\pm$0.13&9.40$\pm$0.24\\\hline

\textbf{$\lambda_{\mathrm{IPM}}=0$, w/o weighting}& 0.50$\pm$0.01& 0.60$\pm$0.01& 1.20$\pm$0.03& 1.90$\pm$0.11& 2.31$\pm$0.10& 2.66$\pm$0.13& 3.99$\pm$0.16& 5.30$\pm$0.19& 7.51$\pm$0.12& 8.21$\pm$0.15&9.41$\pm$0.23\\\hline

\textbf{$\lambda_{\mathrm{IPM}}=0, \lambda_{\mathrm{DistM}}=0$, w/o weighting}& 0.50$\pm$0.02& 0.62$\pm$0.02& 1.27$\pm$0.03& 1.98$\pm$0.10& 2.39$\pm$0.09& 2.85$\pm$0.14& 4.19$\pm$0.13& 5.59$\pm$0.13& 7.61$\pm$0.15& 8.41$\pm$0.14&9.78$\pm$0.21\\\hline
\end{tabular}%
}
\label{tab:ablation_perf_sim}
\end{table}

\paragraph{How CDVAE handles autoregressive order in practice.}
We require $\mathrm{CMM}(p)$ to ensure identifiability of ACATE and validity of learned substitutes. In practice, we do not fix a lag $p$ but rely on two mechanisms: 
\begin{inparaenum}
    \item \textbf{Fading memory.}
Recurrent predictors and fading-memory filters down-weight distant history exponentially. Classical results show that fading-memory maps can be uniformly approximated by finite-memory models: for any $\epsilon>0$, there exists an order $p_\epsilon$ such that conditioning on the last $p_\epsilon$ lags is $\epsilon$-close to conditioning on the full history \citep{funahashi1993approximation, miller2018stable, gonon2021fading}.
\item \textbf{Bias from training.} Backpropagation through time biases RNNs toward short/medium context, with little gain from longer lags \citep{Khandelwal2018}.
\end{inparaenum}

To empirically substantiate these claims, we train CDVAE on synthetic data generated with $p_{\mathrm{true}}=10$. On test data, we compute counterfactual responses while masking outcomes to simulate $\mathrm{CMM}(p)$ with $p \in \{4,6,8,10,12,14,16\}$. One-step residuals (conditioned on $\mathbf{H}_t$ and $\mathbf{Z}$) are tested for autocorrelation using Ljung--Box. Persistent autocorrelation or steadily improving accuracy with higher $p$ would suggest violation of $\mathrm{CMM}(p)$. In all cases, Ljung--Box $p$-values up to lag 20 remain well above $0.05$ (Table~\ref{tab:ljungbox-summary}, e.g., $p=4$: 0.085; $p=10$: 0.109), so residual autocorrelation is not rejected. Meanwhile, one-step RMSE ($\sim 1.129$--1.131) and PEHE ($\sim 0.975$--0.986) are nearly flat across $p$ (Table~\ref{tab:autoregressive-accuracy}).  Thus, conditioning on short/medium windows suffices to remove serial dependence: increasing $p$ beyond 4--6 yields no material improvement.  This agrees with (i) portmanteau tests showing no residual autocorrelation up to lag $h$, and (ii) the fading-memory approximation theorem together with the empirical bias of RNN training toward recent history (Table~\ref{tab:ljungbox-details}).

\begin{table}[h!]
\centering
\begin{tabular}{c c c c c c c c}
\toprule
Metric & $p=4$ & $p=6$ & $p=8$ & $p=10$ & $p=12$ & $p=14$ & $p=16$ \\
\midrule
RMSE$_{\text{1-step}}$ & 1.131 & 1.130 & 1.130 & 1.129 & 1.129 & 1.130 & 1.130 \\
PEHE$_{\text{1-step}}$ & 0.977 & 0.984 & 0.986 & 0.986 & 0.982 & 0.975 & 0.982 \\
\bottomrule
\end{tabular}
\caption{One-step accuracy vs. masked autoregressive order $p$.}
\label{tab:autoregressive-accuracy}
\end{table}
\begin{table}[h!]
\centering
\begin{tabular}{c c c c c c c c}
\toprule
Metric & $p=4$ & $p=6$ & $p=8$ & $p=10$ & $p=12$ & $p=14$ & $p=16$ \\
\midrule
$\min_{k \leq 20}$ p-value & 0.085 & 0.083 & 0.097 & 0.109 & 0.095 & 0.079 & 0.111 \\
\# lags $\leq 20$ with $p<0.05$ & 0 & 0 & 0 & 0 & 0 & 0 & 0 \\
Reject any? & No & No & No & No & No & No & No \\
\bottomrule
\end{tabular}
\caption{Residual whiteness vs. masked order $p$ (Ljung--Box, joint up to $h=20$).}
\label{tab:ljungbox-summary}
\end{table}

\begin{table}[h!]
\centering
\begin{tabular}{c c c c c c c c c}
\toprule
$p$ & Q(1) & p(1) & Q(5) & p(5) & Q(10) & p(10) & Q(20) & p(20) \\
\midrule
4  & 2.963 & 0.085 & 7.442 & 0.190 & 10.812 & 0.372 & 12.111 & 0.912 \\
6  & 2.996 & 0.083 & 7.442 & 0.190 & 10.725 & 0.379 & 12.067 & 0.914 \\
8  & 2.755 & 0.097 & 6.669 & 0.246 &  9.892 & 0.450 & 11.483 & 0.933 \\
10 & 2.566 & 0.109 & 7.012 & 0.220 &  9.927 & 0.447 & 11.213 & 0.941 \\
12 & 2.788 & 0.095 & 6.825 & 0.234 & 10.289 & 0.416 & 11.653 & 0.927 \\
14 & 3.077 & 0.079 & 8.189 & 0.146 & 11.694 & 0.306 & 13.236 & 0.867 \\
16 & 2.533 & 0.111 & 7.135 & 0.211 & 10.652 & 0.385 & 11.930 & 0.918 \\
\bottomrule
\end{tabular}
\caption{Ljung--Box statistics (Q) and $p$-values at selected lags $k \in \{1,5,10,20\}$.}
\label{tab:ljungbox-details}
\end{table}

\paragraph{On the Near-Deterministic Behavior of CDVAE}  
For the theoretical results in Section \ref{sect:cdvae_near_det} to hold, it is necessary to verify whether the variance parameter \(\sigma\), as a learnable parameter, indeed decreases toward zero during training. In all experiments, \(\sigma\) is initialized at one, and we show in Figure \ref{fig:perf_arsim_y_sigma} its behavior during training on the synthetic dataset across all levels of \(\gamma_{(1)}^{YU}\), as well as on the MIMIC-III dataset in Figure \ref{fig:perf_mimic_y_sigma}. In all cases, the variance parameter stably decreases toward zero, validating the near-deterministic regime at the end of training. This supports the consistency result related to the treatment effect established in Theorem \ref{thm:sigma_to_0}.

\begin{figure}[ht]
  \centering
  \begin{subfigure}[b]{0.45\textwidth}
    \centering
    \includegraphics[width=8cm, height=4cm]{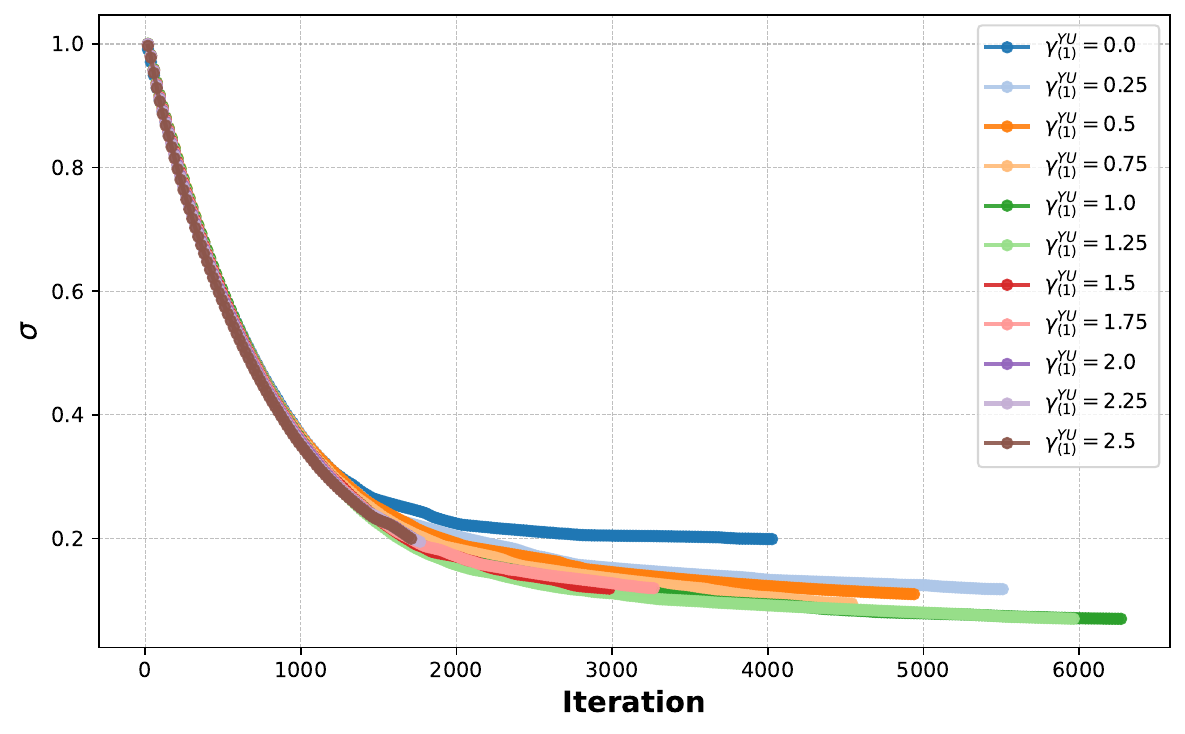}
    \caption{Synthetic data.}
  \label{fig:perf_arsim_y_sigma}
  \end{subfigure}%
  \hfill
  \begin{subfigure}[b]{0.45\textwidth}
     \centering
    \includegraphics[width=8cm, height=4cm]{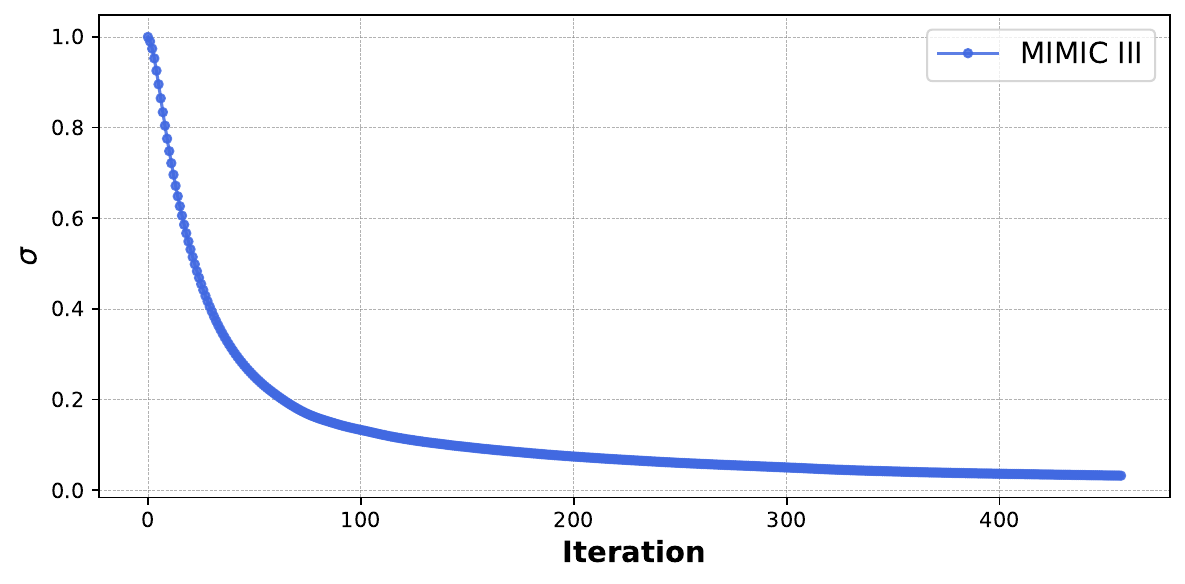}
    \caption{MIMIC III.}
  \label{fig:perf_mimic_y_sigma}
  \end{subfigure}%
  \caption{Evolution of variance parameter update during training for synthetic data (left) for each level of \(\gamma_{(1)}^{YU}\) and MIMIC-III (right) averaged over 10 random initializations}
\end{figure}

\paragraph{CDVAE Sensitivity to unobserved confounding} 

To assess the sensitivity of CDVAE when sequential ignorability is violated and when adjustment variables \(\mathbf U\) are unobserved, we mask four confounders from the mimic experiment during model training, which are sodium, Glasgow Coma Scale total, cholesterol, and hemoglobin. We repeat the same experimental protocol as in the experiment of Figure \ref{fig:perf_mimic} and show in Figure \ref{fig:perf_mimic_robustness} the results of all models under the three configurations (base, substitute, and oracle). As expected, PEHE increases for all models across the three configurations, with an average increase of 2 points in PEHE, which is at least an 11\% increase. CDVAE still outperforms baseline in base and substitutes configurations; however, baselines provided with true adjustment variables either provide comparable performances to CDVAE or slightly outperform (eg, CRN).

For synthetic data, we perform sensitivity analysis in the following way:  We extend the simulator by partitioning covariates \(\mathbf{X}_t = [\mathbf{X}_t^o, \mathbf{X}_t^h]\) into observed ones $\mathbf{X}_t^o$ and hidden ones $\mathbf{X}_t^h$ and summarizing the contribution of $\mathbf{X}_t^h$ into a hidden score. Sensitivity to unobserved confounding is controlled on the selection side by adding \(\log(\Gamma) S_t\) to the treatment logit (Rosenbaum-\(\Gamma\) style \citep{rosenbaum1983assessing}), and on the outcome side by arm-specific loadings \(\beta_{\omega} S_t\) in the potential-outcome equations. Setting \(\gamma = \beta_0 = \beta_1 = 0\) recovers the baseline DGP. Varying \(\gamma\) yields families of datasets with controlled degrees of hidden selection bias and hidden effect modification by unobserved confounder (cf. Appendix \ref{sect:ar_sim_description_sensitivity}). Figure \ref{fig:perf_arsim_robustness} shows how CDVAE, along with baselines in the "base" approach, degrades in performance (PEHE) as confounding level $\Gamma$ increases in $\{1,1.5, \dots, 5\}$. CDVAE still provides globally lower PEHE, but the performance margin narrows as $\Gamma$ increases.

\begin{figure}[h!]
\begin{center}
\includegraphics[scale=0.4]{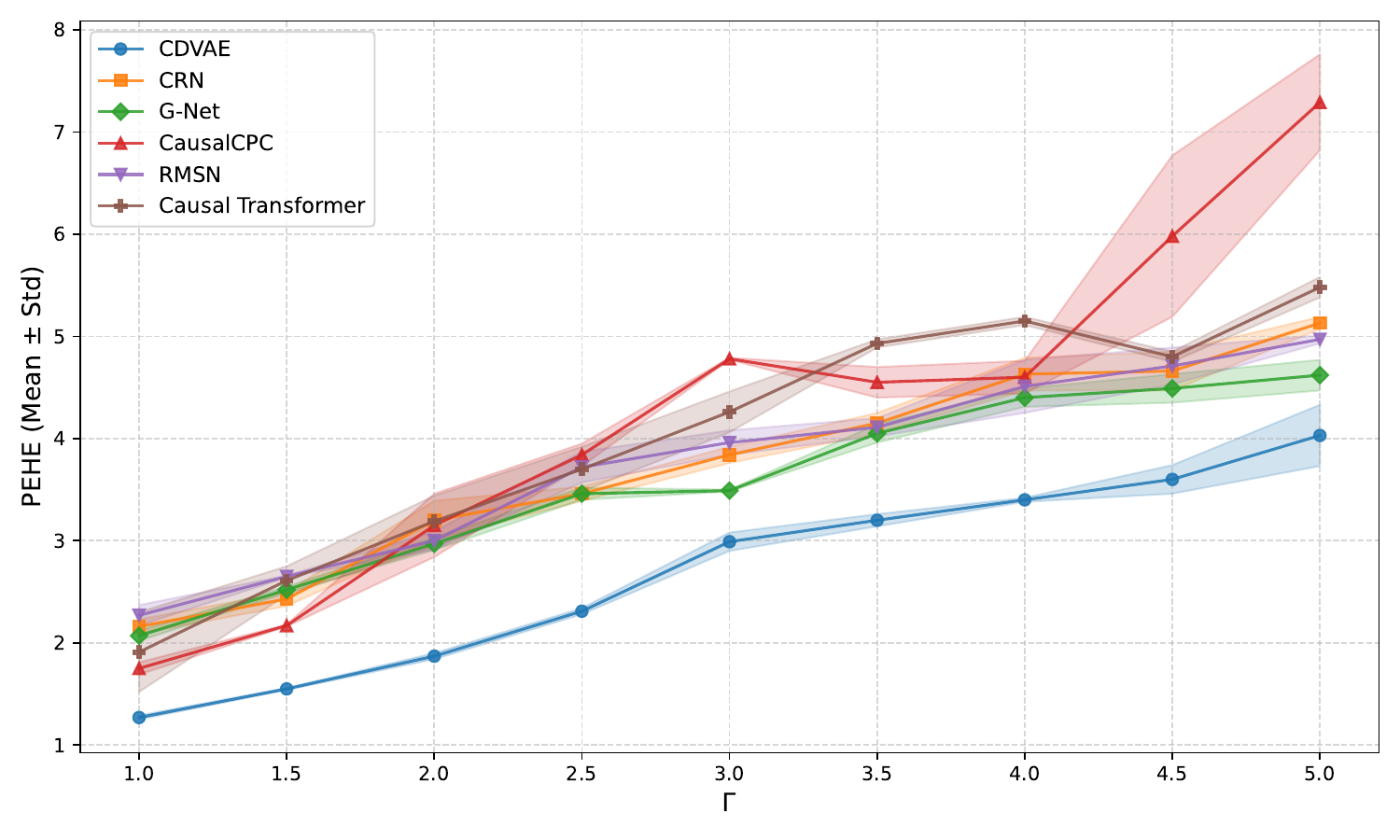} 
\caption{Sensitivity analysis on the synthetic data. We report the variation of PEHE under different levels of confounding $\Gamma$ for CDVAE and baselines under the "base" approach. Mean and standard deviation are computed for 10 different seeds. Smaller is better.}
\label{fig:perf_arsim_robustness}
\end{center}
\end{figure}

\begin{figure}[h!]
\begin{center}
\includegraphics[scale=0.4]{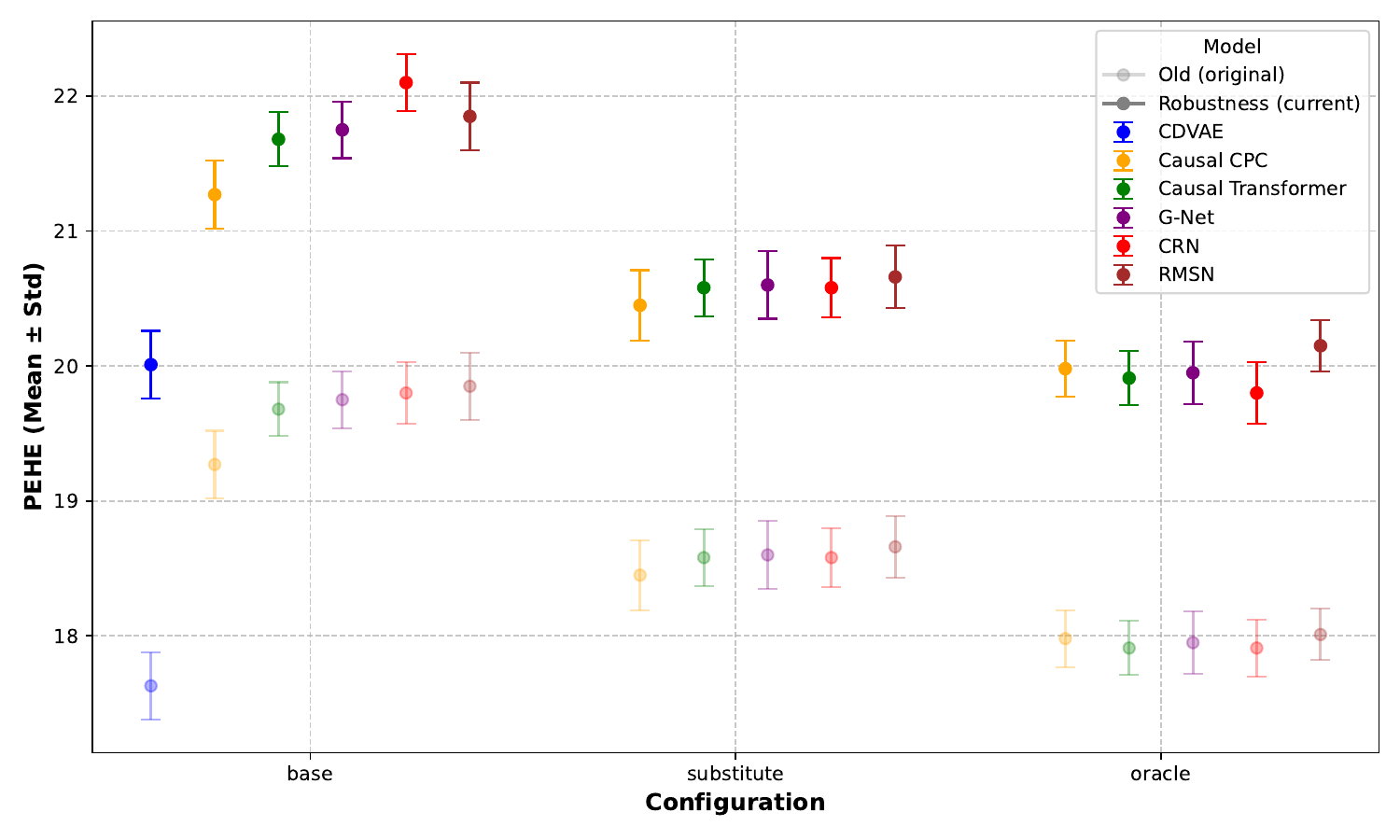} 
\caption{Results on the MIMIC-III data under violation of sequential ignorability. For comparison, results under sequential ignorability of Figure \ref{fig:perf_mimic} are included in shadow. We report PEHE. Smaller is better.}
\label{fig:perf_mimic_robustness}
\end{center}
\end{figure} 

\paragraph{CDVAE Robustness to the Number of Components in the Prior}  
We demonstrate that the number of components \(K\) in the Gaussian mixture prior does not substantially affect the PEHE. The chosen value, selected via random search, is \(K=7\), while the ground truth is \(K=8\). To test robustness, we evaluate CDVAE with values below and above this baseline, namely \(K=2, 5, 8, 11\). For the semi-synthetic MIMIC-III dataset, the true number of components is unknown, and the value selected via grid search is \(K=5\). To test robustness, we also evaluate \(K=2, 8, 11, 14\). Figure \ref{fig:ablation_arsim_n_clusters} shows the evolution of errors for the synthetic data across levels of \(\gamma_{(1)}^{YU}\) for different values of \(K\). While there are slight differences in performance means for CDVAE with varying \(K\), almost all error bars overlap across all levels of \(\gamma_{(1)}^{YU}\). The same observations hold for MIMIC-III, as depicted in Figure \ref{fig:ablation_mimic_n_clusters}. This highlights the robustness of CDVAE to the choice of \(K\) for the prior and its low sensitivity compared to the components analyzed in the ablation study.

\paragraph{Do the learned substitutes \(\mathbf{Z}\) better capture the clustering structure in \(\mathbf{U}\) (population structure)?}
We assess this by computing the mutual information (MI) between learned representations and the true \(\mathbf{U}\)-cluster labels across the sensitivity sweep \(\Gamma \in \{1, 1.5, \ldots, 5\}\) (Figure~\ref{fig:mi_gamma_robustness_synthetic}).  First, we notice that the MI for the baselines is positive. This is expected given the causal structure: the unobserved \(\mathbf U\) affects the entire response series, and observed outcomes serve therefore as proxies for \(\mathbf U\), so encoders that use past responses inevitably preserve some cluster signal. CDVAE is more efficient because it learns a dedicated substitute \(\mathbf Z\) (the variational posterior) and, under CMM(\(p\)), explicitly exploits outcomes as high-signal proxies for \(\mathbf U\), thereby better capturing response heterogeneity driven by latent variables. The downward trend in baselines with larger \(\Gamma\) is expected: we hold the contribution of \(\mathbf{U}\) to \(Y_t\) fixed while increasing \(\Gamma\), which \emph{amplifies the influence of missing time-varying confounders}, making outcomes less diagnostic about \(\mathbf{U}\). CDVAE nevertheless retains more MI because it learns a dedicated, sequence-level substitute \(Z\) (near-deterministic variational posterior) that globally modulates the trajectory and is fed only to the outcome head, while baselines—though they inherit some MI via past outcomes acting as proxies—lack this explicit latent channel and therefore capture less of the population structure.

\begin{figure}[ht]
\begin{center}
\includegraphics[scale=0.4]{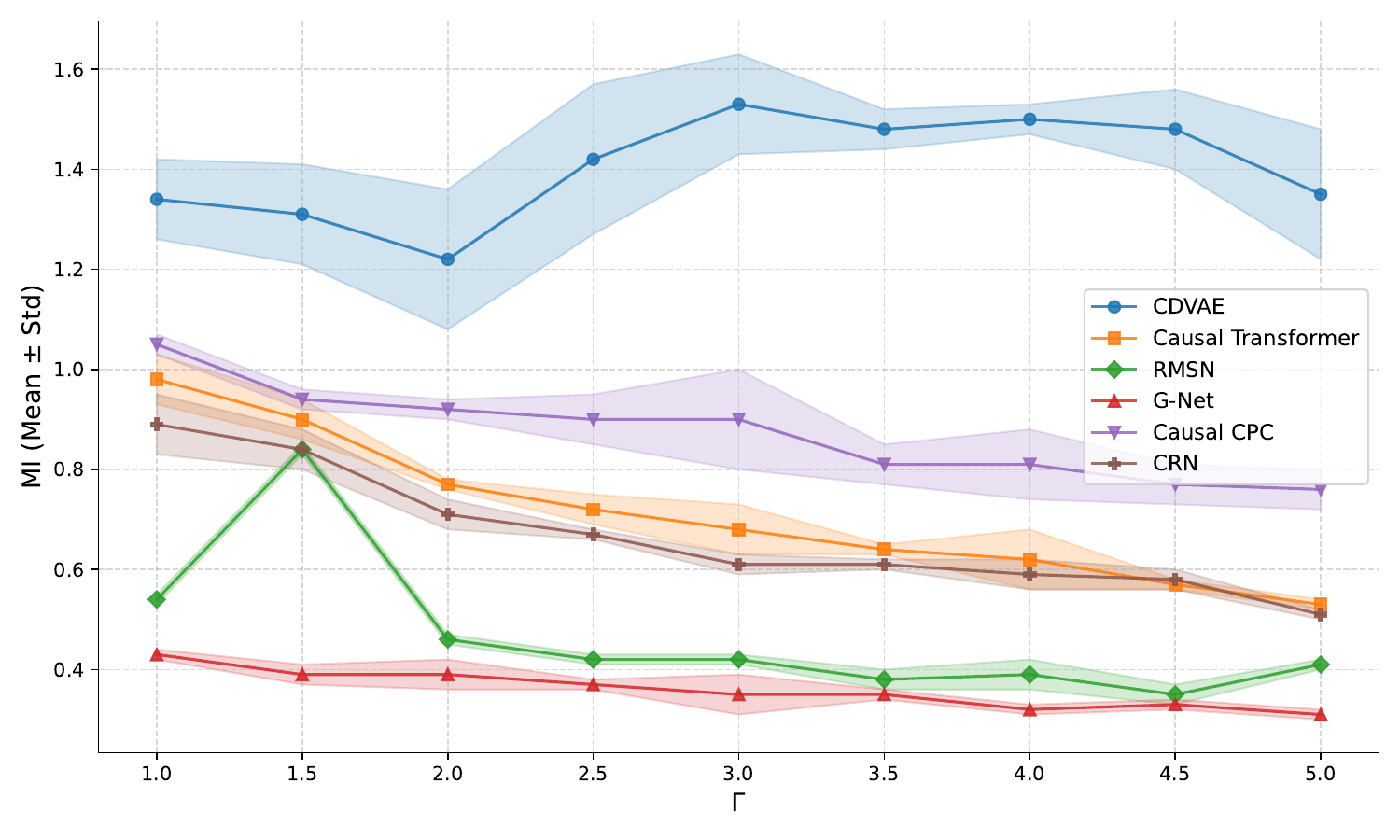} 
\caption{Mutual Information between learned representations and true cluster labels in \(\mathbf{U}\) under a sensitivity analysis to missing confounding controlled by \(\Gamma\). We report the mean and standard deviation over 10 different seeds. Higher is better.}
\label{fig:mi_gamma_robustness_synthetic}
\end{center}
\end{figure}

\begin{figure}[ht]
  \centering
  \begin{subfigure}[b]{0.45\textwidth}
    \centering
    \includegraphics[width=8cm, height=5cm]{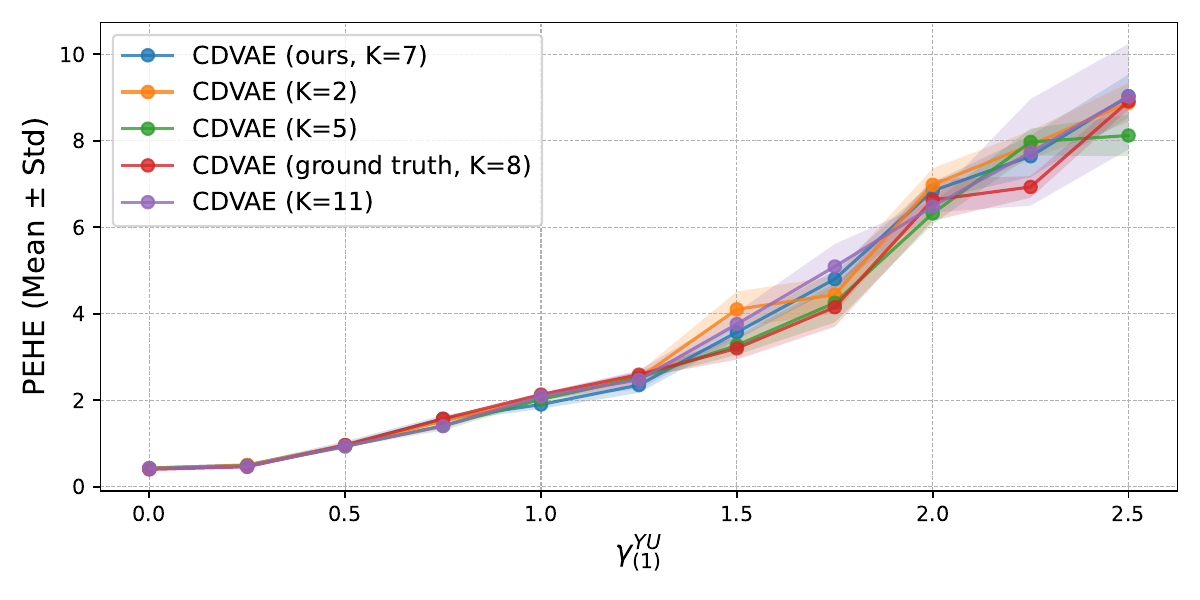}
    \caption{Synthetic data.}
  \label{fig:ablation_arsim_n_clusters}
  \end{subfigure}%
  \hfill
  \begin{subfigure}[b]{0.45\textwidth}
     \centering
    \includegraphics[width=8cm, height=5cm]{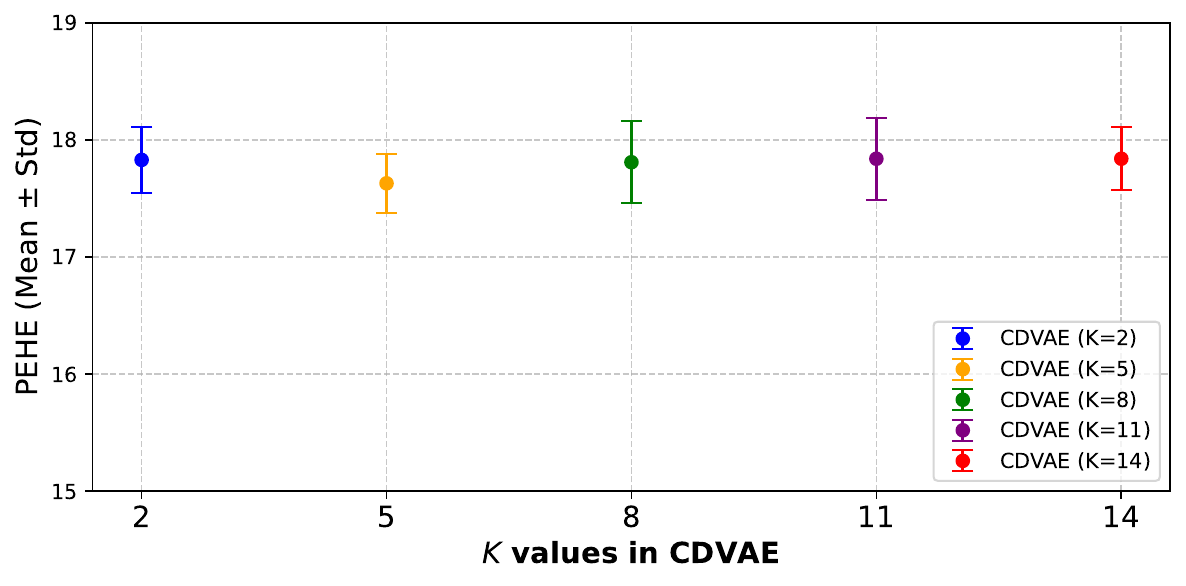}
    \caption{MIMIC data.}
  \label{fig:ablation_mimic_n_clusters}
  \end{subfigure}%
  \caption{Results of CDVAE when varying the number of components \(K\) of the prior for the synthetic data (left) and MIMIC-III (right) reported by PEHE. Smaller is better. }
\end{figure}

\paragraph{Bayesian Model Assessment of CDVAE}  
We assess the quality of the conditional response fitting in our variational framework through a posterior predictive check \citep{RubinPostPredC1984, meng1994posterior,gelman1995bayesianDA} and similar to \citet{Bica2020TimeSDecounf, hatt2024sequential}. For each time step \(t\), let \(\mathbf{y}_{t}^{\mathrm{obs}} \coloneqq \{y_{i,t}\}_{i=1}^{N_{\mathrm{val}}}\) represent the observed responses in the validation dataset. We generate \(S\) replicated datasets of responses \(\mathbf{y}_{t}^{\mathrm{rep}}(s) \coloneqq \{y_{i,t}^{\mathrm{rep}}(s)\}_{i=1}^{N_{\mathrm{val}}}\) for \(s = 1, \dots, S\), such that for each individual \(i\) in the validation dataset, we draw \(S\) samples of latent substitutes \(\mathbf{z}_i(1), \dots, \mathbf{z}_i(S) \sim q_{\phi}(\cdot \mid \mathcal{D})\). The approximate posterior serves as a proxy for the true (inaccessible) posterior \(p(\mathbf{z} \mid \mathcal{D})\). Replicated responses are then generated from the fitted conditional distribution:  
\(
Y_{t}^{\mathrm{rep}}(s) \sim Y_t \mid y_{<t}, \mathbf{x}_{\leq t}, \omega_{\leq t}, \mathbf{z}_i(s).
\)

To compare the observed and replicated data, we define a statistic \(\mathbb{T}\) based on the conditional log-likelihood:  
\[
\mathbb{T}(\mathbf{y}_{t}^{\mathrm{obs}}) \coloneqq \frac{1}{N_{\mathrm{val}}} \sum_{i=1}^{N_{\mathrm{val}}} \log p_{\theta}(y_{i,t} \mid \mathbf{h}_{it}, \omega_{i,t}, \mathbf{z}_{i}(s)),
\]
and for replicated responses:
\(
\mathbb{T}(\mathbf{y}_{t}^{\mathrm{rep}}(s)) \coloneqq \frac{1}{N_{\mathrm{val}}} \sum_{i=1}^{N_{\mathrm{val}}} \log p_{\theta}(y_{i,t}^{\mathrm{rep}}(s) \mid \mathbf{h}_{it}, \omega_{i,t}, \mathbf{z}_{i}(s))
\). We then define a posterior predictive p-value as:
\[
p = \Pr(\mathbb{T}(\mathbf{y}_{t}^{\mathrm{rep}}(s)) > \mathbb{T}(\mathbf{y}_{t}^{\mathrm{obs}})),
\]
which is approximated as:
\[
p \approx \frac{1}{S} \sum_{s=1}^{S} \mathbbm{1}\{\mathbb{T}(\mathbf{y}_{t}^{\mathrm{rep}}(s)) > \mathbb{T}(\mathbf{y}_{t}^{\mathrm{obs}})\}.
\]

\begin{figure}[ht]
  \centering
  \begin{subfigure}[b]{\textwidth}
    \centering
    \includegraphics[width=0.5\textwidth]{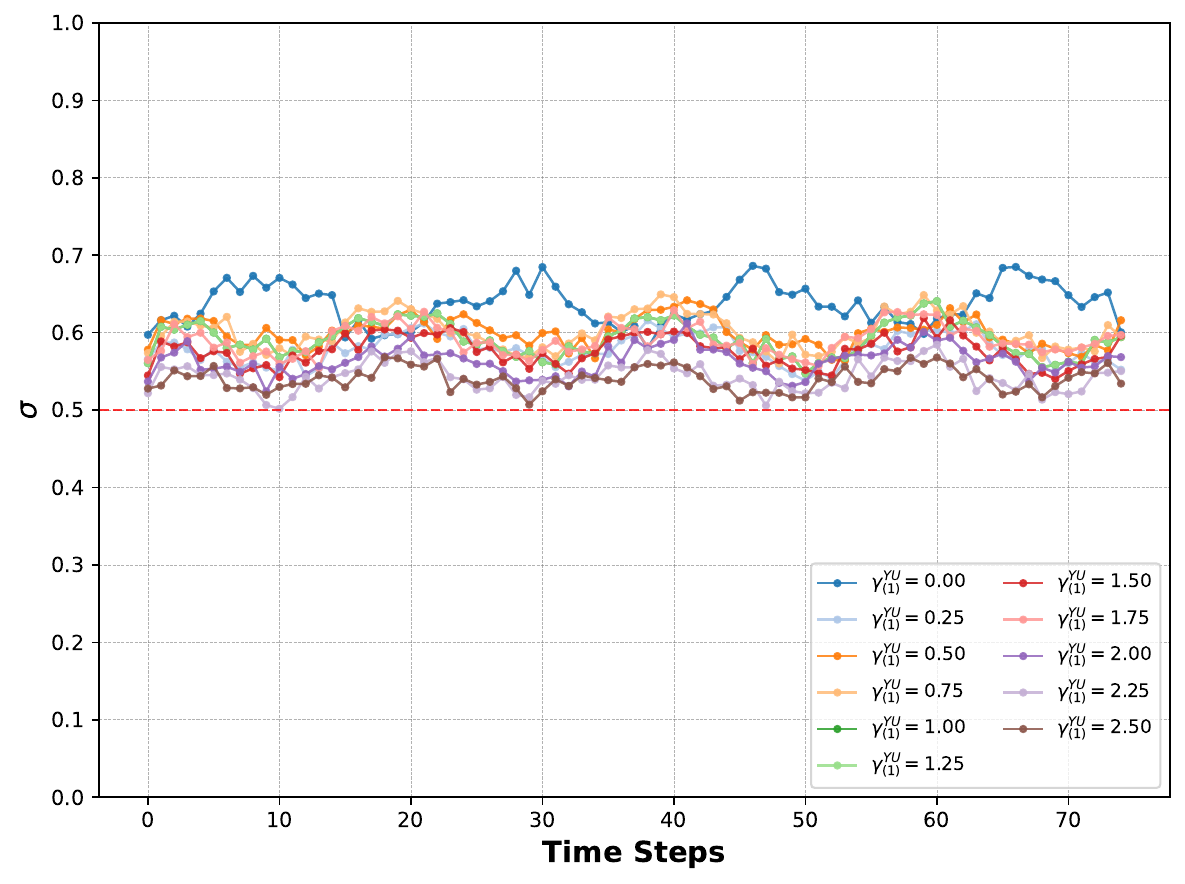}
    \caption{Synthetic data}
    \label{fig:perf_arsim_p_values}
  \end{subfigure}
  \vspace{1em} 
  \begin{subfigure}[b]{\textwidth}
    \centering
    \includegraphics[width=0.5\textwidth]{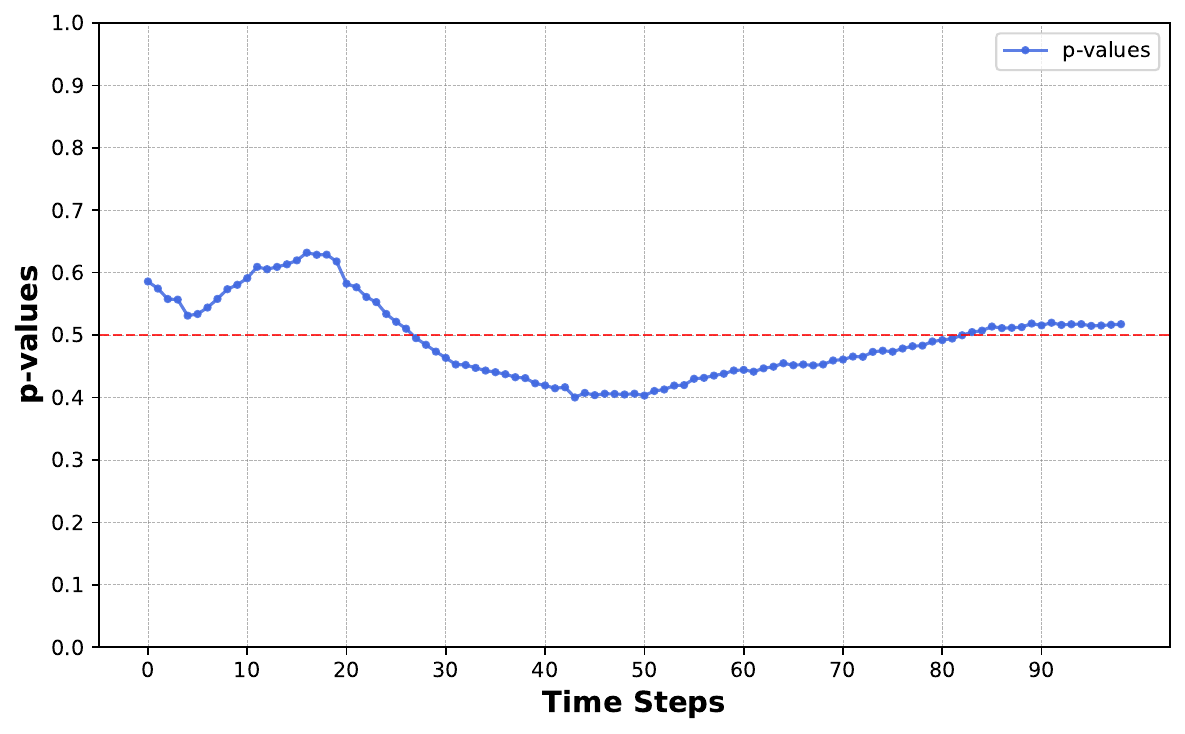}
    \caption{MIMIC III}
    \label{fig:perf_mimic_p_values}
  \end{subfigure}
  \caption{Evolution of CDVAE posterior predictive p-values for synthetic (top) and semi-synthetic MIMIC-III data (bottom). We report the average for 10 random initializations at each time step.}
\end{figure}

We compare these p-values over time to assess how closely the distribution of the conditional responses matches the distribution of the replicated responses. If the model accurately captures the conditional response distribution given the substitutes, the test statistics for the replicated data should be close to those for the observed data, ideally resulting in a p-value equal to 0.5. Figures \ref{fig:perf_arsim_p_values} and \ref{fig:perf_mimic_p_values} show the temporal evolution of the p-values for both the synthetic data and semi-synthetic MIMIC-III. In general, all the p-values fluctuate inside the interval \([0.41,0.59]\) except for the synthetic dataset with \(\gamma_{(1)}^{YU}=0.0\) because \(\mathbf{U}\) no longer modifies the potential outcomes \((Y_t(1))_{t \geq 1}\). Interestingly, the bias in the p-values (Figure \ref{fig:perf_arsim_p_values}) decreases as \(\gamma_{(1)}^{YU}\) increases because CDVAE is designed to capture substantial heterogeneity in the responses due to the unobserved adjustment variables.

\paragraph{Connection to the Deconfounder Theory}
Our approach can also provide a remedy for a key inconsistency in the deconfounder theory \citep{lopezmultipletreatments,ranganath2018multiple, wang2019blessings, wang2019blessingsreply}. Crucially, while the inference model for the latent confounder is probabilistic, leading to a distribution over the substitute confounder, the theory relies on the assumption that the posterior distribution collapses into a Dirac delta distribution, implying that the confounder is estimated with certainty. This assumption simplifies the theoretical framework but introduces a fundamental incoherence between the stochastic nature of the inference models and the deterministic assumption in the theory \citep{d2019comment,imai2019comment, D'AmourMulti-Cause2019}. There is a significant theoretical-application gap: In theory, the deconfounder assumes that the substitute confounder is estimated with certainty, while in practice, inference models are \emph{stochastic} \citep{zhang2019medicalDeconf, Bica2020TimeSDecounf, hatt2024sequential}, and the latent confounder is described by a distribution, introducing uncertainty.   
Our use of near-deterministic VAEs indirectly controls the posterior variance, finding a compromise where the posterior remains less diffuse yet not fully deterministic. This approach enables a more precise estimation of the substitute confounder without assuming perfect certainty, addressing an inherent inconsistency. Applying this method within deconfounder theory holds independent interest beyond this paper’s scope. Future research could explore integrating these improvements into the deconfounder framework, with near-deterministic VAEs for substitute confounders inference to bridge theory and practice. 

\paragraph{Comparing our approach to the deconfounder} Unlike the deconfounder, our method does not require the consistency assumption for identifying treatment effects. Whereas the deconfounder framework models unobserved covariates as confounders affecting both treatment and outcome, we treat these unobserved covariates as adjustment variables, thus bypassing the need for the consistency assumption. A theoretical discussion of this distinction is in Appendix \ref{appendix:consistency_discussion}. Furthermore, aligned with critiques of the deconfounder on the validity of inferred substitutes \citep{d2019comment,imai2019comment, D'AmourMulti-Cause2019, CommentBlessings}, we investigate in Appendix \ref{appendix:bad_vars} whether the substitute adjustment variables \(\mathbf{Z}\) might inadvertently capture “bad variables” that could bias causal effect estimation.

\paragraph{On failure modes of CDVAE} We now present the principal theoretical failure modes of CDVAE that may caution against its use in practice:
\begin{enumerate}
    \item \textbf{Hidden confounding beyond $\mathbf H_t$.} We assume sequential ignorability; only \emph{missing adjustment variables} $\mathbf{U}$ affect outcomes, but not treatments. If confounders are missing, identifiability fails. In a knife-edge case where these confounders are \begin{inparaenum}[(i)]
    \item time-invariant and
    \item satisfy the same $\mathrm{CMM}(p)$ structure as $\mathbf{U}$, 
\end{inparaenum} identifiability results (e.g., Theorem \ref{thm:seq_ign_augmented}, Corollary \ref{corollary:acate_identif}, Theorem \ref{thm:valid_Z}) might still hold. But this is implausible in practice, and crucially, our propensity model omits the latent variable $\mathbf{Z}$, introducing bias in treatment weighting.
\item \textbf{Time variation in the unobserved variables.} CDVAE treats $\mathbf{U}$ as \emph{static} adjustment variables. If the true unobserved effect modifiers are time-varying, or if there are time-varying unobserved confounders, the latent substitute $\mathbf{Z}$ can no longer absorb all the relevant heterogeneity with a time-invariant representation, and both the identifiability argument and the sequential ignorability with augmented history may fail.
\item \textbf{Very long memory or infinite-order dependence.} Our identifiability results rely on a \emph{finite} conditional Markov order $p$. If outcomes exhibit very long distributed-lag effects, or more strongly, if there is \emph{no} finite order $m$ for which responses more than $m$ steps back are conditionally independent given current history $\mathbf{Z}$, then the process is effectively infinite-order. This violates $\mathrm{CMM}(p)$ and undermines our ability to infer reliable substitutes for $\mathbf{U}$ from the outcome sequence, and ACATE may not be identifiable.
\item \textbf{Outcome support and heavy tails.} As a technical requirement, Theorem \ref{thm:valid_Z} assumes $\mathcal{Y}$ lies in a Borel subset of a compact interval. Heavy-tailed outcomes stretch this regularity condition, but it can often be mitigated by variance-stabilizing transforms or winsorization.
\item \textbf{Parameter nonstationarity/regime switching.}
Even if a $\mathrm{CMM}(p)$ holds structurally, our analysis presumes \emph{fixed} parameters $\theta$. If the conditional response dynamics switch across regimes (e.g., latent or exogenous regime changes) so that, \emph{even given} $\mathbf{Z}$, the effective parameters drift over time, the fixed-$\theta$ factorization in Assumption \ref{assp:cmm_y} breaks. A regime-switching or time-varying parameter extension would be needed; otherwise, causal validity and estimation stability can degrade.
\end{enumerate}

\paragraph{Conclusion}  
In this paper, we proposed Causal DVAE, a novel framework for estimating treatment effects in high-dimensional, time-varying settings. By leveraging variational inference with robust regularization techniques, we introduced a principled approach to infer latent adjustment variables while ensuring identifiability and mitigating covariate imbalance. Our framework demonstrated theoretical guarantees for generalization and treatment effect consistency in the near-deterministic regime of VAEs—an overlooked property in the causal inference literature. Extensive empirical evaluations support the effectiveness of CDVAE on both synthetic and semi-synthetic datasets. Future work could focus on extending this framework to:  
\begin{inparaenum}
    \item dynamic treatment regimes, for example, by incorporating the g-calculus \citep{robins1997causalNested, vansteelandt2014structural};
    \item providing generalization bounds for treatment effects when they depend on a sequence of treatments \citep{lewis2021doubledebiased, vankadara2022causal, oh2022generalizationDTR, csillag2024generalization};
    \item exploring theoretically the bias-variance trade-off in the estimation of treatment effects in the near-deterministic regime.  
\end{inparaenum}

\newpage




\bibliography{main}

@inproceedings{
jang2016categorical,
title={Categorical Reparameterization with Gumbel-Softmax},
author={Eric Jang and Shixiang Gu and Ben Poole},
booktitle={International Conference on Learning Representations},
year={2017},
url={https://openreview.net/forum?id=rkE3y85ee}
}

@article{tucker2017rebar,
  title={Rebar: Low-variance, unbiased gradient estimates for discrete latent variable models},
  author={Tucker, George and Mnih, Andriy and Maddison, Chris J and Lawson, John and Sohl-Dickstein, Jascha},
  journal={Advances in Neural Information Processing Systems},
  volume={30},
  year={2017}
}

@article{cho2014GRU,
  title={Learning phrase representations using RNN encoder-decoder for statistical machine translation},
  author={Cho, Kyunghyun and Van Merri{\"e}nboer, Bart and Gulcehre, Caglar and Bahdanau, Dzmitry and Bougares, Fethi and Schwenk, Holger and Bengio, Yoshua},
  journal={arXiv preprint arXiv:1406.1078},
  year={2014}
}

@article{huijben2022review,
  title={A review of the gumbel-max trick and its extensions for discrete stochasticity in machine learning},
  author={Huijben, Iris AM and Kool, Wouter and Paulus, Max B and Van Sloun, Ruud JG},
  journal={IEEE transactions on pattern analysis and machine intelligence},
  volume={45},
  number={2},
  pages={1353--1371},
  year={2022},
  publisher={IEEE}
}

@inproceedings{atan2018Deep-Treat,
  title={Deep-treat: Learning optimal personalized treatments from observational data using neural networks},
  author={Atan, Onur and Jordon, James and Van der Schaar, Mihaela},
  booktitle={Proceedings of the AAAI Conference on Artificial Intelligence},
  volume={32},
  year={2018}, 
}

@inproceedings{shalit2017estimating,
  title={Estimating individual treatment effect: generalization bounds and algorithms},
  author={Shalit, Uri and Johansson, Fredrik D and Sontag, David},
  booktitle={International Conference on Machine Learning},
  pages={3076--3085},
  year={2017},
  organization={PMLR}
}

@inproceedings{hassanpour2019DRLCFR,
  title={Learning disentangled representations for counterfactual regression},
  author={Hassanpour, Negar and Greiner, Russell},
  booktitle={International Conference on Learning Representations},
  year={2019}
}

@article{hair2021dataMarketing,
  title={Data, measurement, and causal inferences in machine learning: opportunities and challenges for marketing},
  author={Hair Jr, Joseph F and Sarstedt, Marko},
  journal={Journal of Marketing Theory and Practice},
  volume={29},
  number={1},
  pages={65--77},
  year={2021},
  publisher={Taylor \& Francis}
}

@book{morgan2013handbookSocial,
  title={Handbook of causal analysis for social research},
  author={Morgan, Stephen L},
  year={2013},
  publisher={Springer}
}

@article{mueller2023personalized,
  title={Personalized decision making--A conceptual introduction},
  author={Mueller, Scott and Pearl, Judea},
  journal={Journal of Causal Inference},
  volume={11},
  number={1},
  pages={20220050},
  year={2023},
  publisher={De Gruyter}
}

@inproceedings{fang2023alleviating,
  title={Alleviating Matching Bias in Marketing Recommendations},
  author={Fang, Junpeng and Cui, Qing and Zhang, Gongduo and Tang, Caizhi and Gu, Lihong and Li, Longfei and Gu, Jinjie and Zhou, Jun and Wu, Fei},
  booktitle={Proceedings of the 46th International ACM SIGIR Conference on Research and Development in Information Retrieval},
  pages={3359--3363},
  year={2023}
}

@inproceedings{johansson2016learning,
  title={Learning representations for counterfactual inference},
  author={Johansson, Fredrik and Shalit, Uri and Sontag, David},
  booktitle={International conference on machine learning},
  pages={3020--3029},
  year={2016},
  organization={PMLR}
}

@article{lu2020reconsidering,
  title={Reconsidering generative objectives for counterfactual reasoning},
  author={Lu, Danni and Tao, Chenyang and Chen, Junya and Li, Fan and Guo, Feng and Carin, Lawrence},
  journal={Advances in Neural Information Processing Systems},
  volume={33},
  pages={21539--21553},
  year={2020}
}

@article{sriperumbudur2009integral,
  title={On integral probability metrics,$\backslash$phi-divergences and binary classification},
  author={Sriperumbudur, Bharath K and Fukumizu, Kenji and Gretton, Arthur and Sch{\"o}lkopf, Bernhard and Lanckriet, Gert RG},
  journal={arXiv preprint arXiv:0901.2698},
  year={2009}
}

@article{sinkhorn1967diagonal,
  title={Diagonal equivalence to matrices with prescribed row and column sums},
  author={Sinkhorn, Richard},
  journal={The American Mathematical Monthly},
  volume={74},
  number={4},
  pages={402--405},
  year={1967},
  publisher={JSTOR}
}

@inproceedings{cuturi2014fastWass,
  title={Fast computation of Wasserstein barycenters},
  author={Cuturi, Marco and Doucet, Arnaud},
  booktitle={International conference on machine learning},
  pages={685--693},
  year={2014},
  organization={PMLR}
}

@inproceedings{
bica2020estimatingCRN,
title={Estimating counterfactual treatment outcomes over time through adversarially balanced representations},
author={Ioana Bica and Ahmed M Alaa and James Jordon and Mihaela van der Schaar},
booktitle={International Conference on Learning Representations},
year={2020},
url={https://openreview.net/forum?id=BJg866NFvB}
}

@article{jung2020learningWERM,
  title={Learning causal effects via weighted empirical risk minimization},
  author={Jung, Yonghan and Tian, Jin and Bareinboim, Elias},
  journal={Advances in neural information processing systems},
  volume={33},
  pages={12697--12709},
  year={2020}
}

@inproceedings{hassanpour2019counterfactual,
  title={CounterFactual Regression with Importance Sampling Weights.},
  author={Hassanpour, Negar and Greiner, Russell},
  booktitle={IJCAI},
  pages={5880--5887},
  year={2019}
}

@inproceedings{assaad2021counterfactual,
  title={Counterfactual representation learning with balancing weights},
  author={Assaad, Serge and Zeng, Shuxi and Tao, Chenyang and Datta, Shounak and Mehta, Nikhil and Henao, Ricardo and Li, Fan and Carin, Lawrence},
  booktitle={International Conference on Artificial Intelligence and Statistics},
  pages={1972--1980},
  year={2021},
  organization={PMLR}
}

@article{schuler2018comparison,
  title={A comparison of methods for model selection when estimating individual treatment effects},
  author={Schuler, Alejandro and Baiocchi, Michael and Tibshirani, Robert and Shah, Nigam},
  journal={arXiv preprint arXiv:1804.05146},
  year={2018}}

@article{johansson2022generalization,
  title={Generalization bounds and representation learning for estimation of potential outcomes and causal effects},
  author={Johansson, Fredrik D and Shalit, Uri and Kallus, Nathan and Sontag, David},
  journal={The Journal of Machine Learning Research},
  volume={23},
  number={1},
  pages={7489--7538},
  year={2022},
  publisher={JMLRORG}
}

@article{nie2021quasi,
  title={Quasi-oracle estimation of heterogeneous treatment effects},
  author={Nie, Xinkun and Wager, Stefan},
  journal={Biometrika},
  volume={108},
  number={2},
  pages={299--319},
  year={2021},
  publisher={Oxford University Press}
}

@inproceedings{alaa2019validating,
  title={Validating causal inference models via influence functions},
  author={Alaa, Ahmed and Van Der Schaar, Mihaela},
  booktitle={International Conference on Machine Learning},
  pages={191--201},
  year={2019},
  organization={PMLR}
}

@inproceedings{kallus2020deepmatch,
  title={Deepmatch: Balancing deep covariate representations for causal inference using adversarial training},
  author={Kallus, Nathan},
  booktitle={International Conference on Machine Learning},
  pages={5067--5077},
  year={2020},
  organization={PMLR}
}

@article{shalit2020ITPolicyClinical,
  title={Can we learn individual-level treatment policies from clinical data?},
  author={Shalit, Uri},
  journal={Biostatistics},
  volume={21},
  number={2},
  pages={359--362},
  year={2020},
  publisher={Oxford University Press}
}

@article{hill2011bayesian,
  title={Bayesian nonparametric modeling for causal inference},
  author={Hill, Jennifer L},
  journal={Journal of Computational and Graphical Statistics},
  volume={20},
  number={1},
  pages={217--240},
  year={2011},
  publisher={Taylor \& Francis}
}

@article{johnson2016mimic,
  title={MIMIC-III, a freely accessible critical care database},
  author={Johnson, Alistair EW and Pollard, Tom J and Shen, Lu and Lehman, Li-wei H and Feng, Mengling and Ghassemi, Mohammad and Moody, Benjamin and Szolovits, Peter and Anthony Celi, Leo and Mark, Roger G},
  journal={Scientific data},
  volume={3},
  number={1},
  pages={1--9},
  year={2016},
  publisher={Nature Publishing Group}
}

@article{robins2009estimation,
  title={Estimation of the causal effects of time-varying exposures},
  author={Robins, James M and Hern{\'a}n, Miguel A},
  journal={Longitudinal data analysis},
  volume={553},
  pages={599},
  year={2009},
  publisher={CRC press Boca Raton, FL}
}

@article{fu2019cyclical,
  title={Cyclical annealing schedule: A simple approach to mitigating kl vanishing},
  author={Fu, Hao and Li, Chunyuan and Liu, Xiaodong and Gao, Jianfeng and Celikyilmaz, Asli and Carin, Lawrence},
  journal={arXiv preprint arXiv:1903.10145},
  year={2019}
}

@inproceedings{Melnychuk2022CausalTF,
  title={Causal transformer for estimating counterfactual outcomes},
  author={Melnychuk, Valentyn and Frauen, Dennis and Feuerriegel, Stefan},
  booktitle={International Conference on Machine Learning},
  pages={15293--15329},
  year={2022},
  organization={PMLR}
}

@inproceedings{Li2021GNetAR,
  title={G-Net: a Recurrent Network Approach to G-Computation for Counterfactual Prediction Under a Dynamic Treatment Regime},
  author={Rui Li and Stephanie Hu and Mingyu Lu and Yuria Utsumi and Prithwish Chakraborty and Daby M. Sow and Piyush Madan and Jun Li and Mohamed F. Ghalwash and Zach Shahn and Li-wei H. Lehman},
  booktitle={ML4H@NeurIPS},
  year={2021}
}

@article{lim2018RMSM,
  title={Forecasting treatment responses over time using recurrent marginal structural networks},
  author={Lim, Bryan},
  journal={advances in neural information processing systems},
  volume={31},
  year={2018}
}

@article{robins-time-varying-exposures,
  title={Estimation of the causal effects of time-varying exposures},
  author={Robins, James M and Hern{\'a}n, Miguel A},
  journal={Longitudinal data analysis},
  volume={553},
  pages={599},
  year={2009},
  publisher={CRC press Boca Raton, FL}
}

@article{xiong2024g,
  title={G-transformer: Counterfactual outcome prediction under dynamic and time-varying treatment regimes},
  author={Xiong, Hong and Wu, Feng and Deng, Leon and Su, Megan and Shahn, Zach and Lehman, Li-wei H},
  journal={Proceedings of machine learning research},
  volume={252},
  pages={https--proceedings},
  year={2024}
}

@article{rosenbaum1983assessing,
  title={Assessing sensitivity to an unobserved binary covariate in an observational study with binary outcome},
  author={Rosenbaum, Paul R and Rubin, Donald B},
  journal={Journal of the Royal Statistical Society: Series B (Methodological)},
  volume={45},
  number={2},
  pages={212--218},
  year={1983},
  publisher={Wiley Online Library}
}

@article{funahashi1993approximation,
  title={Approximation of dynamical systems by continuous time recurrent neural networks},
  author={Funahashi, Ken-ichi and Nakamura, Yuichi},
  journal={Neural networks},
  volume={6},
  number={6},
  pages={801--806},
  year={1993},
  publisher={Elsevier}
}

@inproceedings{
miller2018stable,
title={Stable Recurrent Models},
author={John Miller and Moritz Hardt},
booktitle={International Conference on Learning Representations},
year={2019},
url={https://openreview.net/forum?id=Hygxb2CqKm},
}

@article{gonon2021fading,
  title={Fading memory echo state networks are universal},
  author={Gonon, Lukas and Ortega, Juan-Pablo},
  journal={Neural Networks},
  volume={138},
  pages={10--13},
  year={2021},
  publisher={Elsevier}
}

@inproceedings{Khandelwal2018,
    title = "Sharp Nearby, Fuzzy Far Away: How Neural Language Models Use Context",
    author = "Khandelwal, Urvashi  and
      He, He  and
      Qi, Peng  and
      Jurafsky, Dan",
    editor = "Gurevych, Iryna  and
      Miyao, Yusuke",
    booktitle = "Proceedings of the 56th Annual Meeting of the Association for Computational Linguistics (Volume 1: Long Papers)",
    month = jul,
    year = "2018",
    address = "Melbourne, Australia",
    publisher = "Association for Computational Linguistics",
    url = "https://aclanthology.org/P18-1027/",
    doi = "10.18653/v1/P18-1027",
    pages = "284--294",
    abstract = "We know very little about how neural language models (LM) use prior linguistic context. In this paper, we investigate the role of context in an LSTM LM, through ablation studies. Specifically, we analyze the increase in perplexity when prior context words are shuffled, replaced, or dropped. On two standard datasets, Penn Treebank and WikiText-2, we find that the model is capable of using about 200 tokens of context on average, but sharply distinguishes nearby context (recent 50 tokens) from the distant history. The model is highly sensitive to the order of words within the most recent sentence, but ignores word order in the long-range context (beyond 50 tokens), suggesting the distant past is modeled only as a rough semantic field or topic. We further find that the neural caching model (Grave et al., 2017b) especially helps the LSTM to copy words from within this distant context. Overall, our analysis not only provides a better understanding of how neural LMs use their context, but also sheds light on recent success from cache-based models."
}

@book{hernan_whatif_2020,
  title     = {Causal Inference: What If},
  author    = {Miguel A. Hernán and James M. Robins},
  year      = {2020},
  publisher = {Chapman \& Hall/CRC},
  url       = {https://www.hsph.harvard.edu/miguel-hernan/causal-inference-book/},
}

@article{sadowski2024characterizing,
  title={Characterizing the genetic architecture of drug response using gene-context interaction methods},
  author={Sadowski, Michal and Thompson, Mike and Mefford, Joel and Haldar, Tanushree and Oni-Orisan, Akinyemi and Border, Richard and Pazokitoroudi, Ali and Ayroles, Julien and Sankararaman, Sriram and Dahl, Andy and others},
  journal={medRxiv},
  pages={2024--03},
  year={2024},
  publisher={Cold Spring Harbor Laboratory Press}
}

@article{peter2020genetic,
  title={Genetic landscapes reveal how human genetic diversity aligns with geography},
  author={Peter, Benjamin M and Petkova, Desislava and Novembre, John},
  journal={Molecular biology and evolution},
  volume={37},
  number={4},
  pages={943--951},
  year={2020},
  publisher={Oxford University Press}
}

@article{hyun2024increased,
  title={Increased risk of type I errors for detecting heterogeneity of treatment effects in cluster-randomized trials using mixed-effect models},
  author={Hyun, Noorie and Idu, Abisola E and Cook, Andrea J and Bobb, Jennifer F},
  journal={arXiv preprint arXiv:2407.06466},
  year={2024}
}

@article{schochet2024design,
  title={Design-based RCT estimators and central limit theorems for baseline subgroup and related analyses},
  author={Schochet, Peter Z},
  journal={Journal of Causal Inference},
  volume={12},
  number={1},
  pages={20230056},
  year={2024},
  publisher={De Gruyter}
}

@misc{robins2000MSM,
  title={Marginal structural models and causal inference in epidemiology},
  author={Robins, James M and Hernan, Miguel Angel and Brumback, Babette},
  journal={Epidemiology},
  volume={11},
  number={5},
  pages={550--560},
  year={2000},
  publisher={Lww}
}

@misc{Falcon_PyTorch_Lightning_2019,
  author = {William Falcon and The PyTorch Lightning team},
  title = {PyTorch Lightning},
  year = {2019},
  howpublished = {\url{https://github.com/PyTorchLightning/pytorch-lightning}},
  note = {Version 1.0},
}

@article{higgins2017betavae,
  title={beta-vae: Learning basic visual concepts with a constrained variational framework.},
  author={Higgins, Irina and Matthey, Loic and Pal, Arka and Burgess, Christopher P and Glorot, Xavier and Botvinick, Matthew M and Mohamed, Shakir and Lerchner, Alexander},
  journal={ICLR (Poster)},
  volume={3},
  year={2017}
}

@inproceedings{robins1997causalNested,
  title={Causal inference from complex longitudinal data},
  author={Robins, James M},
  booktitle={Latent variable modeling and applications to causality},
  pages={69--117},
  year={1997},
  organization={Springer}
}

@article{vansteelandt2014structural,
author = {Stijn Vansteelandt and Marshall Joffe},
title = {{Structural Nested Models and G-estimation: The Partially Realized Promise}},
volume = {29},
journal = {Statistical Science},
number = {4},
publisher = {Institute of Mathematical Statistics},
pages = {707 -- 731},
keywords = {causal effect, confounding, direct effect, instrumental variable, mediation, time-varying confounding},
year = {2014},
doi = {10.1214/14-STS493},
URL = {https://doi.org/10.1214/14-STS493}
}

@book{causainf_whatIf,
  title={Causal Inference: What If},
  author={Hernán MA, Robins JM},
  year={2020},
  publisher={Boca Raton: Chapman \& Hall/CRC}}

@book{Imbens2015CausalIF,
  title={Causal inference in statistics, social, and biomedical sciences},
  author={Imbens, Guido W and Rubin, Donald B},
  year={2015},
  publisher={Cambridge University Press}
}

@book{laird2011fundamentalsgenetics,
  title={The fundamentals of modern statistical genetics},
  author={Laird, Nan M and Lange, Christoph},
  year={2011},
  publisher={Springer}
}

@article{zubizarreta2015stable,
  title={Stable weights that balance covariates for estimation with incomplete outcome data},
  author={Zubizarreta, Jos{\'e} R},
  journal={Journal of the American Statistical Association},
  volume={110},
  number={511},
  pages={910--922},
  year={2015},
  publisher={Taylor \& Francis}
}

@article{johansson2018learningweighted,
  title={Learning weighted representations for generalization across designs},
  author={Johansson, Fredrik D and Kallus, Nathan and Shalit, Uri and Sontag, David},
  journal={arXiv preprint arXiv:1802.08598},
  year={2018}
}

@inproceedings{johansson2019support,
  title={Support and invertibility in domain-invariant representations},
  author={Johansson, Fredrik D and Sontag, David and Ranganath, Rajesh},
  booktitle={The 22nd International Conference on Artificial Intelligence and Statistics},
  pages={527--536},
  year={2019},
  organization={PMLR}
}

@InProceedings{hatt2024sequential,
  title = 	 {Sequential Deconfounding for Causal Inference with Unobserved Confounders},
  author =       {Hatt, Tobias and Feuerriegel, Stefan},
  booktitle = 	 {Proceedings of the Third Conference on Causal Learning and Reasoning},
  pages = 	 {934--956},
  year = 	 {2024},
  editor = 	 {Locatello, Francesco and Didelez, Vanessa},
  volume = 	 {236},
  series = 	 {Proceedings of Machine Learning Research},
  month = 	 {01--03 Apr},
  publisher =    {PMLR},
  pdf = 	 {https://proceedings.mlr.press/v236/hatt24a/hatt24a.pdf},
  url = 	 {https://proceedings.mlr.press/v236/hatt24a.html},
  abstract = 	 {Observational data is often used to estimate the effect of a treatment when randomized experiments are infeasible or costly. However, observational data often yields biased estimates of treatment effects, since treatment assignment can be confounded by unobserved variables. A remedy is offered by deconfounding methods that adjust for such unobserved confounders. In this paper, we develop the Sequential Deconfounder, a method that enables estimating individualized treatment effects over time in presence of unobserved confounders. This is the first deconfounding method that can be used with a single treatment assigned at each timestep. The Sequential Deconfounder uses a novel Gaussian process latent variable model to infer substitutes for the unobserved confounders, which are then used in conjunction with an outcome model to estimate treatment effects over time. We prove that using our method yields unbiased estimates of individualized treatment responses over time. Using simulated and real medical data, we demonstrate the efficacy of our method in deconfounding the estimation of treatment responses over time.}
}

@inproceedings{Bica2020TimeSDecounf,
  title={Time Series Deconfounder: Estimating Treatment Effects over Time in the Presence of Hidden Confounders},
  author={Ioana Bica and Ahmed M. Alaa and Mihaela van der Schaar},
  booktitle={International Conference on Machine Learning},
  year={2020}
}

@article{Kallenberg2021FoundationsOM,
  title={Foundations of Modern Probability},
  author={Olav Kallenberg},
  journal={Probability Theory and Stochastic Modelling},
  year={2021}
}

@inproceedings{zhang2020learning,
  title={Learning overlapping representations for the estimation of individualized treatment effects},
  author={Zhang, Yao and Bellot, Alexis and Schaar, Mihaela},
  booktitle={International Conference on Artificial Intelligence and Statistics},
  pages={1005--1014},
  year={2020},
  organization={PMLR}
}

@article{Girin2021DynamicalVAReview,
  title={Dynamical Variational Autoencoders: A Comprehensive Review},
  author={Laurent Girin and Simon Leglaive and Xiaoyu Bie and Julien Diard and Thomas Hueber and Xavier Alameda-Pineda},
  journal={ArXiv},
  year={2021},
  volume={abs/2008.12595}
}

@article{li2018balancing,
  title={Balancing covariates via propensity score weighting},
  author={Li, Fan and Morgan, Kari Lock and Zaslavsky, Alan M},
  journal={Journal of the American Statistical Association},
  volume={113},
  number={521},
  pages={390--400},
  year={2018},
  publisher={Taylor \& Francis}
}

@inproceedings{Paszke2019PyTorchAI,
  title={PyTorch: An Imperative Style, High-Performance Deep Learning Library},
  author={Adam Paszke and Sam Gross and Francisco Massa and Adam Lerer and James Bradbury and Gregory Chanan and Trevor Killeen and Zeming Lin and Natalia Gimelshein and Luca Antiga and Alban Desmaison and Andreas K{\"o}pf and Edward Yang and Zach DeVito and Martin Raison and Alykhan Tejani and Sasank Chilamkurthy and Benoit Steiner and Lu Fang and Junjie Bai and Soumith Chintala},
  booktitle={Neural Information Processing Systems},
  year={2019}
}

@article{schisterman2009overadjustment,
  title={Overadjustment bias and unnecessary adjustment in epidemiologic studies},
  author={Schisterman, Enrique F and Cole, Stephen R and Platt, Robert W},
  journal={Epidemiology (Cambridge, Mass.)},
  volume={20},
  number={4},
  pages={488},
  year={2009},
  publisher={NIH Public Access}
}

@article{choi2016retain,
  title={Retain: An interpretable predictive model for healthcare using reverse time attention mechanism},
  author={Choi, Edward and Bahadori, Mohammad Taha and Sun, Jimeng and Kulas, Joshua and Schuetz, Andy and Stewart, Walter},
  journal={Advances in neural information processing systems},
  volume={29},
  year={2016}
}

@article{pham2017predicting,
  title={Predicting healthcare trajectories from medical records: A deep learning approach},
  author={Pham, Trang and Tran, Truyen and Phung, Dinh and Venkatesh, Svetha},
  journal={Journal of biomedical informatics},
  volume={69},
  pages={218--229},
  year={2017},
  publisher={Elsevier}
}

@article{wang2019blessings,
  title={The blessings of multiple causes},
  author={Wang, Yixin and Blei, David M},
  journal={Journal of the American Statistical Association},
  volume={114},
  number={528},
  pages={1574--1596},
  year={2019},
  publisher={Taylor \& Francis}
}

@article{gretton2012kernel,
  title={A kernel two-sample test},
  author={Gretton, Arthur and Borgwardt, Karsten M and Rasch, Malte J and Sch{\"o}lkopf, Bernhard and Smola, Alexander},
  journal={The Journal of Machine Learning Research},
  volume={13},
  number={1},
  pages={723--773},
  year={2012},
  publisher={JMLR. org}
}

@article{lopezmultipletreatments,
  title={Estimation of causal effects with multiple treatments: a review and new ideas},
  author={Lopez, Michael J and Gutman, Roee},
  journal={Statistical Science},
  pages={432--454},
  year={2017},
  publisher={JSTOR}
}

@article{ranganath2018multiple,
  title={Multiple causal inference with latent confounding},
  author={Ranganath, Rajesh and Perotte, Adler},
  journal={arXiv preprint arXiv:1805.08273},
  year={2018}
}

@inproceedings{zhang2019medicalDeconf,
  title={The medical deconfounder: assessing treatment effects with electronic health records},
  author={Zhang, Linying and Wang, Yixin and Ostropolets, Anna and Mulgrave, Jami J and Blei, David M and Hripcsak, George},
  booktitle={Machine Learning for Healthcare Conference},
  pages={490--512},
  year={2019},
  organization={PMLR}
}

@article{CommentBlessings,
title = {Comment on “Blessings of Multiple Causes”},
year = {2019},
url = {https://www.semanticscholar.org/paper/c0b42f69576480a86f4ca84d0a6ae5c3958facf6},
abstract = {(This comment has been updated to respond to Wang and Blei's rejoinder [arXiv:1910.07320].) 
The premise of the deconfounder method proposed in "Blessings of Multiple Causes" by Wang and Blei [arXiv:1805.06826], namely that a variable that renders multiple causes conditionally independent also controls for unmeasured multi-cause confounding, is incorrect. This can be seen by noting that no fact about the observed data alone can be informative about ignorability, since ignorability is compatible with any observed data distribution. Methods to control for unmeasured confounding may be valid with additional assumptions in specific settings, but they cannot, in general, provide a checkable approach to causal inference, and they do not, in general, require weaker assumptions than the assumptions that are commonly used for causal inference. While this is outside the scope of this comment, we note that much recent work on applying ideas from latent variable modeling to causal inference problems suffers from similar issues.},
author = {Elizabeth L. Ogburn and I. Shpitser and E. Tchetgen},
journal = {Journal of the American Statistical Association},
volume = {114},
pages = {1611 - 1615},
doi = {10.1080/01621459.2019.1689139},
arxivid = {1910.05438},
}

@article{wang2019blessingsreply,
  title={The blessings of multiple causes: A reply to ogburn et al.(2019)},
  author={Wang, Yixin and Blei, David M},
  journal={arXiv preprint arXiv:1910.07320},
  year={2019}
}

@article{d2019comment,
  title={Comment: Reflections on the deconfounder},
  author={D’Amour, Alexander},
  journal={Journal of the American Statistical Association},
  volume={114},
  number={528},
  pages={1597--1601},
  year={2019},
  publisher={Taylor \& Francis}
}

@article{imai2019comment,
  title={Comment: The challenges of multiple causes},
  author={Imai, Kosuke and Jiang, Zhichao},
  journal={Journal of the American Statistical Association},
  volume={114},
  number={528},
  pages={1605--1610},
  year={2019},
  publisher={Taylor \& Francis}
}

@article{MiaoProxyUnmeasuredConf2016,
title = {Identifying Causal Effects With Proxy Variables of an Unmeasured Confounder.},
year = {2016},
url = {https://www.semanticscholar.org/paper/09341c95b87b2ab7c9a8a063b786f2c396687a10},
abstract = {We consider a causal effect that is confounded by an unobserved variable, but with observed proxy variables of the confounder. We show that, with at least two independent proxy variables satisfying a certain rank condition, the causal effect is nonparametrically identified, even if the measurement error mechanism, i.e., the conditional distribution of the proxies given the confounder, may not be identified. Our result generalizes the identification strategy of Kuroki & Pearl (2014) that rests on identification of the measurement error mechanism. When only one proxy for the confounder is available, or the required rank condition is not met, we develop a strategy to test the null hypothesis of no causal effect.},
author = {W. Miao and Z. Geng and E. T. Tchetgen Tchetgen},
journal = {Biometrika},
volume = {105 4},
pages = {
          987-993
        },
doi = {10.1093/BIOMET/ASY038},
pmid = {33343006},
arxivid = {1609.08816},
}

@article{LuChengCMAproxynConf20121,
title = {Causal Mediation Analysis with Hidden Confounders},
year = {2021},
url = {https://www.semanticscholar.org/paper/81209a11b05b4bd3ca254d6b0519a274d1f79723},
abstract = {An important problem in causal inference is to break down the total effect of a treatment on an outcome into different causal pathways and to quantify the causal effect in each pathway. For instance, in causal fairness, the total effect of being a male employee (i.e., treatment) constitutes its direct effect on annual income (i.e., outcome) and the indirect effect via the employee's occupation (i.e., mediator). Causal mediation analysis (CMA) is a formal statistical framework commonly used to reveal such underlying causal mechanisms. One major challenge of CMA in observational studies is handling confounders, variables that cause spurious causal relationships among treatment, mediator, and outcome. Conventional methods assume sequential ignorability that implies all confounders can be measured, which is often unverifiable in practice. This work aims to circumvent the stringent sequential ignorability assumptions and consider hidden confounders. Drawing upon proxy strategies and recent advances in deep learning, we propose to simultaneously uncover the latent variables that characterize hidden confounders and estimate the causal effects. Empirical evaluations using both synthetic and semi-synthetic datasets validate the effectiveness of the proposed method. We further show the potentials of our approach for causal fairness analysis.},
author = {Lu Cheng and Ruocheng Guo and Huan Liu},
journal = {Proceedings of the Fifteenth ACM International Conference on Web Search and Data Mining},
volume = {null},
pages = {null},
doi = {10.1145/3488560.3498407},
arxivid = {2102.11724},
}

@article{shi2019Dragonnet,
  title={Adapting neural networks for the estimation of treatment effects},
  author={Shi, Claudia and Blei, David and Veitch, Victor},
  journal={Advances in neural information processing systems},
  volume={32},
  year={2019}
}

@article{muller1997IPM,
  title={Integral probability metrics and their generating classes of functions},
  author={M{\"u}ller, Alfred},
  journal={Advances in applied probability},
  volume={29},
  number={2},
  pages={429--443},
  year={1997},
  publisher={Cambridge University Press}
}

@article{louizos2017causalproxyconf,
  title={Causal effect inference with deep latent-variable models},
  author={Louizos, Christos and Shalit, Uri and Mooij, Joris M and Sontag, David and Zemel, Richard and Welling, Max},
  journal={Advances in neural information processing systems},
  volume={30},
  year={2017}
}

@article{MeasbiasproxyConf,
title = {Measurement bias and effect restoration in causal inference},
year = {2014},
url = {https://www.semanticscholar.org/paper/19dabe9fdbe82d74a584f5ece882f5825e4effdb},
abstract = {This paper highlights several areas where graphical techniques can be harnessed to address the problem of measurement errors in causal inference. In particular, it discusses the control of unmeasured confounders in parametric and nonparametric models and the computational problem of obtaining bias-free effect estimates in such models. We derive new conditions under which causal effects can be restored by observing proxy variables of unmeasured confounders with/without external studies.},
author = {Manabu Kuroki and J. Pearl},
journal = {Biometrika},
volume = {101},
pages = {423-437},
doi = {10.1093/BIOMET/AST066},
}

@article{platt2009time,
  title={Time-modified confounding},
  author={Platt, Robert W and Schisterman, Enrique F and Cole, Stephen R},
  journal={American journal of epidemiology},
  volume={170},
  number={6},
  pages={687--694},
  year={2009},
  publisher={Oxford University Press}
}

@article{sonderby2016ladder,
  title={Ladder variational autoencoders},
  author={S{\o}nderby, Casper Kaae and Raiko, Tapani and Maal{\o}e, Lars and S{\o}nderby, S{\o}ren Kaae and Winther, Ole},
  journal={Advances in neural information processing systems},
  volume={29},
  year={2016}
}

@article{falck2021multi,
  title={Multi-facet clustering variational autoencoders},
  author={Falck, Fabian and Zhang, Haoting and Willetts, Matthew and Nicholson, George and Yau, Christopher and Holmes, Chris C},
  journal={Advances in Neural Information Processing Systems},
  volume={34},
  pages={8676--8690},
  year={2021}
}

@inproceedings{Vade,
author = {Jiang, Zhuxi and Zheng, Yin and Tan, Huachun and Tang, Bangsheng and Zhou, Hanning},
title = {Variational deep embedding: an unsupervised and generative approach to clustering},
year = {2017},
isbn = {9780999241103},
publisher = {AAAI Press},
abstract = {Clustering is among the most fundamental tasks in machine learning and artificial intelligence. In this paper, we propose Variational Deep Embedding (VaDE), a novel unsupervised generative clustering approach within the framework of Variational Auto-Encoder (VAE). Specifically, VaDE models the data generative procedure with a Gaussian Mixture Model (GMM) and a deep neural network (DNN): 1) the GMM picks a cluster; 2) from which a latent embedding is generated; 3) then the DNN decodes the latent embedding into an observable. Inference in VaDE is done in a variational way: a different DNN is used to encode observables to latent embeddings, so that the evidence lower bound (ELBO) can be optimized using the Stochastic Gradient Variational Bayes (SGVB) estimator and the reparameterization trick. Quantitative comparisons with strong baselines are included in this paper, and experimental results show that VaDE significantly outperforms the state-of-the-art clustering methods on 5 benchmarks from various modalities. Moreover, by VaDE's generative nature, we show its capability of generating highly realistic samples for any specified cluster, without using supervised information during training.},
booktitle = {Proceedings of the 26th International Joint Conference on Artificial Intelligence},
pages = {1965–1972},
numpages = {8},
location = {Melbourne, Australia},
series = {IJCAI'17}
}

@inproceedings{
dai2018diagnosingvae,
title={Diagnosing and Enhancing {VAE} Models},
author={Bin Dai and David Wipf},
booktitle={International Conference on Learning Representations},
year={2019},
url={https://openreview.net/forum?id=B1e0X3C9tQ},
}

@article{takida2022preventingoversmoothing,
  title={Preventing oversmoothing in VAE via generalized variance parameterization},
  author={Takida, Yuhta and Liao, Wei-Hsiang and Lai, Chieh-Hsin and Uesaka, Toshimitsu and Takahashi, Shusuke and Mitsufuji, Yuki},
  journal={Neurocomputing},
  volume={509},
  pages={137--156},
  year={2022},
  publisher={Elsevier}
}

@article{lucas2019dontblameElbo,
  title={Don't blame the elbo! a linear vae perspective on posterior collapse},
  author={Lucas, James and Tucker, George and Grosse, Roger B and Norouzi, Mohammad},
  journal={Advances in Neural Information Processing Systems},
  volume={32},
  year={2019}
}

@article{wang2021PosteriorNonIdentif,
  title={Posterior collapse and latent variable non-identifiability},
  author={Wang, Yixin and Blei, David and Cunningham, John P},
  journal={Advances in neural information processing systems},
  volume={34},
  pages={5443--5455},
  year={2021}
}

@article{bowman2015generating,
  title={Generating sentences from a continuous space},
  author={Bowman, Samuel R and Vilnis, Luke and Vinyals, Oriol and Dai, Andrew M and Jozefowicz, Rafal and Bengio, Samy},
  journal={arXiv preprint arXiv:1511.06349},
  year={2015}
}

@article{hyvarinen1999nonlinear,
  title={Nonlinear independent component analysis: Existence and uniqueness results},
  author={Hyv{\"a}rinen, Aapo and Pajunen, Petteri},
  journal={Neural networks},
  volume={12},
  number={3},
  pages={429--439},
  year={1999},
  publisher={Elsevier}
}

@inproceedings{
bouchattaoui2024CausalCPC,
title={Causal Contrastive Learning for Counterfactual Regression Over Time},
author={Mouad El Bouchattaoui and Myriam Tami and BENOIT LEPETIT and Paul-Henry Courn{\`e}de},
booktitle={The Thirty-eighth Annual Conference on Neural Information Processing Systems},
year={2024},
url={https://openreview.net/forum?id=bKOZYBJE4Z}
}

@article{Petersen2008MatCookbook,
  title={The matrix cookbook},
  author={Petersen, Kaare Brandt and Pedersen, Michael Syskind and others},
  journal={Technical University of Denmark},
  volume={7},
  number={15},
  pages={510},
  year={2008}
}

@article{RubinPostPredC1984,
author = {Donald B. Rubin},
title = {{Bayesianly Justifiable and Relevant Frequency Calculations for the Applied Statistician}},
volume = {12},
journal = {The Annals of Statistics},
number = {4},
publisher = {Institute of Mathematical Statistics},
pages = {1151 -- 1172},
keywords = {62-07, Calibration, Empirical Bayes, inference, model monitoring, operating characteristics, posterior predictive checks, Stopping rules},
year = {1984},
doi = {10.1214/aos/1176346785},
URL = {https://doi.org/10.1214/aos/1176346785}
}

@book{gelman1995bayesianDA,
  title={Bayesian data analysis},
  author={Gelman, Andrew and Carlin, John B and Stern, Hal S and Rubin, Donald B},
  year={1995},
  publisher={Chapman and Hall/CRC}
}

@article{meng1994posterior,
  title={Posterior predictive $ p $-values},
  author={Meng, Xiao-Li},
  journal={The annals of statistics},
  volume={22},
  number={3},
  pages={1142--1160},
  year={1994},
  publisher={Institute of Mathematical Statistics}
}

@article{schulam2017reliable,
  title={Reliable decision support using counterfactual models},
  author={Schulam, Peter and Saria, Suchi},
  journal={Advances in neural information processing systems},
  volume={30},
  year={2017}
}

@inproceedings{
csillag2024generalization,
title={Generalization Bounds for Causal Regression: Insights, Guarantees and Sensitivity Analysis},
author={Daniel Csillag and Claudio Jose Struchiner and Guilherme Tegoni Goedert},
booktitle={Forty-first International Conference on Machine Learning},
year={2024},
url={https://openreview.net/forum?id=TejqrQBvll}
}

@inproceedings{vankadara2022causal,
  title={Causal forecasting: generalization bounds for autoregressive models},
  author={Vankadara, Leena Chennuru and Faller, Philipp Michael and Hardt, Michaela and Minorics, Lenon and Ghoshdastidar, Debarghya and Janzing, Dominik},
  booktitle={Uncertainty in Artificial Intelligence},
  pages={2002--2012},
  year={2022},
  organization={PMLR}
}

@article{oh2022generalizationDTR,
  title={Generalization error bounds of dynamic treatment regimes in penalized regression-based learning},
  author={Oh, Eun Jeong and Qian, Min and Cheung, Ying Kuen},
  journal={The Annals of Statistics},
  volume={50},
  number={4},
  pages={2047--2071},
  year={2022},
  publisher={Institute of Mathematical Statistics}
}

@inproceedings{
lewis2021doubledebiased,
title={Double/Debiased Machine Learning for Dynamic Treatment Effects},
author={Greg Lewis and Vasilis Syrgkanis},
booktitle={Advances in Neural Information Processing Systems},
editor={A. Beygelzimer and Y. Dauphin and P. Liang and J. Wortman Vaughan},
year={2021},
url={https://openreview.net/forum?id=StKuQ0-dltN}
}

@article{shimodaira2000WeightedLL,
  title={Improving predictive inference under covariate shift by weighting the log-likelihood function},
  author={Shimodaira, Hidetoshi},
  journal={Journal of statistical planning and inference},
  volume={90},
  number={2},
  pages={227--244},
  year={2000},
  publisher={Elsevier}
}

@article{mcclain2008warfarin,
  title={A rapid-ACCE review of CYP2C9 and VKORC1 alleles testing to inform warfarin dosing in adults at elevated risk for thrombotic events to avoid serious bleeding},
  author={McClain, Monica R and Palomaki, Glenn E and Piper, Margaret and Haddow, James E},
  journal={Genetics in Medicine},
  volume={10},
  number={2},
  pages={89--98},
  year={2008},
  publisher={Nature Publishing Group}
}

@article{Ute2008Warfarin,
author = {Ute I. Schwarz  and Marylyn D. Ritchie  and Yuki Bradford  and Chun Li  and Scott M. Dudek  and Amy Frye-Anderson  and Richard B. Kim  and Dan M. Roden  and C. Michael Stein },
title = {Genetic Determinants of Response to Warfarin during Initial Anticoagulation},
journal = {New England Journal of Medicine},
volume = {358},
number = {10},
pages = {999-1008},
year = {2008},
doi = {10.1056/NEJMoa0708078},
URL = {https://www.nejm.org/doi/full/10.1056/NEJMoa0708078},
eprint = {https://www.nejm.org/doi/pdf/10.1056/NEJMoa0708078}
}

@article{HowardBaselinesCovRCTs2007,
author = {Howard S. Bloom and Lashawn Richburg-Hayes and Alison Rebeck Black},
title ={Using Covariates to Improve Precision for Studies That Randomize Schools to Evaluate Educational Interventions},

journal = {Educational Evaluation and Policy Analysis},
volume = {29},
number = {1},
pages = {30-59},
year = {2007},
doi = {10.3102/0162373707299550},

URL = { 
    
        https://doi.org/10.3102/0162373707299550
    
    

},
eprint = { 
    
        https://doi.org/10.3102/0162373707299550
    
}

}

@article{steingrimsson2017improving,
  title={Improving precision by adjusting for prognostic baseline variables in randomized trials with binary outcomes, without regression model assumptions},
  author={Steingrimsson, Jon Arni and Hanley, Daniel F and Rosenblum, Michael},
  journal={Contemporary clinical trials},
  volume={54},
  pages={18--24},
  year={2017},
  publisher={Elsevier}
}

@article{benkeser2021improving,
  title={Improving precision and power in randomized trials for COVID-19 treatments using covariate adjustment, for binary, ordinal, and time-to-event outcomes},
  author={Benkeser, David and D{\'\i}az, Iv{\'a}n and Luedtke, Alex and Segal, Jodi and Scharfstein, Daniel and Rosenblum, Michael},
  journal={Biometrics},
  volume={77},
  number={4},
  pages={1467--1481},
  year={2021},
  publisher={Oxford University Press}
}

@inproceedings{deng2013improving,
  title={Improving the sensitivity of online controlled experiments by utilizing pre-experiment data},
  author={Deng, Alex and Xu, Ya and Kohavi, Ron and Walker, Toby},
  booktitle={Proceedings of the sixth ACM international conference on Web search and data mining},
  pages={123--132},
  year={2013}
}
\bibliographystyle{tmlr}

\appendix

\section{Proofs}
\label{appendix:proofs}

\subsection{Identifiability: Treatment Effects and Unobserved Adjustment Variables}
\label{appendix:proof_effect_Z}
\begin{proof}[\textbf{CATE Identifiability}]
Assuming the consistency, overlap, and ignorability assumptions (\ref{assp:consistency}, \ref{assp:overlap}, \ref{assp:ignorability}), we demonstrate that the CATE, as defined in Eq. \ref{eq:CATE_Identifiability}, is identifiable from the observed data distribution:
\begin{equation}
\tau_t = \mathbb{E}(Y_{t}| \mathbf{H}_t = \mathbf{h}_t, W_{t} = 1) - \mathbb{E}(Y_{t}| \mathbf{H}_t = \mathbf{h}_t, W_{t} = 0).
\end{equation}

Establishing the identifiability of CATE requires showing that the two potential outcome expectations are identifiable. By the ignorability assumption (\ref{assp:ignorability}), the potential outcome under treatment can be expressed as:
\[
m_{t}^{1}(\mathbf{h}_{t}) = \mathbb{E}_{Y_{t}(1) \mid \mathbf{H}_{t} }(Y_{t}(1) \mid \mathbf{H}_{t} = \mathbf{h}_{t}) = \mathbb{E}_{Y_{t}(1) \mid \mathbf{H}_{t}, W_t }(Y_{t}(1) \mid \mathbf{H}_{t} = \mathbf{h}_{t}, W_t = 1).
\]
Using the consistency assumption (\ref{assp:consistency}), the observed response can identify \( Y_{t}(1) \) when conditioned on \( W_t = 1 \):
\[
m_{t}^{1}(\mathbf{h}_{t}) = \mathbb{E}_{Y_{t} \mid \mathbf{H}_{t}, W_t }(Y_{t}\mid \mathbf{H}_{t} = \mathbf{h}_{t}, W_t = 1).
\]
Similarly, we identify the expected potential outcome under no treatment:
\[
m_{t}^{0}(\mathbf{h}_{t}) = \mathbb{E}_{Y_{t} \mid \mathbf{H}_{t}, W_t }(Y_{t}\mid \mathbf{H}_{t} = \mathbf{h}_{t}, W_t = 0).
\]
The existence of these expectations is guaranteed by the overlap assumption.
\end{proof}

\begin{proof}[\textbf{Augmented CATE Identifiability}]
\label{proof:acate_identif}
Assuming that the adjustment variables \(\mathbf{U}\) are observed, we demonstrate the identifiability of the Augmented CATE:
\[
\tau_{t}(\mathbf{h}_{t}, \mathbf{u})= \mathbb{E}(Y_{t}| \mathbf{H}_t = \mathbf{h}_t, \mathbf{U}= \mathbf{u}, W_{t} = 1) - \mathbb{E}(Y_{t}| \mathbf{H}_t = \mathbf{h}_t, \mathbf{U}= \mathbf{u}, W_{t} = 0).
\]
The assumptions of consistency, overlap, and ignorability ensure that CATE is identifiable. Since \(\mathbf{U}\) does not affect the treatment, the overlap assumption (\ref{assp:overlap}) remains valid. Further, \(\mathbf{U}\) being independent of the treatment implies that the ignorability assumption holds when conditioning on \(\mathbf{U}\):
\[
Y_{it}(\omega) \indep W_{t}| \mathbf{H}_{t} = \mathbf{h}_{t} \implies Y_{it}(\omega) \indep W_{t}| \mathbf{H}_{t} = \mathbf{h}_{t},\mathbf{U}= \mathbf{u} \quad \forall (\omega, \mathbf{h}_{t}, \mathbf{u}).
\]
The remainder of the proof follows the CATE identifiability argument.
\end{proof}

\begin{proof}[\textbf{Theorem \ref{thm:valid_Z}}]
\label{assp:regularity_y_space}
Let \(\mathbf{Z}\) be a latent variable such that \(\mathbf{Z} \sim CMM(p)\). Any static adjustment variables affecting all response series in the panel must be measurable with respect to \((\mathbf{Z}, \mathbf{H}_T)\). We assume weak regularity conditions on treatment and response domains.
\begin{assumption}[Regularity]
The response domain \(\mathcal{Y}\) is a Borel subset of a compact interval.
\end{assumption}
The treatment domain \(\mathcal{W} = \{0,1\}\) is a Borel subset of \([0,1]\). To prove the theorem, we need the following lemma:
\begin{lemma}[Kernels and Randomization \citep{Kallenberg2021FoundationsOM}]
\label{lem: kernel_RD}
Let \(\mu\) be a probability kernel from a measurable space \(S_1\) to a Borel space \(S_2\). There exists a measurable function \(f: S_1 \times [0,1] \rightarrow S_2\) such that if \(\vartheta\) is uniform on \([0,1]\), then \(f(s_1, \vartheta)\) is distributed as \(\mu(s_1, \cdot)\).
\end{lemma}
Suppose by contradiction the existence of $\mathbf{Z}\prime$ that is not measurable with respect to $(\mathbf{Z}, \mathbf{H}_T)$ and such that: 
\begin{equation*}
    Y_{it}(\omega) \notindep \mathbf{Z_i}\prime| \mathbf{H}_{it}, \mathbf{Z}_i \quad \forall \omega, t
\end{equation*}
Let $t$ be an arbitrary time step in the panel data. By lemma \ref{lem: kernel_RD}, there exists a measurable function  $f_t: \mathcal{H}_t \times \mathcal{W} \times \mathcal{Z} \times [0,1] \rightarrow \mathcal{Y}$ such that: 
\begin{equation*}
    Y_{it} = f_t(\mathbf{H}_{it}, W_{it}, \mathbf{Z}_i, \gamma_{it}), \quad \gamma_{it} \indep (\mathbf{H}_{it}, W_{it}, \mathbf{Z}_i).
\end{equation*}
The conditional Markov property implies the independence of the following conditional distributions: 
\begin{equation*}
    (Y_{it}\mid \mathbf{H}_{it}, \mathbf{Z}_i, W_{it} = \omega) \indep  (Y_{it\prime}\mid \mathbf{H}_{it\prime}, \mathbf{Z}_i, W_{it\prime} = \omega\prime).
\end{equation*}
Such that  $t\prime$ verifies $|t - t\prime| > p$. We can thus conclude that: 
\begin{equation*}
    (Y_{it}(\omega) \mid \mathbf{H}_{it}, \mathbf{Z}_i) \indep  (Y_{it\prime}(\omega\prime) \mid \mathbf{H}_{it\prime}, \mathbf{Z}_i).
\end{equation*}
Because : 
\begin{equation*}
\begin{aligned}
    (Y_{it}(\omega) \mid \mathbf{H}_{it}, \mathbf{Z}_i) &= (Y_{it}(\omega) \mid \mathbf{H}_{it}, \mathbf{Z}_i, W_{it} = \omega)     \\
 &= (Y_{it} \mid \mathbf{H}_{it}, \mathbf{Z}_i, W_{it} = \omega).\\
\end{aligned}
\end{equation*}
The first equality follows from the sequential ignorability of the theorem \ref{thm:seq_ign_augmented}, and the second equality follows the consistency assumption.
 On the other hand, we also have from the CMM(p) that: 
 \begin{equation*}
     \gamma_{it\prime} \indep Y_{it} \mid \mathbf{H}_{it}, \mathbf{Z}_i.
 \end{equation*}
From which it follows using the fact that $\gamma_{it} \indep (\mathbf{H}_{it}, W_{it}, \mathbf{Z}_i)$ and $Y_{it}(\omega) \mid \mathbf{H}_{it}, \mathbf{Z}_i = Y_{it} \mid \mathbf{H}_{it}, \mathbf{Z}_i, W_{it} = \omega $: 
\begin{equation*}
    \gamma_{it\prime} \indep Y_{it}(\omega) \mid \mathbf{H}_{it}, \mathbf{Z}_i
\end{equation*}
By using twice the lemma \ref{lem: kernel_RD}, we write: 
\begin{equation}
    \gamma_{it\prime} = h_t(\mathbf{Z}\prime_i, \eta_{it\prime}), \quad \eta_{it\prime} \indep  \mathbf{Z}\prime_i
\label{eq:gamma}
\end{equation}
And: 
\begin{equation}
    Y_{it}(\omega) = g_t(\mathbf{Z}\prime_i, \epsilon_{it}), \quad \epsilon_{it} \indep  \mathbf{Z}\prime_i.
\label{eq:y_pot}
\end{equation}
Since $ \mathbf{Z}\prime_i$ is not measurable with respect to $( \mathbf{Z}_i, \mathbf{H}_{it})$, then by \eqref{eq:gamma} and \eqref{eq:y_pot}: 
\begin{equation*}
    \gamma_{it\prime} \notindep Y_{it} \mid \mathbf{H}_{it}, \mathbf{Z}_i
\end{equation*}
We have thus a contradiction.
\end{proof}

\subsection{Derivation of CDVAE Loss}
\label{appendix:proof_elbo}
\begin{proof}[\textbf{ELBO}]
We provide proof that the bound in Eq. (\ref{eq:elbo_orig}) is indeed the Evidence Lower Bound (ELBO) for the weighted conditional log-likelihood.

By the concavity of the logarithm, for every \(t \in \{1, 2, \dots, T\}\), we have:
\begin{equation}
\label{eq:elbo_step}
\log p_\theta\left(y_{t} \mid \mathbf{h}_{t}, \omega_t\right) 
\geq 
\underbrace{\mathbb{E}_{\mathbf{Z}, C \sim q_{\phi}(\cdot , \cdot \mid \mathcal{D}_T)}  
\left[  
\log \frac{p_\theta\left(y_{t}, \mathbf{Z}, C\mid \mathbf{h}_{t}, \omega_{t}\right)}{q_{\phi}(\mathbf{Z}, C \mid \mathcal{D}_T)} 
\right]}_{(*)}
\end{equation}

We use the identity \(p_\theta(y_{t}, \mathbf{z}, c \mid \mathbf{h}_{t}, \omega_{t}) = p_\theta(y_{t} \mid \mathbf{z}, \mathbf{h}_{t}, \omega_{t})p(\mathbf{z},c)\), and the factorization of the approximate posterior \(q_{\phi}(\mathbf{z}, c \mid \mathcal{D}_T) = q_{\phi_z}(\mathbf{z} \mid \mathcal{D}_T) q_{\phi_c}(c \mid \mathcal{D}_T)\), to write:

\begin{align*}
(*) &= 
\mathbb{E}_{\mathbf{Z}, C \sim q_{\phi}(\cdot , \cdot \mid \mathcal{D}_T)} 
\left[  
    \log \frac{
        p_\theta(y_{t} \mid \mathbf{Z}, \mathbf{h}_{t}, \omega_{t}) \cdot p(C \mid \mathbf{Z})p(\mathbf{Z})
    }{
        q_{\phi_z}(\mathbf{Z} \mid \mathcal{D}_T) \cdot q_{\phi_c}(C \mid \mathcal{D}_T)
    } 
\right] \\
&= \mathbb{E}_{\mathbf{Z}, C \sim q_{\phi}(\cdot , \cdot \mid \mathcal{D}_T)} 
    \left[
        \log p_\theta(y_{t} \mid \mathbf{Z}, \mathbf{h}_{t}, \omega_{t})
    \right] 
    - \underbrace{D_{KL}(q_{\phi_z}(\mathbf{Z} \mid \mathcal{D}_T)  q_{\phi_c}(C \mid \mathcal{D}_T) 
    \parallel p(C \mid \mathbf{Z})p(\mathbf{Z}))}_{(**)}
\end{align*}
The KL divergence term \((**)\) between the joint approximate posterior and the prior can be decomposed into the sum of two KL divergences:
\begin{align*}
(**) &= \sum_{c=1}^{K}
        \mathbb{E}_{\mathbf{Z} \sim q_{\phi}(\cdot , c \mid \mathcal{D}_T)} 
        \log \left(
            \frac{
                p(c \mid \mathbf{Z})p(\mathbf{Z})
            }{
                q_{\phi_z}(\mathbf{Z} \mid \mathcal{D}_T)  q_{\phi_c}(C \mid \mathcal{D}_T)
            }
        \right) \\
&= \sum_{c=1}^{K}
        \mathbb{E}_{\mathbf{Z} \sim q_{\phi}(\cdot , c \mid \mathcal{D}_T)} 
        \log \left(
            \frac{
                p(\mathbf{Z})
            }{
                q_{\phi_z}(\mathbf{Z} \mid \mathcal{D}_T)
            }
        \right) 
    + \sum_{c=1}^{K}
        \mathbb{E}_{\mathbf{Z} \sim q_{\phi}(\cdot , c \mid \mathcal{D}_T)} 
        \log \left(
            \frac{
                p(c \mid \mathbf{Z})
            }{
                q_{\phi_c}(C \mid \mathcal{D}_T)
            }
        \right) \\ 
&= D_{KL}\left(
        q_{\phi_z}(\mathbf{z} \mid \mathcal{D}_T) 
        \parallel p(\mathbf{z})
    \right) 
    + \mathbb{E}_{\mathbf{Z} \sim q_{\phi_z}(\cdot \mid \mathcal{D}_T)} 
        D_{KL}\left(
            q_{\phi_c}(c \mid \mathcal{D}_T) 
            \parallel p(c \mid \mathbf{Z})
        \right)
\end{align*}
We can now define the individual ELBO by performing a weighted sum over the log-likelihood terms and marginalizing over the full longitudinal data distribution:
\begin{equation*}
L = \sum_{t=1}^{T} 
    \mathbb{E}_{\mathcal{D}_T}
        \left[
        \alpha(\mathbf{H}_t, W_t) \log p_{\theta}(Y_{t} \mid \mathbf{H}_t, W_t)
        \right]
    \geq 
    \mathbb{E}_{\mathcal{D}_T}
    \underbrace{
        \sum_{t=1}^{T} 
            \mathbb{E}_{\mathbf{Z}, C \sim q_{\phi}(\cdot , \cdot \mid \mathcal{D}_T)}  
                \left[ 
                \alpha(\mathbf{h}_{t}, \omega_{t}) 
                \log \frac{
                    p_\theta\left(y_{t}, \mathbf{Z}, C\mid \mathbf{h}_{t}, \omega_{t}\right)
                }{
                q_{\phi}(\mathbf{Z}, C \mid \mathcal{D}_T)
                } 
                \right]
        }_{\mathrm{ELBO_0}(\mathcal{D}_T; \theta, \phi)}
\end{equation*}
Given the developed expressions for \((*)\) and \((**)\), we can express \(\mathrm{ELBO_0}(\mathcal{D}_T; \theta, \phi)\) as:
\begin{align}
\mathrm{ELBO_0}(\mathcal{D}_T; \theta, \phi) &=  \sum_{t = 1}^{T} \mathbb{E}_{\mathbf{Z} \sim q_{\phi_z}(\cdot \mid \mathcal{D}_T)} \left[ \alpha(\mathbf{h}_t, \omega_t) \log p_{\theta}(y_{t} \mid \mathbf{h}_t, \omega_t, \mathbf{Z}) \right] \\
&\quad - \left( \sum_{t = 1}^{T} \alpha(\mathbf{h}_t, \omega_t) \right) \left\{ D_{KL}(q_{\phi_z}(\mathbf{z} \mid \mathcal{D}_T) \parallel p(\mathbf{z}))  + \mathbb{E}_{\mathbf{Z} \sim q_{\phi_z}(\cdot \mid \mathcal{D}_T)} D_{KL}(q_{\phi_c}(c \mid \mathcal{D}_T) \parallel p(c \mid \mathbf{Z}) ) \right\}. \notag 
\end{align}

The gap in our variational approximation is defined as the difference between the true weighted log-likelihood and the ELBO:
\begin{equation*}
    \mathbb{E}_{\mathcal{D}_T}\Delta_0(\mathcal{D}_T; \theta, \phi) \coloneqq L - \mathbb{E}_{\mathcal{D}_T}\mathrm{ELBO_0}(\mathcal{D}_T; \theta, \phi)
\end{equation*}

The per-time-step gap in Eq. (\ref{eq:elbo_step}) can be rewritten as:
\begin{align*}
\log p_\theta\left(y_{t} \mid \mathbf{h}_{t}, \omega_t\right) - (*) &= \mathbb{E}_{\mathbf{Z}, C \sim q_{\phi}(\cdot , \cdot \mid \mathcal{D}_T)}  \left[ 
\log \frac{
p_\theta\left(y_{t} \mid \mathbf{h}_{t}, \omega_t\right)q_{\phi}(\mathbf{Z}, C \mid \mathcal{D}_T)
}{
p_\theta\left(y_{t}, \mathbf{Z}, C\mid \mathbf{h}_{t}, \omega_{t}\right) 
}
\right] \\
&= \mathbb{E}_{\mathbf{Z}, C \sim q_{\phi}(\cdot , \cdot \mid \mathcal{D}_T)}
\left[ 
\log \frac{
q_{\phi}(\mathbf{Z}, C \mid \mathcal{D}_T)
}{
p_\theta\left(\mathbf{Z}, C\mid \mathcal{D}_t\right) 
}
\right] \\
&= D_{KL}\left(q_{\phi}(\mathbf{Z}, C \mid \mathcal{D}_T) \parallel p_\theta\left(\mathbf{Z}, C\mid \mathcal{D}_t\right) \right)
\end{align*}

This last equation holds because 
\[
p_\theta\left(y_{t}, \mathbf{Z}, C \mid \mathbf{h}_{t}, \omega_{t}\right) = p_\theta\left(\mathbf{Z}, C \mid y_{t}, \mathbf{h}_{t}, \omega_{t}\right) p_\theta\left(y_{t} \mid \mathbf{h}_{t}, \omega_{t}\right),
\]
and \(\{y_{t}, \mathbf{h}_{t}, \omega_{t}\} = \{y_{\leq t}, \mathbf{x}_{\leq t}, \omega_{\leq t}\} = \mathcal{D}_t\). The individual gap is thus:
\[
\Delta_0(\mathcal{D}_T; \theta, \phi) = \sum_{t = 1}^{T} \alpha(\mathbf{h}_t, \omega_t) D_{KL}\left(q_{\phi}(\mathbf{Z}, C \mid \mathcal{D}_T) \parallel p_\theta\left(\mathbf{Z}, C\mid \mathcal{D}_t\right) \right)
\]
\end{proof}
\subsection{Transfer of Ignorability under Invertible Maps}
\label{appendix:proof_ignorability_invsersion}
\begin{proposition}
Let \(\Phi\) be an invertible representation function. Then, \(Y_{t}(\omega) \indep W_{t}| \Phi(\mathbf{H}_{t})\) holds if and only if \(Y_{t}(\omega) \indep W_{t}| \mathbf{H}_{t}\). Moreover, \(p(W_t=\omega|\Phi(\mathbf{h}_{t}))>0\) holds if and only if \(p(W_t=\omega|\mathbf{h}_{t})>0\).
\end{proposition}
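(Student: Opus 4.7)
The plan is to show that both conditions are $\sigma$-algebraic statements about the conditioning variable, and that an invertible (bi-measurable) $\Phi$ does not change the generated $\sigma$-algebra. Concretely, I would first establish the identity of $\sigma$-algebras $\sigma(\mathbf{H}_t) = \sigma(\Phi(\mathbf{H}_t))$ and then invoke the standard fact that both conditional independence and regular conditional probabilities only depend on the conditioning variable through the $\sigma$-algebra it generates.

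First I would argue: since $\Phi$ is measurable, $\Phi(\mathbf{H}_t)$ is $\sigma(\mathbf{H}_t)$-measurable, so $\sigma(\Phi(\mathbf{H}_t)) \subseteq \sigma(\mathbf{H}_t)$. Conversely, interpreting ``invertible'' in Definition \ref{def: rep} as bi-measurable (so that $\Phi^{-1}: \mathcal{R} \to \mathcal{H}_t$ is measurable on $\Phi(\mathcal{H}_t)$), we have $\mathbf{H}_t = \Phi^{-1}(\Phi(\mathbf{H}_t))$, hence $\sigma(\mathbf{H}_t) \subseteq \sigma(\Phi(\mathbf{H}_t))$. Combining both inclusions gives $\sigma(\mathbf{H}_t) = \sigma(\Phi(\mathbf{H}_t))$.

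Next I would use this equality to deduce both equivalences. For the ignorability part, the definition $Y_t(\omega) \indep W_t \mid \mathbf{H}_t$ is shorthand for $Y_t(\omega) \indep W_t \mid \sigma(\mathbf{H}_t)$; since $\sigma(\mathbf{H}_t) = \sigma(\Phi(\mathbf{H}_t))$, the two conditional independence statements are literally the same statement in terms of $\sigma$-algebras, so they are equivalent in both directions. For the overlap part, regular conditional probabilities are also defined modulo the conditioning $\sigma$-algebra, so $p(W_t = \omega \mid \mathbf{H}_t) = p(W_t = \omega \mid \Phi(\mathbf{H}_t))$ almost surely, and the set $\{\mathbf{h}_t : p(W_t = \omega \mid \mathbf{h}_t) = 0\}$ is mapped bijectively by $\Phi$ onto $\{\Phi(\mathbf{h}_t) : p(W_t = \omega \mid \Phi(\mathbf{h}_t)) = 0\}$, so strict positivity transfers in both directions.

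The main obstacle is purely a matter of precision: stating exactly what ``invertible'' means measure-theoretically. If one only assumes $\Phi$ is a measurable bijection, then $\Phi^{-1}$ is not automatically measurable in general measurable spaces, but it is in standard Borel spaces (by the Lusin--Souslin theorem), which is the natural setting here given Assumption \ref{assp: regularity_y_space}. I would therefore add a one-line remark that the invertibility in Definition \ref{def: rep} is understood in the bi-measurable sense (or equivalently, both $\Phi$ and $\Phi^{-1}$ are Borel measurable), which makes the $\sigma$-algebra identity above fully rigorous and the rest of the argument essentially a tautology.
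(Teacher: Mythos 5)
Your proof is correct, but it takes a genuinely different route from the paper's. You argue abstractly at the level of $\sigma$-algebras: bi-measurable invertibility gives $\sigma(\mathbf{H}_t)=\sigma(\Phi(\mathbf{H}_t))$, and since both conditional independence and conditional probabilities depend on the conditioning variable only through its generated $\sigma$-algebra, both equivalences follow at once. The paper instead works with densities: for the ignorability part it writes $p(Y_t(\omega)\mid \omega_t,\mathbf{r})$ as a ratio of integrals over the preimage $\Phi^{-1}(\mathbf{r})$, applies ignorability inside the integrand, and observes that the dependence on $\omega_t$ only cancels when the preimage is a singleton; for the overlap part it uses the change-of-variables formula, cancelling the Jacobian factor $J_{\Phi^{-1}}$ in numerator and denominator. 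Your argument is cleaner and strictly more general: it needs no densities, no Jacobian, and hence no implicit smoothness of $\Phi$ beyond Borel bi-measurability, and your remark invoking Lusin--Souslin to get measurability of $\Phi^{-1}$ for free on standard Borel spaces is a genuine point of precision the paper glosses over. What the paper's computation buys in exchange is pedagogical: the integral representation in its Equation \eqref{eq: int_rep} makes visible exactly \emph{why} a non-invertible $\Phi$ can break ignorability (the mixture over the fiber $\Phi^{-1}(\mathbf{r})$ retains a dependence on $\omega_t$ through $p(\mathbf{h}_t\mid\omega_t)$), which is the motivating discussion surrounding the proposition; your tautological route proves the equivalence but does not exhibit the failure mode. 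One minor caution on your overlap argument: since conditional probabilities are only defined up to null sets, the pointwise positivity statement in the proposition is really about a chosen version, so your claim that the zero set transfers bijectively should be read as holding for the versions related by composition with $\Phi$, which is the natural reading and matches the paper's intent.
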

\begin{proof}
    Assume \(Y_{t}(\omega) \indep W_{t}| \mathbf{H}_{t}\). For a non-invertible \(\Phi\), we have, letting \(\Phi^{-1}(\mathbf{r}) = \{\mathbf{h}_t:  \Phi(\mathbf{h}_t) = \mathbf{r}\}\):
    \begin{align}
 p(Y_t(\omega)\mid \omega_t, \mathbf{r}) &= \frac{\int_{\mathbf{h}_t \in \Phi^{-1}(\mathbf{r})}p(Y_t(\omega)\mid \omega_t, \mathbf{h}_t)p(\mathbf{h}_t\mid \omega_t)d\mathbf{h}_t}{\int_{\mathbf{h}_t \in \Phi^{-1}(\mathbf{r})}p(\mathbf{h}_t\mid \omega_t)d\mathbf{h}_t}  \nonumber &\\
     &=  \frac{\int_{\mathbf{h}_t \in \Phi^{-1}(\mathbf{r})}p(Y_t(\omega)\mid \mathbf{h}_t)p(\mathbf{h}_t\mid \omega_t)d\mathbf{h}_t}{\int_{\mathbf{h}_t \in \Phi^{-1}(\mathbf{r})}p(\mathbf{h}_t\mid \omega_t)d\mathbf{h}_t},
     \label{eq:int_rep}
\end{align}
where ignorability implies that, for general \(\Phi\), \(p(Y_t(\omega)\mid \omega_t, \mathbf{r}) \neq p(Y_t(\omega)\mid \mathbf{r})\). For an invertible \(\Phi\), however, \(p(Y_t(\omega)\mid \omega_t, \Phi(\mathbf{h}_t)) = p(Y_t(\omega)\mid \Phi(\mathbf{h}_t))\). Similarly, \(p(W_t=\omega| \Phi(\mathbf{h}_{t})) = p(W_t=\omega| \mathbf{h}_{t})\).
\end{proof}

\subsection{CDVAE in the Near-Deterministic Regime}
\label{appendix:proof_cdvae_near_det}
\begin{proof}[Theorem \ref{thm:vaes_to_truell}] \\
\textbf{Part 1}: $lim_{s \to + \infty} p_{\theta_s}(y_{\leq T} \mid \mathbf{x}_{\leq T}, \omega_{\leq T}) = p(y_{\leq T} \mid \mathbf{x}_{\leq T}, \omega_{\leq T})$.

Define the conditional cumulative distribution function (CDF) of the response given covariates, for each $t$ as:
\[
F_t(y_t \mid \mathbf{h}_t, \omega_t) = \int_{- \infty}^{y_t} p(y_t' \mid \mathbf{h}_t, \omega_t ) dy_t'.
\]
Define the mapping \( F: \mathcal{Y}^{T} \to [0,1]^T \) as:
\[
F(y_1, \dots, y_T \mid \mathbf{x}_{\leq T}, \omega_{\leq T}) \coloneqq \left[F_1(y_1 \mid \mathbf{h}_1, \omega_1), \dots, F_T(y_T \mid \mathbf{h}_T, \omega_T)\right].
\]

The differential of the mapping is given by:
\[
dF(y_1, \dots, y_T \mid \mathbf{x}_{\leq T}, \omega_{\leq T}) = p(y_{\leq T} \mid \mathbf{x}_{\leq T}, \omega_{\leq T}) dy_{\leq T}.
\]

Now, define the following mapping for the latent variables using the Darmois construction \citep{hyvarinen1999nonlinear}:
\[
G_i(z_i \mid z_1, \dots, z_{i-1}) = \int_{- \infty}^{z_i} p(z_i' \mid z_1, \dots, z_{i-1}) dz_i'.
\]
We then define the mapping \( G: \mathbb{R}^{d_{\mathbf{z}}} \to [0,1]^T \) such that:
\[
G(\mathbf{z}) \coloneqq [G_1(z_1), G_2(z_2 \mid z_1), \dots, G_i(z_{\mathbf{z}} \mid z_1, \dots, z_{d_{\mathbf{z}}-1})].
\]
Since $d_{\mathbf{z}} \leq T$ by assumption, we can trivially augment the mapping $G$ to $\Tilde{G}$ so that the image is defined on $[0,1]^{d_{\mathbf{z}}}$, i.e., \( \Tilde{G}: \mathbb{R}^{d_{\mathbf{z}}} \to [0,1]^T \) such that:
\(
\Tilde{G}(\mathbf{z}) \coloneqq [G(\mathbf{z}), \underbrace{0, \dots, 0}_{T - d_{\mathbf{z}}}].
\) The differential is given by: \(
d\Tilde{G}(\mathbf{z}) = p(\mathbf{z}) d\mathbf{z}.
\)

We now define the mean and variance of the encoder as follows:
\[
f_t(\mathbf{h}_t, \omega_t, \mathbf{z}; \theta_{s}^*) \coloneqq [F^{-1}(\Tilde{G}(\mathbf{z}) \mid \mathbf{x}_{\leq T}, \omega_{\leq T})]_t, \quad \sigma_{s}^* = \frac{1}{\sqrt{s}}.
\]
There is a consistency issue to address with this definition. First, observe that the function $f_t(.; \theta_{s}^*)$ takes as input only data up to time step $t$, but the inverse of the cumulative CDF is defined given \textit{the whole sequence} of $(\mathbf{x}_{\leq T}, \omega_{\leq T})$. We therefore need to verify the following lemma which holds for our definition of $f_t(.; \theta_{s}^*)$.

\begin{lemma}
Let \( (y_{\leq T}, \mathbf{x}_{\leq T}, \omega_{\leq T}) \) and \( (y_{\leq T}', \mathbf{x}_{\leq T}', \omega_{\leq T}') \) be two distinct realizations of repeated measurements such that there exists $t_0$ for which there exist two identical sub-trajectories \( \mathbf{x}_{\leq t_0} = \mathbf{x}_{\leq t_0}' \) and \( \omega_{\leq t_0} = \omega_{\leq t_0}' \). Then, for every \( t \leq t_0 \), we have:
\[
[F^{-1}(\Tilde{G}(\mathbf{z}) \mid \mathbf{x}_{\leq t}, \omega_{\leq t})]_t = [F^{-1}(\Tilde{G}(\mathbf{z}) \mid \mathbf{x}_{\leq t}', \omega_{\leq t}')]_t.
\]
\end{lemma}

Now, we decompose the marginal probabilistic model:
\begin{equation*}
    \begin{split}
        p_{\theta_s}(y_{\leq T} \mid \mathbf{x}_{\leq T}, \omega_{\leq T}) &= \int_{\mathbb{R}^{d_{\mathbf{z}}}} p_{\theta_s}(y_{\leq T}, \mathbf{z} \mid \mathbf{x}_{\leq T}, \omega_{\leq T}) d\mathbf{z} \\
        &= \int_{\mathbb{R}^{d_{\mathbf{z}}}} \prod_{t=1}^T p_{\theta_s}(y_{t}, \mathbf{z} \mid y_{<t}, \mathbf{x}_{\leq t}, \omega_{\leq t}) p(\mathbf{z}) d\mathbf{z} \\
        &= \int_{\mathbb{R}^{d_{\mathbf{z}}}} \prod_{t=1}^T \mathcal{N}\left( y_t \mid [F^{-1}(\Tilde{G}(\mathbf{z}) \mid \mathbf{x}_{\leq T}, \omega_{\leq T})]_t, (\sigma_{s}^*)^2 \right) p(\mathbf{z}) d\mathbf{z} \\
        &= \int_{[0,1]^{T}} \prod_{t=1}^T \mathcal{N}\left( y_t \mid [F^{-1}(\xi \mid \mathbf{x}_{\leq T}, \omega_{\leq T})]_t, (\sigma_{s}^*)^2 \right) d\mathbf{\xi}\\
        &= \int_{\mathcal{Y}^{T}} \prod_{t=1}^T \mathcal{N}\left( y_t \mid [y_{\leq T}']_t, (\sigma_{s}^*)^2 \right) p(y_{\leq T}' \mid \mathbf{x}_{\leq T}', \omega_{\leq T}') dy_{\leq T}'.
    \end{split}
\end{equation*}

Finally, we have:
\begin{equation*}
\begin{split}
    \lim_{s \to +\infty} \int_{\mathcal{Y}^{T}} \prod_{t=1}^T \mathcal{N}\left( y_t \mid [y_{\leq T}']_t, (\sigma_{s}^*)^2 \right) p(y_{\leq T}' \mid \mathbf{x}_{\leq T}', \omega_{\leq T}') dy_{\leq T}' 
&= \int_{\mathcal{Y}^{T}} \prod_{t=1}^T \delta(y_t - y_t') p(y_{\leq T}' \mid \mathbf{x}_{\leq T}', \omega_{\leq T}') dy_{\leq T}'\\
&= p(y_{\leq T} \mid \mathbf{x}_{\leq T}, \omega_{\leq T}).
\end{split}
\end{equation*}

\textbf{Part 2}: Proof of $\lim_{s \to + \infty}  \Delta(\mathcal{D}_T; \theta_{s}, \phi_{s}) = 0$. \\
First, we give an explicit writing of the modified individual gap: 
\begin{equation*}
    \begin{split}
        \Delta(\mathcal{D}_T; \theta_{s}, \phi_{s}) &= L(\mathcal{D}_T; \theta_{s}, \phi_{s}) - \mathrm{ELBO_0}(\mathcal{D}_T; \theta_{s}, \phi_{s}) +  \mathrm{ELBO_0}(\mathcal{D}_T; \theta_{s}, \phi_{s}) - \mathrm{ELBO}(\mathcal{D}_T; \theta_{s}, \phi_{s}) \\
        &= \Delta_0(\mathcal{D}_T; \theta_{s}, \phi_{s}) +  \mathrm{ELBO_0}(\mathcal{D}_T; \theta_{s}, \phi_{s}) - \mathrm{ELBO}(\mathcal{D}_T; \theta_{s}, \phi_{s}) \\
        &\quad - \left[\sum_{t = 1}^{T}\alpha(\mathbf{h}_t, \omega_t)\right] \mathbb{E}_{\mathbf{Z} \sim q_{\phi_s}(\cdot \mid \mathcal{D}_T)} D_{KL}(q_{\phi_s}(c \mid \mathcal{D}_T) \parallel p(c \mid \mathbf{Z})) \\
        &\quad - \left[\sum_{t = 1}^{T}\alpha_s(\mathbf{h}_t, \omega_t)\right]\log Z(q_{\phi_s}(\cdot \mid \mathcal{D}_T)) \\
        &= \sum_{t = 1}^{T} \alpha(\mathbf{h}_t, \omega_t) D_{KL}(q_{\phi_s}(\mathbf{z}, c \mid \mathcal{D}_T) \parallel p_{\theta_s}(\mathbf{z}, c \mid \mathcal{D}_t)).
    \end{split}
\end{equation*}

The last equality holds because we assume $q_{\phi_s}(c \mid \mathcal{D}_T) = \pi_{\phi_s}(c \mid \mathcal{D}_T)$ and $\pi_{\phi_s}(c \mid \mathcal{D}_T)$ is a minimizer of 
\[
\min_{q_{\phi_s}(c \mid \mathcal{D}_T)} \mathbb{E}_{\mathbf{Z} \sim q_{\phi_s}(\cdot \mid \mathcal{D}_T)} D_{KL}(q_{\phi_s}(c \mid \mathcal{D}_T) \parallel p(c \mid \mathbf{Z})) = -\log Z(q_{\phi_s}(\cdot \mid \mathcal{D}_T)).
\]

To show that $\lim_{s \to + \infty}  \Delta(\mathcal{D}_T; \theta_{s}, \phi_{s}) = 0$, we will define $\phi_{s}$ such that for every $c \in \{1, \dots, K\}$ and \(t \in \{1, \dots, T\}\):
\[
\lim_{s \to + \infty} D_{KL}(q_{\phi_{s}}(\mathbf{z}, c \mid \mathcal{D}_T) \parallel p_{\theta_s}(\mathbf{z}, c \mid \mathcal{D}_t)) = 0.
\]

Let us define $q_{\phi_{s}}( \mathbf{z} \mid \mathcal{D}_T)$ such that 
\[
f_{\mu_{\mathbf{z}}}(\mathcal{D}_T, \phi_{s}) = G^{-1}(F(y_1, \dots, y_{d_{\mathbf{z}}}] \mid \mathbf{x}_{\leq T}, \omega_{\leq T}))
\]
and 
\[
f_{S_{\mathbf{z}}}(\mathcal{D}_T, \phi) = \sigma_{s} \sqrt{\tilde{\Sigma}_{\mathbf{z}}(\mathcal{D}_T, \phi)},
\]
with $\tilde{\Sigma}_{\mathbf{z}}(\mathcal{D}_T, \phi)$ being the inverse of:
\[
\mathrm{Jac}(F^{-1}(. \mid \mathbf{x}_{\leq T}, \omega_{\leq T}) \circ \tilde{G}(\mathbf{z}))\mathrm{Jac}(F^{-1}(. \mid \mathbf{x}_{\leq T}, \omega_{\leq T}) \circ \tilde{G}(\mathbf{z}))^{\top}.
\]

Using Bayes' rule, we can write the true posterior as:
\[
p_{\theta_{s}}\left(\mathbf{z}, c \mid \mathcal{D}_t\right)
= \frac{
    p_{\theta_{s}}\left(y_{\leq T}, \mathbf{z}, c \mid \mathbf{x}_{\leq t}, \omega_{\leq t}\right)
    }{
    p_{\theta_{s}}\left(y_{\leq t} \mid \mathbf{x}_{\leq t}, \omega_{\leq t}\right)
    }
= \frac{
    \prod_{l=1}^t \mathcal{N}\left(y_l \mid [F^{-1}(\Tilde{G}(\mathbf{z}) \mid \mathbf{x}_{\leq T}, \omega_{\leq T})]_l, \sigma_{s}^2 \right) p(\mathbf{z} \mid c)p(c)
    }{
    p_{\theta_{s}}\left(y_{\leq t} \mid \mathbf{x}_{\leq t}, \omega_{\leq t}\right)
    }.
\]
By the GMM assumption over the prior, we have $p(\mathbf{z} \mid c) = \mathcal{N}(\mathbf{z} \mid \mu_c, \Sigma_c)$. The approximate posterior is of the form:
\[
q_{\phi_{s}}(\mathbf{z}, c \mid \mathcal{D}_T)
= q_{\phi_{s}}(\mathbf{z} \mid \mathcal{D}_T)\pi_{\phi_{s}}(c \mid \mathcal{D}_T) 
= \mathcal{N}(\mathbf{z} \mid f_{\mu_{\mathbf{z}}}(\mathcal{D}_T, \phi_{s}),\sigma_{s}^2\tilde{\Sigma}_{\mathbf{z}}(\mathcal{D}_T, \phi) )\pi_{\phi_{s}}(c \mid \mathcal{D}_T).
\]

We now show convergence by performing a change of variables. Define $\mathbf{z}^{\prime} = \sigma_{s}^{-\frac{t}{d_z}}(\mathbf{z} - \mathbf{z}^*)$. We analyze the behavior of the distributions $p_{\theta_{s}}^{\prime}(\mathbf{z}^{\prime}, c \mid \mathcal{D}_t)$ and $q_{\phi_{s}}^{\prime}(\mathbf{z}^{\prime}, c \mid \mathcal{D}_T)$. 

We prove that
\[
\frac{p_{\theta_{s}}^{\prime}(\mathbf{z}^{\prime}, c \mid \mathcal{D}_t)}{q_{\phi_{s}}^{\prime}(\mathbf{z}^{\prime}, c \mid \mathcal{D}_T)}
\]
converges to a constant independent of $\mathbf{z}^{\prime}$ as $s \to \infty$. Since both $q_{\phi_s}^{\prime}(\mathbf{z}^{\prime}, c \mid \mathcal{D}_T)$ and $p_{\theta_s}^{\prime}(\mathbf{z}^{\prime}, c \mid \mathcal{D}_t)$ are probability distributions, the constant must be 1. Therefore, the KL divergence between them converges to 0 as $s \to \infty$.

We have:
\begin{equation}
\frac{
p_{\theta_{s}}^{\prime}(\mathbf{z}^{\prime}, c \mid \mathcal{D}_t)
}{
q_{\phi_{s}}^{\prime}(\mathbf{z}^{\prime}, c \mid \mathcal{D}_T)
} 
= \frac{
\mathcal{N}(\mathbf{z}^*+ \sigma_{s}^{\frac{t}{d_z}} \mathbf{z}^{\prime} \mid \mathbf{z}^*, \sigma_{s}^{2}  \tilde{\Sigma}_{\mathbf{z}}(\mathcal{D}_T, \phi)) 
\pi_{\phi_{s}}(c \mid \mathcal{D}_T) p_{\theta_{s}}(y_{\leq t} \mid \mathbf{x}_{\leq t}, \omega_{\leq T})
}{
\prod_{l=1}^t
\mathcal{N}(y_l \mid f_l(\mathbf{h}_l, \omega_l, \mathbf{z}^*+ \sigma_{s}^{\frac{t}{d_z}} \mathbf{z}^{\prime}; \theta_{s}), \sigma_{s}^2)
\mathcal{N}(\mathbf{z}^*+ \sigma_{s}^{\frac{t}{d_z}} \mathbf{z}^{\prime} \mid \mu_c, \Sigma_c)p(c)
}
\end{equation}

Now, let \(A_t\) be a matrix such that its Moore–Penrose inverse \(A^+\) satisfies:
\[
f_{\leq t}(\mathbf{h}_{\leq t}, \omega_{\leq t}, \mathbf{z}^*+ \sigma_{s}^{\frac{t}{d_z}} \mathbf{z}^{\prime}; \theta_{s}) = A^+ f_{\leq T}(\mathbf{h}_{\leq T}, \omega_{\leq T}, \mathbf{z}^*+ \sigma_{s}^{\frac{t}{d_z}} \mathbf{z}^{\prime}; \theta_{s}),
\quad y_{\leq t} = A^+ y_{\leq T}
\]
That is, projecting onto \(A^+\) selects the first \(t\) responses. The matrix \(A^+\) has the form:
\[
A^+ =
\begin{bmatrix}
I_t & 0_{t \times (T - t)}
\end{bmatrix}, \quad
A =
\begin{bmatrix}
I_t \\
0_{(T - t) \times t}
\end{bmatrix}
\]

By noticing that \([A^{\intercal}(\sigma_s^2I_T)^{-1}A]^{-1} = \sigma_s^2I_t\) We can therefore apply the mean rearranging formula \citep{Petersen2008MatCookbook}: 
\begin{align*}
&\mathcal{N}\left(
    A^+y_{\leq T} \,\middle|\, 
    A^+f_{\leq T}(\mathbf{h}_{\leq t}, \omega_{\leq t}, \mathbf{z}^* + \sigma_{s}^{\frac{t}{d_z}} \mathbf{z}^{\prime}; \theta_{s}), 
    \left[A^{\intercal}(\sigma_s^2 I_T) A\right]^{-1}
\right) \\
&\quad = 
\frac{
    \sqrt{\det(2\pi \sigma_s^2 I_T)}
}{
    \sqrt{\det(2\pi \sigma_s^2 I_t)}
}
\mathcal{N}\left(
    y_{\leq T} \,\middle|\, 
    f_{\leq T}(\mathbf{h}_{\leq T}, \omega_{\leq T}, \mathbf{z}^* + \sigma_{s}^{\frac{t}{d_z}} \mathbf{z}^{\prime}; \theta_{s}), 
    \sigma_s^2 I_T
\right)
\end{align*}

\begin{align*}
\frac{
p_{\theta_{s}}^{\prime}\left(\mathbf{z}^{\prime}, c \mid \mathcal{D}_t\right)
}{
q_{\phi_{s}}^{\prime}(\mathbf{z}^{\prime}, c \mid \mathcal{D}_T)
} 
&= \frac{
\mathcal{N}\left(
        \mathbf{z}^*+ \sigma_{s}^{\frac{t}{d_z}} \mathbf{z}^{\prime} \mid \mathbf{z}^*,         
         \sigma_{s}^{2}  \tilde{\Sigma}_{\mathbf{z}}(\mathcal{D}_T, \phi) 
        \right) 
        \pi_{\phi_{s}}(c \mid \mathcal{D}_T) p_{\theta_{s}}\left(y_{\leq t} \mid \mathbf{x}_{\leq t}, \omega_{\leq T}\right)
}{
\frac{
\sqrt{\mathrm{det}(2\pi\sigma_s^2I_T)}
}{
\sqrt{\mathrm{det}(2\pi\sigma_s^2I_t)}
}\mathcal{N}\left(
            y_{\leq T} \mid f_{\leq T}(\mathbf{h}_{\leq T}, \omega_{\leq T}, \mathbf{z}^*+ \sigma_{s}^{\frac{t}{d_z}} \mathbf{z}^{\prime}; \theta_{s}), \sigma_{s}^2I_T
        \right)
\mathcal{N}(\mathbf{z}^*+ \sigma_{s}^{\frac{t}{d_z}} \mathbf{z}^{\prime} \mid \mu_c, \Sigma_c)p(c)
} \\
&= (2\pi)^{\frac{t-d}{2}}\frac{
            \mathrm{det}(\tilde{\Sigma}_{\mathbf{z}})^{-\frac{1}{2}}
            }{
            \mathrm{det}(\Sigma_{c})^{-\frac{1}{2}}
            } \exp \Bigg\{ -\frac{1}{2}\sigma_{s}^{2(\frac{t}{d_z}-1)}{z^{\prime}}^{\top} \tilde{\mathbf{\Sigma}}_z^{-1} z^{\prime}  +  \frac{1}{2\sigma_s^2}\sum_{l=1}^{T}(y_l - f_l(\mathbf{h}_l, \omega_l, \mathbf{z}^*+ \sigma_{s}^{\frac{t}{d_z}} \mathbf{z}^{\prime}; \theta_{s}))^2\\
&\quad+ \frac{1}{2}(\mathbf{z}^*+ \sigma_{s}^{\frac{t}{d_z}} \mathbf{z}^{\prime} -\mu_c)^{\top} \Sigma_c^{-1} (\mathbf{z}^*+ \sigma_{s}^{\frac{t}{d_z}}  \mathbf{z}^{\prime} -\mu_c)\Bigg\} \\
&\quad \times \frac{\pi_{\phi_{s}}(c \mid \mathcal{D}_T) p_{\theta_{s}}\left(y_{\leq T} \mid \mathbf{x}_{\leq t}, \omega_{\leq t}\right)}{p(c)}
\end{align*}
By Noticing that 
\[
\sum_{l=1}^{T}(y_l - f_l(\mathbf{h}_l, \omega_l, \mathbf{z}^*+ \sigma_{s}^{\frac{t}{d_z}} \mathbf{z}^{\prime}; \theta_{s}))^2  = \|y_{\leq T} - F^{-1}(\Tilde{G}(\mathbf{z}^* + \sigma_{s}^{\frac{t}{d_z}} \mathbf{z}^{\prime}) \mid \mathbf{x}_{\leq T}, \omega_{\leq T})\|_2^2
\]
We apply a first-order Taylor expansion
\[
y_{\leq T} - F^{-1}(\Tilde{G}(\mathbf{z}^* + \sigma_{s}^{\frac{t}{d_z}} \mathbf{z}^{\prime}) \mid \mathbf{x}_{\leq T}, \omega_{\leq T})  \underset{s \to \infty}{\approx} -Jac(F^{-1}(\Tilde{G}))(\mathbf{z}^*)(\sigma_{s}^{\frac{t}{d_z}} \mathbf{z}^{\prime})
\]
which implies by norm continuity that
\[
\sum_{l=1}^{T}(y_l - f_l(\mathbf{h}_l, \omega_l, \mathbf{z}^*+ \sigma_{s}^{\frac{t}{d_z}} \mathbf{z}^{\prime}; \theta_{s}))^2 \underset{s \to \infty}{\approx} \| Jac(F^{-1}(\Tilde{G}))(\mathbf{z}^*)(\sigma_{s} \mathbf{z}^{\prime})\|_2^2 \underset{s \to \infty}{=} \sigma_{s}^{\frac{2t}{d_z}} {z^{\prime}}^{\top} \tilde{\mathbf{\Sigma}}_z^{-1} z^{\prime}
\]
Now, define the constant $\mathcal{C}\mathrm{st}(\mathcal{D}_t; \sigma_s, \theta_s, \phi_s)$ with respect to $\mathbf{z}^{\prime}$ as:
\[
\mathcal{C}\mathrm{st}(\mathcal{D}_t; \sigma_s, \theta_s, \phi_s) = (2\pi)^{\frac{t-d_z}{2}}
\sqrt{\frac{
\mathrm{det}(\Sigma_{c})
}{
\mathrm{det}(\tilde{\Sigma}_{\mathbf{z}})
}}
\frac{
\pi_{\phi_{s}}(c \mid \mathcal{D}_T) p_{\theta_{s}}\left(y_{\leq t} \mid \mathbf{x}_{\leq t}, \omega_{\leq t}\right)
}{
p(c)}.
\]
Thus,
\[
\begin{aligned}
\frac{p_{\theta_{s}}^{\prime}\left(\mathbf{z}^{\prime}, c \mid \mathcal{D}_T\right)}{q_{\phi_{s}}^{\prime}(\mathbf{z}^{\prime}, c \mid \mathcal{D}_T)} 
&\underset{s \to \infty}{\approx} \frac{1}{2}(\mathbf{z}^* - \mu_c) \Sigma_c^{-1} (\mathbf{z}^* - \mu_c)^{\top} \mathcal{C}\mathrm{st}(\mathcal{D}_t; \sigma_s, \theta_s, \phi_s) \exp \left\{-\sigma_{s}^{2(\frac{t}{d_z}-1)}\frac{1}{2}{z^{\prime}}^{\top} \tilde{\mathbf{\Sigma}}_z^{-1} z^{\prime} + \sigma_{s}^{2(\frac{t}{d_z}-1)} \frac{1}{2}{z^{\prime}}^{\top} z^{\prime}\right\} \\
&\underset{s \to \infty}{\approx} \frac{1}{2}(\mathbf{z}^* - \mu_c) \Sigma_c^{-1} (\mathbf{z}^* - \mu_c)^{\top} \mathcal{C}\mathrm{st}(\mathcal{D}_t; \sigma_s, \theta_s, \phi_s).
\end{aligned}
\]

To conclude the proof, we still need to show that 
\(\lim_{s \to +\infty} \mathcal{C}\mathrm{st}(\mathcal{D}_t; \sigma_s, \theta_s, \phi_s)\) exists and is finite because the model parameters still depend on the chosen variance.  This was the main misapprehension of \cite{dai2018diagnosingvae} for a VAE in the static setting with an unimodal prior over the continuous latents \(\mathbf{Z}\).

We have already shown that 
\[
\lim_{\sigma \to 0^+} p_{\theta_{\sigma}}(y_{\leq T} \mid \mathbf{x}_{\leq T}, \omega_{\leq T}) = p(y_{\leq T} \mid \mathbf{x}_{\leq T}, \omega_{\leq T}),
\]
and therefore:
\begin{equation*}
\lim_{s \to +\infty} \mathcal{C}\mathrm{st}(\mathcal{D}_t; \sigma_s, \theta_s, \phi_s) = 
(2\pi)^{\frac{t-d_z}{2}}
\sqrt{\frac{
\det(\Sigma_{c})
}{
\det(\tilde{\Sigma}_{\mathbf{z}})
}}
\cdot
\frac{
p(y_{\leq T} \mid \mathbf{x}_{\leq T}, \omega_{\leq T})
}{
p(c)
}
\cdot
\lim_{s \to +\infty} \pi_{\phi_{s}}(c \mid \mathcal{D}_T)
\end{equation*}

We verify that \(\lim_{s \to +\infty} \pi_{\phi_{s}}(c \mid \mathcal{D}_T)\) exists and is finite and not zero for all \(c \in \{1, \dots, K\}\):
\begin{align*}
\pi_{\phi_{s}}(c \mid \mathcal{D}_T)
&= \frac{\exp(\mathbb{E}_{\mathbf{Z} \sim q_{\phi_z}(\cdot \mid \mathcal{D}_T)} \log p(c \mid \mathbf{Z}))
    }{
    \sum_{c=1}^K \exp(\mathbb{E}_{\mathbf{Z} \sim q_{\phi_z}(\cdot \mid \mathcal{D}_T)} \log p(c \mid \mathbf{Z}))} \\
&= \frac{
                \exp(
                \int_{\mathcal{Z}} \log p(c\mid\mathbf{z}) \mathcal{N}\left(
        \mathbf{z} \mid \mathbf{z}^*,         
         \sigma_{s}^{2}  \tilde{\Sigma}_{\mathbf{z}}(\mathcal{D}_T, \phi) 
        \right)  d\mathbf{z}
                )
                }{
                \sum_{c=1}^K \exp(
                \int_{\mathcal{Z}} \log p(c\mid\mathbf{z}) \mathcal{N}\left(
        \mathbf{z} \mid \mathbf{z}^*,         
         \sigma_{s}^{2}  \tilde{\Sigma}_{\mathbf{z}}(\mathcal{D}_T, \phi) 
        \right)  d\mathbf{z}
                )
                } \\
&\underset{s \to \infty}{\to} \frac{
                \exp(
                \int_{\mathcal{Z}} \log p(c\mid\mathbf{z}) \delta(\mathbf{z} - \mathbf{z}^*)  d\mathbf{z}
                )
                }{
                \sum_{c=1}^K \exp(
                \int_{\mathcal{Z}} \log p(c\mid\mathbf{z})  \delta(\mathbf{z} - \mathbf{z}^*)  d\mathbf{z}
                )
                } \\
&= \frac{
                 p(c\mid\mathbf{z}^*)
                }{
                \sum_{c=1}^K  p(c\mid\mathbf{z}^*)
                } 
\end{align*}
We have therefore proved that the ratio \(\frac{p_{\theta_{s}}^{\prime}\left(\mathbf{z}^{\prime}, c \mid \mathcal{D}_T\right)}{q_{\phi_{s}}^{\prime}(\mathbf{z}^{\prime}, c \mid \mathcal{D}_T)} \)  converges into a nonzero constant which finishes the proof of the theorem.
\end{proof}

\paragraph{Proof of theorem \ref{thm:sigma_to_0}.} We start by considering the ELBO:
\begin{equation*}
\begin{split}
    \mathcal{L}(\mathcal{D}_T; \theta, \phi) 
    &=  -\sum_{t = 1}^{T} \mathbb{E}_{\mathbf{Z} \sim q_{\phi}(\cdot \mid \mathcal{D}_T)} 
    \left[ \alpha(\mathbf{h}_t, \omega_t) \log p_{\theta}(y_{t} \mid \mathbf{H}_t, W_t, \mathbf{Z}) \right] \\
    &\quad + \left[\sum_{t = 1}^{T}\alpha(\mathbf{h}_t, \omega_t)\right] D_{KL}\left(q_{\phi}(\mathbf{z} \mid \mathcal{D}_T) \parallel p(\mathbf{z})\right) \\
    &\quad - \left[\sum_{t = 1}^{T}\alpha(\mathbf{h}_t, \omega_t)\right] \log Z(q_{\phi}(\cdot \mid \mathcal{D}_T)).
\end{split}
\end{equation*}

By the positivity of the KL divergence and using the fact that $-\log Z(q_{\phi}(\cdot \mid \mathcal{D}_T))$ minimizes a positive functional (as explained in Eq.~\eqref{eq:min_VaDE_trick}), we obtain:
\begin{equation*}
\begin{split}
    \mathcal{L}(\mathcal{D}_T; \theta_{\sigma}^*, \phi_{\sigma}^*) 
    &\geq  -\sum_{t = 1}^{T} \mathbb{E}_{\mathbf{Z} \sim q_{\phi}(\cdot \mid \mathcal{D}_T)} 
    \left[ 
        \alpha(\mathbf{h}_t, \omega_t) \log \left( p_{\theta}\left( y_{t} \mid \mathbf{H}_t, W_t, \mathbf{Z} \right) \right) 
    \right] \\
    &= \sum_{t = 1}^{T} \alpha(\mathbf{h}_t, \omega_t) \mathbb{E}_{\mathbf{Z} \sim q_{\phi}(\cdot \mid \mathcal{D}_T)} 
    \left[
        \frac{1}{2} \log\left( 2 \pi \sigma^2 \right) + \frac{1}{2\sigma^2} \left( y_t - f\left( \mathbf{H}_t, \mathbf{Z}, W_t \right) \right)^2
    \right] \\
    &= \frac{1}{2\sigma^2} 
    \left[
        \left( \sum_{t = 1}^{T} \alpha(\mathbf{h}_t, \omega_t) \right) \sigma^2 \log\left( 2 \pi \sigma^2 \right) 
        + \sum_{t = 1}^{T} \alpha(\mathbf{h}_t, \omega_t) \underbrace{\mathbb{E}_{\mathbf{Z} \sim q_{\phi}(\cdot \mid \mathcal{D}_T)} 
        \left( y_t - f\left( \mathbf{H}_t, \mathbf{Z}, W_t \right) \right)^2}_{\delta_{\sigma}(t)}  
    \right].
\end{split}
\end{equation*}

Suppose there exists $t_0 \in \{1,2,\ldots, T\}$ such that $\lim_{\sigma \to 0^+} \delta_{\sigma}(t_0) > 0$. Then we have $\lim_{\sigma \to 0^+} \mathcal{L}(\mathcal{D}_T; \theta_{\sigma}^*, \phi_{\sigma}^*) = +\infty$, which is impossible since we assume $\mathcal{L}(\mathcal{D}_T; \theta_{\sigma}^*, \phi_{\sigma}^*)$ to be minimized. Therefore, we conclude that for all $t \in \{1,2,\ldots, T\}$, $\lim_{\sigma \to 0^+} \delta_{\sigma}(t) = 0$.

Consequently, we have
\begin{equation*}
\lim_{\sigma \to 0^+} \mathbb{E}_{\mathbf{\epsilon} \sim \mathcal{N}(0,I)} 
        \left( y_t - f\left( \mathbf{H}_t, W_t, f_{\mu_{\mathbf{z}}}(\mathcal{D}_T; \phi_{\sigma}^*)+ f_{S_{\mathbf{z}}}(\mathcal{D}_T; \theta_{\sigma}^*)\mathbf{\epsilon} \right) \right)^2 = 0,
\end{equation*}
which implies that 
\[
\lim_{\sigma \to 0^+} f\left( \mathbf{H}_t, W_t, f_{\mu_{\mathbf{z}}}(\mathcal{D}_T; \phi_{\sigma}^*) + f_{S_{\mathbf{z}}}(\mathcal{D}_T; \theta_{\sigma}^*)\mathbf{\epsilon} \right) = y_t, \quad \text{almost surely}.
\]
In particular, we also have
\[
\lim_{\sigma \to 0^+} f\left( \mathbf{H}_t, W_t, f_{\mu_{\mathbf{z}}}(\mathcal{D}_T; \phi_{\sigma}^*) \right) = y_t.
\]

\subsection{Upper bound on weighted PEHE}
\label{appendix:up_bound_wpehe}
\begin{proof}[\textbf{theorem} \ref{thm:pehe_wbound}]
To prove the theorem, we rely on the following key lemma, but first, we need to define further mathematical objects. We define the weighted population risk over the whole population as
\[
R_{t, g}(f, \Phi) \coloneqq \mathbb{E}_{\mathbf{H}_t, W_t} \left[ \alpha(\mathbf{H}_t, W_t) \, \ell_{f, \Phi}(\mathbf{H}_t, W_t)  \right],
\]
and the weighted population counterfactual risk as 
\[
R_{t, g}(f, \Phi)_{CF} \coloneqq \mathbb{E}_{\mathbf{H}_t, 1-W_t} \left[ \alpha(\mathbf{H}_t, 1-W_t) \, \ell_{f, \Phi}(\mathbf{H}_t, W_t)  \right].
\]
\begin{lemma}
\label{lemma:Upbound_cfrisk_byFactual}
\[
R_{t, g}(f, \Phi) + R_{t, g}(f, \Phi)_{CF} \leq R_{t, g}^1(f, \Phi) + R_{t, g}^0(f, \Phi) +  B_{\Phi} \, \mathrm{IPM}_{G} \left( g_{\Phi}(\cdot \mid W_t = 1), \, g_{\Phi}(\cdot \mid W_t = 0) \right)
\]
\end{lemma}
\begin{proof}
    The proof immediately follows from lemma 1 in \cite{shalit2017estimating}.
\end{proof}
Next, to complete the proof, we define the expected potential outcome at time $t$, given the context history $\mathbf{H}_{t} = \mathbf{h}_{t}$ and latent $\mathbf{z}$ as
\[
m_{t}^{\omega}(\mathbf{h}_{t}, \mathbf{z}) \coloneqq \mathbb{E}_{Y_{t}(\omega) \mid \mathbf{H}_{t},\mathbf{Z} }(Y_{t}(\omega) \mid \mathbf{H}_{t} = \mathbf{h}_{t}, \mathbf{Z} =  \mathbf{z}) \quad \omega \in \mathcal{W},
\]
and we show the following lemma:
\begin{lemma}
\label{lemma:mean_estim_err_decomp}
Denote $\hat{p}_{\phi}(\mathbf{h}_t, w, \mathbf{z}) \coloneqq p(\mathbf{h}_t, w) \, q_{\phi}(\mathbf{z} \mid \mathcal{D}_{\leq t-1})$. We can decompose $R_{t, g}(f, \Phi)$ and $R_{t, g}(f, \Phi)_{\mathrm{CF}}$ using our distribution assumptions as follows:
\[
\mathbb{E}_{\hat{p}_{\phi}(\mathbf{h}_t, w, \mathbf{z})} 
\left[ \alpha(\mathbf{h}_t, w) \left( f(\mathbf{h}_t, w, \mathbf{z}) - m_{t}^{w}(\mathbf{h}_t, \mathbf{z}) \right) \right] \nonumber = 2\sigma^2 \left( R_{t, g}(f, \Phi) - \tfrac{1}{2} \log(2\pi\sigma^2) \right)
- \sum_{\omega \in \{0,1\}}\mathrm{Var}_{\hat{p}_{\phi}(\mathbf{h}_t, w, \mathbf{z})}(Y_t),
\]
and,
\[
\mathbb{E}_{\hat{p}_{\phi}(\mathbf{h}_t, w, \mathbf{z})} 
\left[ \alpha(\mathbf{h}_t, 1 - w) \left( f(\mathbf{h}_t, 1 - w, \mathbf{z}) - m_{t}^{1 - w}(\mathbf{h}_t, \mathbf{z}) \right) \right] \nonumber = 2\sigma^2 \left( R_{t, g}(f, \Phi)_{\mathrm{CF}} - \tfrac{1}{2} \log(2\pi\sigma^2) \right)
- \sum_{\omega \in \{0,1\}}\mathrm{Var}_{\hat{p}_{\phi}(\mathbf{h}_t, 1 - w, \mathbf{z})}(Y_t).
\]
\end{lemma}
\begin{proof}
We have: 
\begin{small} 
\begin{flalign*}
    R_{t, g}(f, \Phi) &= \mathbb{E}_{\mathbf{H}_t, W_t} \left[ \alpha(\mathbf{H}_t, W_t) \, \ell_{f, \Phi}(\mathbf{H}_t, W_t)  \right] &\\
    &= \sum_{\omega \in \{0,1\}} \int_{\mathcal{H}_t \times \mathcal{Z} \times \mathcal{Y}} -\alpha(\mathbf{h}_t, \omega)\log p_{\theta}(y_t \mid \mathbf{h}_t, \omega, \mathbf{z}) \, p(y_t \mid \mathbf{h}_t, \omega, \mathbf{z})p(\mathbf{h}_t, \omega) \, q_{\phi}(\mathbf{z} \mid \mathcal{D}_{\leq t-1}) 
    d y_t d \mathbf{z} d \mathbf{h}_t & \\
    &=  \sum_{\omega \in \{0,1\}} \int_{\mathcal{H}_t}\int_{\mathcal{Z}}\int_{\mathcal{Y}} 
    \alpha(\mathbf{h}_t, \omega) \bigg\{ \frac{1}{2} \log\left( 2 \pi \sigma^2 \right) + \frac{1}{2\sigma^2} \left( y_t - f\left( \mathbf{h}_t, \mathbf{z}, \omega \right) \right)^2  \bigg\}p(y_t \mid \mathbf{h}_t, \omega, \mathbf{z}) \, p(\mathbf{h}_t, \omega) \, q_{\phi}(\mathbf{z} \mid \mathcal{D}_{\leq t-1}) 
    d y_t d \mathbf{z} d \mathbf{h}_t & \\
    &= \frac{1}{2} \log\left( 2 \pi \sigma^2 \right)\sum_{\omega \in \{0,1\}}  
    \int_{\mathcal{H}_t} \alpha(\mathbf{h}_t, \omega) \, p(\mathbf{h}_t, \omega) 
    d\mathbf{h}_t d\omega + \frac{1}{2\sigma^2}\int_{\mathcal{H}_t}\int_{\mathcal{Z}}\int_{\mathcal{Y}} 
    \alpha(\mathbf{h}_t, \omega) \bigg\{ \left( y_t - m_{t}^{\omega}(\mathbf{h}_t, \mathbf{z}) \right) & \\
    &\quad - \left( f\left( \mathbf{h}_t, \omega, \mathbf{z} \right) - m_{t}^{\omega}(\mathbf{h}_t, \mathbf{z}) \right) \bigg\}^2p(y_t \mid \mathbf{h}_t, \omega, \mathbf{z}) \, p(\mathbf{h}_t, \omega) \, q_{\phi}(\mathbf{z} \mid \mathcal{D}_{\leq t-1}) 
    d y_t d \mathbf{z} d \mathbf{h}_t \\
    &= \frac{1}{2} \log\left( 2 \pi \sigma^2 \right) + \frac{1}{2\sigma^2} \sum_{\omega \in \{0,1\}} Var_{\hat{p}_{\phi}(\mathbf{h}_t, \omega, \mathbf{z})}(Y_t )  + \int_{\mathcal{H}_t}\int_{\mathcal{Z}}\alpha(\mathbf{h}_t, \omega) (f\left( \mathbf{h}_t, \omega, \mathbf{z} \right) - m_{t}^{\omega}(\mathbf{h}_t, \mathbf{z}))^2p(\mathbf{h}_t, \omega) \, q_{\phi}(\mathbf{z} \mid \mathcal{D}_{\leq t-1}) d \mathbf{z} d \mathbf{h}_t \\
\end{flalign*}
\end{small} 
In a similar way, we can show the decomposition related to $R_{t, g}(f, \Phi) _{CF}$.
\end{proof}
The remainder idea of the proof is to decompose the PEHE in such a way we can upper bound it with an expression including $R_{t, g}(f, \Phi)$ and $R_{t, g}(f, \Phi)_{CF}$ and then use lemma \ref{lemma:Upbound_cfrisk_byFactual}. We actually have:
\begin{small}
    \begin{flalign*}
    \epsilon_{\mathrm{PEHE}_{t, g}} &=  \mathbb{E}_{\mathbf{H}_t \sim g}\mathbb{E}_{\mathbf{Z} \sim q_{\phi}(\mathbf{Z} \mid \mathcal{D}_{\leq t-1})} \left[ \left( \tau(\mathbf{H}_t, \mathbf{Z}) - \hat{\tau}_{f, \Phi}(\mathbf{H}_t, \mathbf{Z}) \right)^2 \right] \\ 
    &=  \mathbb{E}_{\mathbf{H}_t \sim g}\mathbb{E}_{\mathbf{Z} \sim q_{\phi}(\mathbf{Z} \mid \mathcal{D}_{\leq t-1})} \left[ \left( m_{t}^{1}(\mathbf{h}_t, \mathbf{z}) - m_{t}^{0}(\mathbf{h}_t, \mathbf{z}) - f\left( \mathbf{h}_t, 1, \mathbf{z} \right) + f\left( \mathbf{h}_t, 0, \mathbf{z} \right)\right)^2 \right] \\  
    &\overset{(1)}{\leq }2 \mathbb{E}_{\mathbf{H}_t \sim g}\mathbb{E}_{\mathbf{Z} \sim q_{\phi}(\mathbf{Z} \mid \mathcal{D}_{\leq t-1})}\left[ (f\left( \mathbf{h}_t, 1, \mathbf{z} \right)- m_{t}^{1}(\mathbf{h}_t, \mathbf{z}))^2\right] + 2 \mathbb{E}_{\mathbf{H}_t \sim g}\mathbb{E}_{\mathbf{Z} \sim q_{\phi}(\mathbf{Z} \mid \mathcal{D}_{\leq t-1})}\left[ (f\left( \mathbf{h}_t, 0, \mathbf{z} \right)- m_{t}^{0}(\mathbf{h}_t, \mathbf{z}))^2\right] \\
    &= \mathbb{E}_{\mathbf{H}_t \mid W_t =1}\mathbb{E}_{\mathbf{Z} \sim q_{\phi}(\mathbf{Z} \mid \mathcal{D}_{\leq t-1})}\left[\alpha(\mathbf{h}_t, 1) (f\left( \mathbf{h}_t, 1, \mathbf{z} \right)- m_{t}^{1}(\mathbf{h}_t, \mathbf{z}))^2\right]\\
    &+  \mathbb{E}_{\mathbf{H}_t \mid W_t =0}\mathbb{E}_{\mathbf{Z} \sim q_{\phi}(\mathbf{Z} \mid \mathcal{D}_{\leq t-1})}\left[\alpha(\mathbf{h}_t, 0) (f\left( \mathbf{h}_t, 1, \mathbf{z} \right)- m_{t}^{1}(\mathbf{h}_t, \mathbf{z}))^2\right] \\
    &+\mathbb{E}_{\mathbf{H}_t \mid W_t =0}\mathbb{E}_{\mathbf{Z} \sim q_{\phi}(\mathbf{Z} \mid \mathcal{D}_{\leq t-1})}\left[\alpha(\mathbf{h}_t, 0) (f\left( \mathbf{h}_t, 0, \mathbf{z} \right)- m_{t}^{0}(\mathbf{h}_t, \mathbf{z}))^2\right] \\
    &+\mathbb{E}_{\mathbf{H}_t \mid W_t =1}\mathbb{E}_{\mathbf{Z} \sim q_{\phi}(\mathbf{Z} \mid \mathcal{D}_{\leq t-1})}\left[\alpha(\mathbf{h}_t, 1) (f\left( \mathbf{h}_t, 0, \mathbf{z} \right)- m_{t}^{0}(\mathbf{h}_t, \mathbf{z}))^2\right]\\
    &=\mathbb{E}_{\mathbf{H}_t, W_t}\mathbb{E}_{q_{\phi}(\mathbf{Z} \mid \mathcal{D}_{\leq t-1})}[\alpha(\mathbf{H}_t, W_t) (f\left( \mathbf{H}_t, W_t, \mathbf{Z} \right) - m_{t}^{W_t}(\mathbf{H}_t, \mathbf{Z}))] \\    
    &+ \mathbb{E}_{\mathbf{H}_t, 1-W_t}\mathbb{E}_{q_{\phi}(\mathbf{Z} \mid \mathcal{D}_{\leq t-1})}[\alpha(\mathbf{H}_t, 1-W_t) (f\left( \mathbf{H}_t, W_t, \mathbf{Z} \right) - m_{t}^{W_t}(\mathbf{H}_t, \mathbf{Z}))] \\ 
    &\overset{(2)}{=} 2\sigma^2 \bigg\{R_{t, g}(f, \Phi) + R_{t, g}(f, \Phi)_{CF}  - \log\left( 2 \pi \sigma^2 \right) \bigg\}- \sum_{\omega \in \{0,1\}} (Var_{\hat{p}_{\phi}(\mathbf{h}_t, 1-\omega, \mathbf{z})}(Y_t ) + Var_{\hat{p}_{\phi}(\mathbf{h}_t, \omega, \mathbf{z})}(Y_t)) \\
    &\overset{(3)}{\leq} 2\sigma^2 \bigg\{R_{t, g}(f, \Phi) + R_{t, g}(f, \Phi)_{CF}  - \log\left( 2 \pi \sigma^2 \right) \bigg\}\\
    &\overset{(4)}{\leq} 2\sigma^2 \bigg\{R_{t, g}^1(f, \Phi) + R_{t, g}^0(f, \Phi) +  B_{\Phi} \, \mathrm{IPM}_{G}\left( g_{\Phi}(\cdot \mid W_t = 1), \, g_{\Phi}(\cdot \mid W_t = 0)\right)  - \log\left( 2 \pi \sigma^2 \right) \bigg\}
\end{flalign*}
\end{small}
The inequality $\overset{(1)}{\leq }$ follows from the property $(a-b)^2 \leq 2a^2 + 2b^2$. The equation $\overset{(2)}{=}$ follows from plugging in the equations from Lemma \ref{lemma:mean_estim_err_decomp}, the inequality $\overset{(3)}{\leq}$ follows from the positivity of the variance terms, and $\overset{(4)}{\leq}$ follows from Lemma \ref{lemma:Upbound_cfrisk_byFactual}.
\end{proof}

\subsection{Proof of Proposition \ref{prop:approx_factual_risk}}
\label{appendix:proof_statio}
Using a Monte Carlo approximation, we can express the approximate factual risk as:
\[
R^{\omega}_{t, g}\left(f, \Phi \right) \approx \frac{1}{n_{\omega}^{(t)}} \sum_{i \in \mathcal{B}, W_{it} = \omega } \mathbb{E}_{\mathbf{Z} \sim q_{\phi}(\mathbf{Z} \mid \mathcal{D}_{i, \leq t-1})} \left[ 
 \alpha(\mathbf{h}_{it}, \omega) \log \mathcal{N}(y_{it}; f(\Phi(\mathbf{h}_{it}),\mathbf{Z}, \omega), \sigma^2) 
\right].
\]
By the stationarity assumption, for \( t \geq t_0 \), this approximation holds:
\[
R^{\omega}_{t, g}\left(f, \Phi \right) \approx \frac{1}{n_{\omega}^{(t)}} \sum_{i \in \mathcal{B}, W_{it} = \omega } \mathbb{E}_{\mathbf{Z} \sim q_{\phi}(\mathbf{Z} \mid \mathcal{D}_{iT})} \left[ 
 \alpha(\mathbf{h}_{it}, \omega) \log \mathcal{N}(y_{it}; f(\Phi(\mathbf{h}_{it}),\mathbf{Z}, \omega), \sigma^2) 
\right].
\]

On the other hand, the reconstruction term in the ELBO can be written as:
\begin{align*}
    \sum_{t = t_0}^{T} \mathbb{E}_{\mathbf{Z} \sim q_{\phi}(\cdot \mid \mathcal{D}_T)} 
    \left[ \alpha(\mathbf{H}_t, W_t) \log p_{\theta}(Y_{t} \mid \mathbf{H}_t, W_t, \mathbf{Z}) \right] 
    &\approx \sum_{t = t_0}^{T} \frac{1}{|\mathcal{B}|} \sum_{i \in \mathcal{B}} \mathbb{E}_{\mathbf{Z} \sim q_{\phi}(\mathbf{Z} \mid \mathcal{D}_{iT})} \left[ 
    \alpha(\mathbf{h}_{it}, \omega) \log \mathcal{N}(y_{it}; f(\Phi(\mathbf{h}_{it}),\mathbf{Z}, \omega), \sigma^2) \right] \\ 
    &\approx \frac{1}{|\mathcal{B}|} \sum_{t = t_0}^{T} \bigg\{ \sum_{\omega \in \mathcal{W}} \sum_{i \in \mathcal{B}, W_{it} = \omega} \mathbb{E}_{\mathbf{Z} \sim q_{\phi}(\mathbf{Z} \mid \mathcal{D}_{iT})} \left[ 
    \alpha(\mathbf{h}_{it}, \omega) \log \mathcal{N}(y_{it}; f(\Phi(\mathbf{h}_{it}),\mathbf{Z}, \omega), \sigma^2) \right] \bigg\} \\
    &\approx \frac{1}{|\mathcal{B}|} \sum_{t = t_0}^{T} \bigg\{ -n_{1}^{(t)} R^{1}_{t, g}\left(f, \Phi \right) - n_{0}^{(t)} R^{0}_{t, g}\left(f, \Phi \right) \bigg\}.
\end{align*}

\section{Discussion of the validity of the Risk Factor Substitute}
\label{appendix:validity_Z_discussion}
\subsection{Estimating \texorpdfstring{$p(Y_{t}(\omega))$}{p(Yt(omega))}}
\label{appendix:consistency_discussion}
In causal inference, when unobserved confounders are present, the conditional distribution of potential outcomes plays a critical role. If the adjustment variables $\mathbf{U}$ are observed, the potential outcome can be represented as:
\[
p(Y_{t}(\omega)) = p(Y_t \mid \mathbf{h}_t, W_t = \omega) = \int p(y_t \mid \mathbf{h}_t, W_t = \omega, \mathbf{U} = \mathbf{u}) p(\mathbf{u} \mid \mathbf{h}_t, W_t = \omega) d\mathbf{u}.
\]
Due to the structural assumptions over the causal graph, the treatments $W_t$ and covariates $\mathbf{X}_t$ are d-separated from $\mathbf{U}$ given $\mathbf{H}_t$, implying:
\[
p(\mathbf{u} \mid \mathbf{h}_t, W_t = \omega) = p(\mathbf{u} \mid \mathcal{D}_{t-1}),
\]
where the conditional potential outcome is identifiable, but estimating it requires sampling from the posterior distribution of the adjustment variables, $p(\mathbf{u} \mid \mathcal{D}_{t-1})$.

When $\mathbf{U}$ is not observed, let $\mathbf{Z}$ be a latent variable satisfying the CMM(p). Under what conditions does the following equality hold?
\begin{equation}
    \begin{aligned}
    \int p(y_t \mid \mathbf{h}_t, W_t = \omega, \mathbf{Z} = \mathbf{z}) p(\mathbf{z} \mid \mathcal{D}_{t-1}) \, d\mathbf{z} 
    &\overset{(*)}{=} \int p(y_t \mid \mathbf{h}_t, W_t = \omega, \mathbf{U} = \mathbf{u}) p(\mathbf{u} \mid \mathcal{D}_{t-1}) \, d\mathbf{u} \\
    &= p(Y_{t}(\omega) \mid \mathbf{h}_t).
    \end{aligned}
    \label{eq:p_Y_from_z_to_y}
\end{equation}
This equation addresses the potential risk of non-uniqueness in the CMM assumption, which could lead to varying values on the left-hand side (LHS) of Eq. (\ref{eq:p_Y_from_z_to_y}). However, our framework differs from the traditional deconfounder framework. Here, the identification of the conditional potential outcome \(p(Y_t \mid \mathbf{h}_t)\) occurs \emph{independently} of the latent variables, as sequential ignorability holds without requiring conditioning on them. The CMM assumption, under the condition \(\mathbf{Z} \indep \mathbf{X}_{\leq T}, W_{\leq T}\), ensures that no confounding backdoor paths are introduced by conditioning on \(\mathbf{Z}\). Thus, using \(\mathbf{Z}\) as a substitute for \(\mathbf{U}\) presents no identifiability or consistency issues. This conclusion also extends to the marginal distribution of potential outcomes, which can be formally shown through a copula argument, as in \cite{D'AmourMulti-Cause2019}.
\begin{equation}
\begin{aligned}
p(Y_{t}(\omega)) &= \int p(Y_{t}(\omega) \mid \mathbf{H}_t =  \mathbf{h}_t) p(\mathbf{h}_t) d\mathbf{h}_t, \\
p(Y_{t}(\omega)) &= \int p(Y_{t} \mid \mathbf{H}_t =  \mathbf{h}_t, W_t = \omega) p(\mathbf{h}_t) d\mathbf{h}_t, \\
p(Y_{t}(\omega)) &= \int p(y_{\leq t}, \mathbf{x}_{\leq t}, \omega_{\leq t}) \frac{p(\mathbf{h}_t)}{p(\mathbf{h}_t, \omega_t)} d\mathbf{h}_t, \\
p(Y_{t}(\omega)) &= \int \int p(y_{\leq t}, \mathbf{z} \mid \mathbf{x}_{\leq t}, \omega_{\leq t}) p(\mathbf{x}_{\leq t}, \omega_{\leq t}) \frac{p(\mathbf{h}_t)}{p(\mathbf{h}_t, \omega_t)} d\mathbf{h}_t d\mathbf{z}, \\
&= \int \int c(y_{\leq t}, \mathbf{z} \mid \mathbf{x}_{\leq t}, \omega_{\leq t}) p(y_t \mid \mathbf{h}_t, \omega_t) p(\mathbf{z} \mid \mathbf{x}_{\leq t}, \omega_{\leq t}) p(\mathbf{h}_t) d\mathbf{h}_t d\mathbf{z}.
\end{aligned}
\end{equation}

The copula \(c(y_{\leq t}, \mathbf{z} \mid \mathbf{x}_{\leq t}, \omega_{\leq t})\) is restricted by the CMM(p) assumption. This constraint differs significantly from the deconfounder theory, where factor models often separate multiple treatments or treatment sequences and impose no such restrictions on the outcome model. Therefore, identifying potential outcomes in the deconfounder framework requires additional assumptions, as discussed in \cite{wang2019blessings, wang2019blessingsreply}. However, our framework enforces conditional separation of responses given past sequences of treatments and covariates, implying independence between the latent variables and treatment sequences.
Finally, in practice, how can we estimate \(p(Y_{t}(\omega))\)? We express this as:
\begin{equation}
\begin{aligned}
p(Y_{t}(\omega)) &= \int p(y_{t} \mid \mathbf{H}_t =  \mathbf{h}_t, W_t = \omega) p(\mathbf{h}_t) d\mathbf{h}_t, \\
&= \int \int  p(y_{t} \mid \mathbf{h}_t, \omega, \mathbf{z}) p(\mathbf{z} \mid \mathbf{h}_t, \omega) p(\mathbf{h}_t) d\mathbf{h}_t d\mathbf{z}, \\
&= \int \int p(y_{t} \mid \mathbf{h}_t, \omega, \mathbf{z}) p(\mathbf{z} \mid \mathcal{D}_{t-1}) p(\mathbf{h}_t) d\mathbf{h}_t d\mathbf{z}, \\
&\approx \int \int p_{\theta}(y_{t} \mid \mathbf{h}_t, \omega, \mathbf{z}) q_{\phi}(\mathbf{z} \mid \mathcal{D}_{T}) p(\mathbf{h}_t) d\mathbf{h}_t d\mathbf{z}.
\end{aligned}
\end{equation}

Here, \(p_{\theta}(y_{t} \mid \mathbf{h}_t, \omega, \mathbf{z})\) is modeled by the outcome model (decoder), while \(q_{\phi}(\mathbf{z} \mid \mathcal{D}_{T})\) is the approximate posterior distribution of the latent variables, estimated using the observed sequence of data.

\subsection{Capturing "bad variables"}
\label{appendix:bad_vars}
In this section, we examine whether the substitute adjustment variables \(\mathbf{Z}\) can inadvertently capture "bad variables," that is, variables that may introduce bias into the estimation of causal effects.

\paragraph{M-Colliders} One potential source of bias arises from M-colliders. If the substitute adjustment variables \(\mathbf{Z}\) capture information not only about the true adjustment variables \(\mathbf{U}\), but also about a collider variable \(\mathbf{M}\), this could result in \(\mathbf{Z} \not\perp W_{\leq T}\). 

\begin{figure}[ht]
\begin{center}
\includegraphics[scale=0.5]{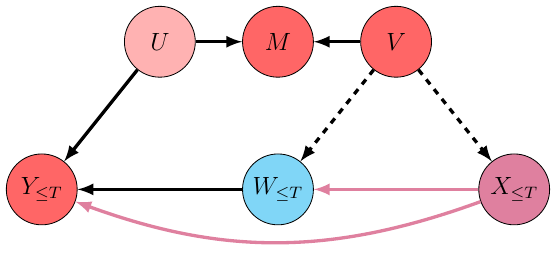}
\caption{An example of an M-collider structure, where the variable \(V\) influences both the treatment and the response, as indicated by the dashed arrows.}
\label{fig:m_collider}
\end{center}
\end{figure}

The key concern is whether \(\mathbf{Z}\) could "blindly" capture information from the sequence of responses or treatments. Such cases may lead to open collider paths like \(\mathbf{U} \rightarrow Y_t \leftarrow W_{\leq t}\) or \(\mathbf{U} \rightarrow Y_t \leftarrow \mathbf{X}_{\leq t}\). These scenarios would violate the assumption that \(\mathbf{Z} \perp W_{\leq t}\) and \(\mathbf{Z} \perp \mathbf{X}_{\leq t}\). 

In essence, the inclusion of colliders in the substitute adjustment variables can open "backdoor" paths, leading to biased estimates of the treatment effect. To prevent this, it is essential to ensure that the substitute variables \(\mathbf{Z}\) do not inadvertently encode information about colliders that affect both the treatments and the responses.


\section{Experiments on synthetic data: Details} 
\subsection{Description of the Simulation Model}
\label{sect:ar_sim_description}
\subsubsection{Simulation under sequential ignorability}
\label{appendix:ar_sim_descrip}
We simulate a longitudinal data set of time-length $T=75$ by generating the time-varying variables autoregressively. Specifically, the confounders $\mathbf{X}_t$ at each time-step $t$ are generated in $\mathbb{R}^{d_x}$ with $d_x = 100$, and the dynamics are specified through an autoregression of order $p=8$ plus a regression over the past treatment trajectory. Specifically, each dimension $j\in \{1, \dots, d_x\} $ of  $\mathbf{X}_t$ is defined as:
\begin{equation*}
    \mathbf{X}_t^{(j)} =  \frac{1}{p}\sum_{k = 1}^{p}\gamma_{k}^{X,(t,j)}\mathbf{X}_{t-k}^{(j)} + \frac{1}{p}\sum_{k = 1}^{p}\gamma_{k}^{XW,(t,j)}W_{t-k}+ 
 \epsilon_{t,j}^X,
\end{equation*}
\begin{equation*}
     \gamma_{k}^{X,(t,j)} \sim \mathcal{N}(0, 1) \quad \gamma_{k}^{XW,(t,j)} \sim \mathcal{N}(0, 1) \quad  \epsilon_t^{X}:= \begin{bmatrix} \epsilon_{t,1}^X  \\\epsilon_{t,2}^X  \\ \vdots \\ \epsilon_{t,d_x}^X  \end{bmatrix} \; \sim \mathcal{N}(0, \Sigma_x).
\end{equation*}
To ensure dependence between confounders, the vector error $\epsilon_t^{X}$ is generated by a Gaussian distribution with a non-diagonal covariance matrix 
\[
\Sigma_x = \rho \mathbf{1}_{d_x}\mathbf{1}_{d_x}^\intercal +(1-\rho)\sigma^2I_{d_x},
\]
where $\rho = 0.5$ and $\sigma^2 = 0.3$.

The treatment $\omega_t$ is generated using a Bernoulli distribution with a probability $\sigma(\pi_t)$ defined with a logistic model to simulate the assignment mechanism. $\sigma(.)$ refers to the sigmoid function:
\[
\pi_t = \frac{1}{p}\sum_{k=1}^{p} \gamma_{k}^{W,t}W_{t-k} + \frac{1}{d_xp}\sum_{k=1}^{p} \langle \gamma_{k}^{WX,t}, \mathbf{X}_{t-k} \rangle + \frac{1}{p}\sum_{k=1}^{p} \gamma_{k}^{WY,t}Y_{t-k} + \epsilon_W.
\]

To ensure the variation of imbalance between treatment and control groups through time, we simulate the regression parameters as follows: 
\begin{equation*}
\gamma_{k}^{W,t}, \gamma_{k}^{WY,t} \sim \mathcal{N}(\sin(\frac{t}{\pi}),0.01^2 ) \quad \gamma_{k}^{WX,t} \sim \mathcal{N}(\sin(\frac{t}{\pi}),0.01^2)^{\otimes^{d_x}}     \quad\epsilon_W \sim \mathcal{N}(0, 0.01^2) 
\end{equation*}
\begin{equation*}
W_t \sim \mathcal{B}(\sigma(\pi_t))
\end{equation*}
To specify the outcome model, we first generate the unobserved adjustment variables $\mathbf{U}$ in $\mathbb{R}^{d_u}$ with $d_u = 100$ using a Gaussian mixture of three distributions: 
\begin{equation*}
       \mathbf{U} \sim \frac{1}{K}\sum_{i=1}^K \mathcal{N}(\mu_i, \Sigma_u)
\end{equation*}
where $\mu_1, \dots, \mu_K ,\sim \mathcal{U}([-10,10])^{\otimes^{d_u}} $, and $\Sigma_u = 0.2\,\mathrm{diag}(\mathbf{1}_{d_u})$, and \(K=8\).
Finally, we write the expression of the two potential outcomes as a function of the context history. The chosen specification is motivated by the classic mixed-effect approach. We introduce the random effects to have an individual reaction to the change in a covariate $\mathbf{X}_{t,j}$, and we model it by the Hadamard product between the covariate vector and the unobserved adjustment vector. The observed response is defined following the consistency assumption \ref{assp:consistency}:
\begin{equation*}
    Y_t =  Y_t(1) W_t + Y_t(0)(1-W_t)
\end{equation*}
\begin{equation*}
Y_t(\omega) = \frac{1}{d_{U}}\langle  \gamma_{\omega}^{YU}, \mathbf{U}    \rangle  \sum_{k = 1}^{p}W_{t-k} + \frac{1}{d_xp}\sum_{k = 1}^{p}\langle \gamma_{k, \omega}^{YX,t}, \mathbf{X}_{t-k}\rangle + \frac{1}{p}\sum_{k = 1}^{p}\gamma_{k, \omega}^{Y,t}Y_{t-k} +  \epsilon_Y
\end{equation*}
\begin{equation*}
    \gamma_{k, \omega}^{YX,t} \sim \mathcal{N}(\gamma_{\omega}^{YX},0.01^2) \quad \gamma_{\omega}^{YU} \sim \mathcal{N}(\gamma_{\omega}^{YU},0.01^2) \quad \gamma_{k, (\omega)}^{Y,t} \sim \mathcal{N}(\gamma_{\omega}^{Y},0.1^2) \quad \epsilon_Y \sim \mathcal{N}(0,0.01^2)
\end{equation*}

With \(\gamma_{1}^{YX}, \gamma_{1}^{Y} = 0.0, 0.2\) and \(\gamma_{0}^{YX}, \gamma_{0}^{Y} = 0.2, 0.1\) and \(\gamma_{0}^{YU} = 0.1\) while \(\gamma_{1}^{YU}\) is varied generate multiple datasets with different modification levels of treatment effect induced by \(\mathbf{U}\).

\subsubsection{Simulation under hidden confounding}
\label{sect:ar_sim_description_sensitivity}

We partition the covariates into \emph{observed} and \emph{hidden} blocks:
\[
\mathbf{X}_t=\big(\mathbf{X}_t^o,\;\mathbf{X}_t^h\big),
\quad
\mathbf{X}_t^o\in\mathbb{R}^{d_o},\quad
\mathbf{X}_t^h\in\mathbb{R}^{d_h},\quad
d_o+d_h=d_x.
\]
We keep the block-diagonal covariance
\[
\Sigma_x=\begin{pmatrix}
\Sigma_{zz} & 0\\
0 & \Sigma_{hh}
\end{pmatrix},
\]
so that \(\mathbf{X}_t^h\) is \textbf{not} a proxy for \(\mathbf{X}_t^o\) at baseline.

Define the \emph{hidden score} aggregated over the current \(p\) steps:
\[
S_t \coloneqq \frac{1}{d_h}\sum_{j=1}^{d_h}\sum_{k=0}^{p-1} \mathbf{X}_{t-k}^{h, j}.
\]
We inject the hidden score \(S_t\) into the treatment logit with a selection-sensitivity parameter \(\gamma\):
\[
\pi_t
=\underbrace{\pi^{(0)}_t}_{\text{baseline under ignorability}}
+\gamma\,S_t,\quad W_t\sim \mathrm{Bernoulli}\{\sigma(\pi_t)\}.
\]
\textbf{Interpretation (Rosenbaum-\(\Gamma\)):} If \(S_t\) is standardized, \(\gamma=\log\Gamma\) where \(\Gamma\ge 1\) is the maximum treatment odds ratio due to unobserved confounding.

We allow \(S_t\) to load differently on the two potential outcomes via coefficients \(\beta_0,\beta_1\):
\[
Y_t(\omega) = Y_t^{(0)}(\omega) + \beta_\omega\, S_t + \tilde{\varepsilon}_t^{(\omega)},\quad \omega\in\{0,1\}.
\]
The ITE is then
\[
\tau_t = \big(Y_t^{(0)}(1)-Y_t^{(0)}(0)\big) + (\beta_1-\beta_0)S_t + \big(\tilde{\varepsilon}_t^{(1)}-\tilde{\varepsilon}_t^{(0)}\big).
\]
If \(\beta_1\neq\beta_0\), the hidden block changes the ITE, making PEHE sensitive to hidden confounding.

\subsection{Additional results}
\label{appendix:arsim_extended_results}
\paragraph{Results of baselines on the synthetic datasets} The following Table \ref{tab:perf_sim_combined} provides the detailed results responsible for Figures \ref{fig:perf_sim_combined} related to baselines with the three different approaches across levels of \(\gamma_{(1)}^{YU}\).
\begin{table}[!htbp]
 \caption{Results on the synthetic data reported by PEHE. Smaller is better.}
 \centering
\resizebox{\textwidth}{!}{%
\begin{tabular}{|c|c|c|c|c|c|c|c|c|c|c|c|} 
\hline
Model       & $\gamma_{(1)}^{YU} = 0$ & $\gamma_{(1)}^{YU} = 0.25$ & $\gamma_{(1)}^{YU} = 0.5$ & $\gamma_{(1)}^{YU} = 0.75$ & $\gamma_{(1)}^{YU} = 1$ & $\gamma_{(1)}^{YU} = 1.25$ & $\gamma_{(1)}^{YU} = 1.5$ & $\gamma_{(1)}^{YU} = 1.75$ & $\gamma_{(1)}^{YU} = 2$ & $\gamma_{(1)}^{YU} = 2.25$ & $\gamma_{(1)}^{YU} = 2.5$ \\ \hline 

\textbf{CDVAE (ours)} & \textbf{0.43$\pm$0.02}& \textbf{0.50$\pm$0.03}& \textbf{0.96$\pm$0.09}& \textbf{1.57$\pm$0.08}& \textbf{1.90$\pm$0.10}&  \textbf{2.35$\pm$0.18}& \textbf{3.57$\pm$0.17}& \textbf{4.80$\pm$0.20}& \textbf{6.84$\pm$0.20}& \textbf{7.64$\pm$0.48}& \textbf{9.03$\pm$0.50}\\
\hline

\textbf{Causal CPC} & 0.43$\pm$0.01 & 0.50$\pm$0.03 & 1.08$\pm$0.08 & 2.49$\pm$0.14 & 2.98$\pm$0.09 & 4.98$\pm$0.21 & 5.91$\pm$0.29 & 10.15$\pm$0.38 & 12.25$\pm$0.49 & 15.65$\pm$0.56 & 19.39$\pm$0.59 \\

\textbf{Causal CPC (with substitute)} & 0.46$\pm$0.02 & 0.49$\pm$0.01 & 1.02$\pm$0.05 & 2.38$\pm$0.08 & 2.83$\pm$0.09 & 4.81$\pm$0.13 & 5.38$\pm$0.22 & 8.13$\pm$0.39 & 10.11$\pm$0.41 & 12.01$\pm$0.54 & 14.03$\pm$0.64 \\

\textbf{Causal CPC (oracle)} & 0.45$\pm$0.02 & 0.43$\pm$0.03 & 0.96$\pm$0.04 & 1.59$\pm$0.06 & 2.14$\pm$0.08 & 4.41$\pm$0.10 & 5.08$\pm$0.30 & 6.93$\pm$0.25 & 8.70$\pm$0.17 & 10.15$\pm$0.47 & 12.91$\pm$0.51 \\
\hline
 
\textbf{Causal Transformer} & 0.46$\pm$0.02 & 0.68$\pm$0.04 & 1.50$\pm$0.06 & 2.55$\pm$0.18& 3.65$\pm$0.20& 5.55$\pm$0.50& 8.15$\pm$0.72& 12.35$\pm$0.25& 15.48$\pm$1.02& 24.77$\pm$2.21& 43.84$\pm$2.58\\

\textbf{Causal Transformer (with substitute)} &  0.46$\pm$0.02 & 0.67$\pm$0.02& 1.46$\pm$0.03& 2.48$\pm$0.08& 3.53$\pm$0.09& 5.23$\pm$0.11& 7.72$\pm$0.18& 11.86$\pm$0.17& 15.22$\pm$0.17& 20.12$\pm$0.35& 33.58$\pm$0.45\\

\textbf{Causal Transformer (oracle)} & 0.46$\pm$0.02&  0.60$\pm$0.03 & 1.48$\pm$0.03 & 2.35$\pm$ 0.06& 3.30$\pm$0.07& 5.11$\pm$0.09& 7.34$\pm$0.13& 11.55$\pm$0.25& 16.98$\pm$0.29& 18.64$\pm$0.31& 28.45$\pm$0.33\\

 \hline
\textbf{G-Net} & 0.62$\pm$0.05&  0.80$\pm$0.05& 4.90$\pm$0.03& 5.56$\pm$0.05& 4.82$\pm$0.15& 5.79$\pm$0.12& 10.36$\pm$0.23& 15.17$\pm$0.25& 23.89$\pm$0.54& 32.75$\pm$1.20& 49.35$\pm$2.35\\

\textbf{G-Net (with substitute)} & 0.56$\pm$0.04&  0.75$\pm$0.01& 3.61$\pm$0.02& 4.99$\pm$0.20& 4.27$\pm$0.18& 5.50$\pm$0.15& 8.34$\pm$0.64& 13.55$\pm$0.97& 17.97$\pm$1.83& 19.25$\pm$2.04& 40.21$\pm$2.10\\

\textbf{G-Net (oracle)} & 0.48$\pm$0.02& 0.69$\pm$0.03& 3.10$\pm$0.05& 4.36$\pm$0.08& 4.45$\pm$0.12& 5.28$\pm$0.17& 8.28$\pm$0.23& 13.10$\pm$0.50& 17.47$\pm$0.65& 16.22$\pm$0.95& 35.35$\pm$1.77\\
 \hline

\textbf{CRN} & 0.53$\pm$0.02& 0.68$\pm$0.03& 1.63$\pm$0.04& 2.94$\pm$0.11& 5.14$\pm$0.17& 6.66$\pm$0.19& 9.08$\pm$0.25& 11.93$\pm$0.37& 16.54$\pm$0.65& 18.68$\pm$0.67& 29.66$\pm$1.12\\

\textbf{CRN (with substitute)} & 0.48$\pm$0.01& 0.65$\pm$0.01& 1.51$\pm$0.02& 2.56$\pm$0.18 & 3.98$\pm$0.21 & 6.05$\pm$0.35 & 6.81$\pm$0.65 & 8.50$\pm$1.16 & 13.81$\pm$0.76 & 16.23$\pm$0.73 & 26.11$\pm$1.41 \\

\textbf{CRN (oracle)} & 0.53$\pm$0.01& 0.60$\pm$0.01& 1.69$\pm$0.02& 2.87$\pm$0.09& 3.75$\pm$0.13& 5.69$\pm$0.17& 8.92$\pm$0.22& 9.65$\pm$0.31& 13.49$\pm$0.54& 15.64$\pm$0.61& 25.98$\pm$0.97\\

 \hline
\textbf{RMSN} & 0.57$\pm$0.02& 0.67$\pm$0.02& 1.60$\pm$0.03& 2.67$\pm$0.05& 4.31$\pm$0.15& 5.58$\pm$0.17& 7.40$\pm$0.32& 11.25$\pm$0.57& 15.01$\pm$0.89& 19.41$\pm$1.13& 25.07$\pm$0.97\\

\textbf{RMSN (with substitute)} & 0.45$\pm$0.01& 0.67$\pm$0.02& 1.51$\pm$0.04 & 2.31$\pm$0.03& 3.61$\pm$0.20& 4.61$\pm$0.20& 6.14$\pm$0.50& 7.58$\pm$0.82& 10.63$\pm$1.31& 18.53$\pm$1.50& 21.08$\pm$1.60\\

\textbf{RMSN (oracle)} & 0.48$\pm$0.01& 0.62$\pm$0.01& 1.43$\pm$0.03& 1.97$\pm$0.03& 3.42$\pm$0.13& 4.43$\pm$0.15& 6.00$\pm$0.27& 7.31$\pm$0.34& 9.20$\pm$0.48& 17.06$\pm$0.83& 19.32$\pm$1.26\\
 \hline
 \end{tabular}%
}
\label{tab:perf_sim_combined}
\end{table}

\paragraph{Robustness to Number of Prior Components} The Table \ref{tab:ablation_arsim_n_clusters} gives detailed results summarized in Figure \ref{fig:ablation_arsim_n_clusters} that assess the sensitivity of CDVAE to the variation of the prior cluster numbers \(K\). 
\begin{table}[!htbp]
 \caption{Results of CDVAE when varying the number of components \(K\) of the prior. The study is conducted on the synthetic data and is reported by PEHE. Smaller is better.}
 \centering
\resizebox{\textwidth}{!}{%
\begin{tabular}{|c|c|c|c|c|c|c|c|c|c|c|c|} 
\hline
Model       & $\gamma_{(1)}^{YU} = 0$ & $\gamma_{(1)}^{YU} = 0.25$ & $\gamma_{(1)}^{YU} = 0.5$ & $\gamma_{(1)}^{YU} = 0.75$ & $\gamma_{(1)}^{YU} = 1$ & $\gamma_{(1)}^{YU} = 1.25$ & $\gamma_{(1)}^{YU} = 1.5$ & $\gamma_{(1)}^{YU} = 1.75$ & $\gamma_{(1)}^{YU} = 2$ & $\gamma_{(1)}^{YU} = 2.25$ & $\gamma_{(1)}^{YU} = 2.5$ \\ \hline 

\textbf{CDVAE (ours)} & \textbf{0.43$\pm$0.02}& \textbf{0.50$\pm$0.03}& \textbf{0.96$\pm$0.09}& \textbf{1.57$\pm$0.08}& \textbf{1.90$\pm$0.10}&  \textbf{2.35$\pm$0.18}& \textbf{3.57$\pm$0.17}& \textbf{4.80$\pm$0.20}& \textbf{6.84$\pm$0.20}& \textbf{7.64$\pm$0.48}& \textbf{9.03$\pm$0.50}\\
 \hline
  \textbf{CDVAE ($K=2$)}& 0.43$\pm$0.01& 0.50$\pm$0.01& 0.96$\pm$0.03& 1.52$\pm$0.04& 2.07$\pm$0.07& 2.51$\pm$0.14& 4.10$\pm$0.41& 4.44$\pm$0.43& 6.98$\pm$0.38& 7.90$\pm$0.33& 8.88$\pm$0.45\\ \hline

 \textbf{CDVAE ($K=5$)}& 0.42$\pm$0.01& 0.48$\pm$0.02& 0.93$\pm$0.02& 1.41$\pm$0.11& 2.02$\pm$0.03& 2.52$\pm$0.06& 3.27$\pm$0.22& 4.25$\pm$0.45& 6.32$\pm$0.25& 7.97$\pm$0.31&8.12$\pm$0.48\\\hline
 
 \textbf{CDVAE ($K=8$)}& 0.40$\pm$0.01& 0.46$\pm$0.01& 0.96$\pm$0.06& 1.57$\pm$0.03& 2.13$\pm$0.04& 2.59$\pm$0.07& 3.20$\pm$0.26& 4.15$\pm$0.45& 6.63$\pm$0.48& 6.93$\pm$0.25&8.91$\pm$0.19\\\hline
 
 \textbf{CDVAE ($K=11$)}& 0.42$\pm$0.02& 0.47$\pm$0.01& 0.94$\pm$0.03& 1.40$\pm$0.05& 2.09$\pm$0.07& 2.47$\pm$0.22& 3.76$\pm$0.28& 5.09$\pm$0.52& 6.48$\pm$0.14& 7.73$\pm$1.232& 9.02$\pm$1.22\\\hline

 \end{tabular}%
}
\label{tab:ablation_arsim_n_clusters}
\end{table}

\subsection{Computational Scalability}
\label{appendix:arsim_comput_scalability}
We include additional experiments varying both the sequence length $T$ and the covariate dimension $d_x$. 
Empirically, CDVAE’s one-step PEHE remains essentially flat when $T$ more than doubles (75 $\rightarrow$ 175), 
while it increases modestly (about $+57\%$) when $d_x$ quintuples (100 $\rightarrow$ 500). 
This behavior matches the model’s complexity profile: GRU unrolling makes both run-time and memory scale linearly in $T$, 
and the IPM term---computed on a fixed-size batch of 16-dimensional representations---costs $\mathcal{O}(b^2)$ in batch size $b$ 
but does not depend on $d_x$. Only the first dense projection and GRU input weights scale with $d_x$, explaining the near-linear slowdown.

\begin{table}[h!]
\centering
\begin{tabular}{c c c c c c}
\toprule
Metric & $T=75$ & $T=100$ & $T=125$ & $T=150$ & $T=175$ \\
\midrule
PEHE & $1.09 \pm 0.09$ & $0.98 \pm 0.15$ & $1.00 \pm 0.20$ & $0.98 \pm 0.21$ & $0.97 \pm 0.09$ \\
Wall-clock (min) & $35 \pm 3$ & $57 \pm 3$ & $82 \pm 2$ & $95 \pm 3$ & $105 \pm 3$ \\
\bottomrule
\end{tabular}
\caption{Performance of CDVAE as sequence length $T$ increases.}
\label{tab:scalability-sequence}
\end{table}

\begin{table}[h!]
\centering
\begin{tabular}{c c c c c c}
\toprule
Metric & $d_x=100$ & $d_x=200$ & $d_x=300$ & $d_x=400$ & $d_x=500$ \\
\midrule
PEHE & $1.09 \pm 0.09$ & $1.51 \pm 0.03$ & $1.60 \pm 0.04$ & $1.71 \pm 0.05$ & $1.71 \pm 0.05$ \\
Wall-clock (min) & $35 \pm 3$ & $33 \pm 2$ & $34 \pm 2$ & $32 \pm 2$ & $33 \pm 2$ \\
\bottomrule
\end{tabular}
\caption{Performance of CDVAE as covariate dimension $d_x$ increases.}
\label{tab:scalability-covariates}
\end{table}

\section{Experiments on MIMIC-III data}
\subsection{Description of the MIMIC-III semi-synthetic model Model}
\label{appendix:mimic_descrip}

A cohort of 2,000 patients is extracted from the MIMIC-III dataset, with the simulation setup proposed by \cite{Melnychuk2022CausalTF} extending the model introduced in \cite{schulam2017reliable}. Let \(d_y\) denote the dimension of the outcome variable. For multiple outcomes, untreated outcomes, denoted as \(\mathbf{Z}_t^{j,(i)}\) for \(j=1, \ldots, d_y\), are generated for each patient \(i\) in the cohort. The generation process is defined as:  
\begin{equation}
\mathbf{Z}_t^{j,(i)} = \underbrace{\alpha_S^j \mathbf{B}\text{-spline}(t) + \alpha_g^j g^{j,(i)}(t)}_{\text{endogenous}} + \underbrace{\alpha_f^j f_Z^j\left(\mathbf{X}_t^{(i)}\right)}_{\text{exogenous}} + \underbrace{\varepsilon_t}_{\text{noise}},
\end{equation}
where the B-spline \(\mathbf{B}\text{-spline}(t)\) models the endogenous component, \(g^{j,(i)}(\cdot)\) is sampled independently for each patient from a Gaussian process with a Matérn kernel, and \(f_Z^j(\cdot)\) is sampled from a Random Fourier Features (RFF) approximation of a Gaussian process.  

To introduce confounding into the assignment mechanism, current time-varying covariates are incorporated via a random function \(f_Y^l\left(\mathbf{X}_t\right)\) and the average of a subset of the previous \(T_l\) treated outcomes, \(\bar{A}_{T_l}\left(\overline{\mathbf{Y}}_{t-1}\right)\). For \(d_a\) binary treatments \(\mathbf{A}_t^l\), where \(l=1, \ldots, d_a\), the treatment assignment mechanism is modeled as:  
\begin{equation*}
\begin{aligned}
    p_{\mathbf{A}_t^l} &= \sigma\left(\gamma_A^l \bar{A}_{T_l}\left(\overline{\mathbf{Y}}_{t-1}\right) + \gamma_X^l f_Y^l\left(\mathbf{X}_t\right) + b_l\right), \\
    \mathbf{A}_t^l &\sim \operatorname{Bernoulli}\left(p_{\mathbf{A}_t^l}\right),
\end{aligned}
\end{equation*}
where \(\sigma(\cdot)\) denotes the sigmoid function.  

Static features \(\mathbf{U}\), such as gender and ethnicity, which are categorical, are one-hot encoded. A random Singular Value Decomposition (SVD) transformation \(f_E(\cdot)\) is then applied to \(\mathbf{U}\), retaining all singular components. Subsequently, treatments are applied to the untreated outcomes using the following expression:  
\begin{equation}
E^j(t) = \sum_{i=t-w^l}^t \frac{\min_{l=1, \ldots, d_a} \mathbb{1}_{\left[\mathbf{A}_i^l=1\right]} p_{\mathbf{A}_i^l} \beta_{lj} + \left|\sum_{k=1}^{d_U} f_E(U)_k\right|}{\left(w^l-i\right)^2},
\end{equation}
where \(w^l\) is the treatment window.  

The final outcome combines the treatment effect and the untreated simulated outcome:
\begin{equation}
Y^j_t = Z^j_t + E^j(t).
\end{equation}

\subsection{Additional results}
\label{appendix:mimic_extended_results}
\paragraph{Results of baselines on the semi-synthetic MIMIC-III} The following Table \ref{tab:perf_mimic} provides the detailed results responsible for Figures \ref{fig:perf_mimic} related to baselines with the three different approaches.
\begin{table}[!htbp]
 \caption{Results on the MIMIC-III data reported by PEHE. Smaller is better.}
 \centering
\resizebox{0.4\textwidth}{!}{%
\begin{tabular}{|c|c|} 
\hline
Model      & $\mathrm{PEHE}$ \\ \hline 
\textbf{CDVAE (ours)} & \textbf{17.63$\pm$0.25}\\
\hline
\textbf{Causal CPC} & 19.27$\pm$0.25\\
\textbf{Causal CPC (with substitute)} & 18.45$\pm$0.26\\
\textbf{Causal CPC (oracle)} & 17.98$\pm$0.21\\
 \hline
\textbf{Causal Transformer} & 19.68$\pm$0.20 $\pm$\\
\textbf{Causal Transformer (with substitute)} & 18.58$\pm$ 0.21 \\
\textbf{Causal Transformer (oracle)} & 17.91 $\pm$ 0.20 \\
 \hline
\textbf{G-Net} & 19.75$\pm$0.21 \\
\textbf{G-Net (with substitute)} & 18.60$\pm$0.25 \\
\textbf{G-Net (oracle)} & 17.95$\pm$ 0.23\\
 \hline
\textbf{CRN} & 19.80$\pm$0.23 \\
\textbf{CRN (with substitute)} & 18.58$\pm$0.22\\
\textbf{CRN (oracle)} & 17.91$\pm$ 0.21 \\
 \hline
\textbf{RMSN} & 19.85$\pm$0.25 \\
\textbf{RMSN (with substitute)} &  18.66$\pm$0.23\\
\textbf{RMSN (oracle)} & 18.01$\pm$ 0.19 \\
 \hline
 \end{tabular}%
}
\label{tab:perf_mimic}
\end{table}

\section{Algorithmic Details} 
\label{appendix:Algorithmic_details}
\subsection{Algorithms Description}
\label{appendix:algos_desc}
In this appendix, we provide the algorithmic details for CDVAE. First, we describe the approximation of the Integral Probability Metric (IPM) term used in Eq. (\ref{eq:tot_loss}). The IPM term aims to reduce covariate imbalance between the treatment and control groups by quantifying the dissimilarity between their distributions.  To calculate the IPM, we use the Sinkhorn-Knopp algorithm \citep{sinkhorn1967diagonal} and the Wasserstein distance computation algorithm (Algorithm 3 in \cite{cuturi2014fastWass}). The Wasserstein distance computation (Algorithm \ref{alg:wasserstein_distance}) calculates the pairwise distances between data points and constructs a kernel matrix using a regularization parameter. It then computes the row and column marginals based on the weights assigned to each data point. The Sinkhorn-Knopp algorithm (Algorithm \ref{alg:sinkhorn_knopp}), integrated within the Wasserstein distance computation, is used to compute the optimal transport matrix. Finally, the Wasserstein distance is obtained by summing the products of the optimal transport matrix and the pairwise distances.  Computing the Wasserstein distance at each time step and for every batch is computationally expensive. To accelerate training, we compute it for a subsample of the time indices, sampled randomly at each batch. Empirically, this approach is sufficient for maintaining model performance. Specifically, we sample 10\% of the time indices at each batch, corresponding to \(m = \left\lfloor\frac{T}{10}\right\rfloor\) time steps, as shown in Algorithm \ref{alg:cdvae_training}.
\begin{algorithm}
\caption{Sinkhorn-Knopp Algorithm}
\label{alg:sinkhorn_knopp}
\begin{algorithmic}[1]
  \Require Kernel matrix $K \in \mathbb{R}^{n_t \times n_c}$
  \Require Row marginal vector $a \in \mathbb{R}^{n_t}$, Column marginal vector $b \in \mathbb{R}^{n_c}$
  \Ensure Optimal transport matrix $T \in \mathbb{R}^{n_t \times n_c}$
  \State Initialize transport matrix $T^{(0)}$ with all entries set to $1$
  \State Set iteration counter $k \gets 0$
  \While{not converged}
    \State Update row scaling vector $u \in \mathbb{R}^{n_t}$
    \State $u_i \gets \frac{a_i}{\sum_{j=1}^{n_c} K_{ij} T^{(k)}_{ij}}$
    \State Update column scaling vector $v \in \mathbb{R}^{n_c}$
    \State $v_j \gets \frac{b_j}{\sum_{i=1}^{n_t} K_{ij} T^{(k)}_{ij}}$
    \State Update transport matrix $T^{(k+1)}$
    \State $T^{(k+1)}_{ij} \gets \frac{u_i K_{ij} v_j}{\sum_{i'=1}^{n_t} \sum_{j'=1}^{n_c} u_{i'} K_{i'j'} v_{j'}}$
    \State Increment iteration counter $k \gets k + 1$
    \If{convergence criterion met}
      \State \textbf{Return} $T^{(k+1)}$
    \EndIf
  \EndWhile
\end{algorithmic}
\end{algorithm}

\begin{algorithm}
\caption{Weighted Wasserstein Distance Computation}
\label{alg:wasserstein_distance}
\begin{algorithmic}[1]
  \Require Batch $\mathcal{B} = \{\{\omega_{it}, y_{it}, \mathbf{x}_{it}\}_{t=1}^{T}, i=1,\dots,|\mathcal{B}|\}$
  \Require Representation learner $\Phi$
  \Require Weights vectors $\alpha_{\Phi}(\mathbf{h}_t, \omega)$
  \Require Regularization parameter $\lambda$
   
   \For{$t \in \{1, 2, \dots, T\}$}
     \State Compute $n_t^{(t)} \gets \sum_{i \in \mathcal{B}} W_{it}$, $n_c^{(t)} \gets \sum_{i \in \mathcal{B}} (1 - W_{it})$
     \State Compute pairwise distances matrix $M \in \mathbb{R}^{n_t^{(t)} \times n_c^{(t)}}$ \Comment{$M_{ij}^{(t)} = \|\mathbf{H}_{it} - \mathbf{H}_{jt}\|_{L_2}$, $\forall i,j \in \mathcal{B}$; $W_{it} =1$ and $W_{jt} =0$}
     \State Initialize kernel matrix $K \in \mathbb{R}^{n_t^{(t)} \times n_c^{(t)}}$ such that $K_{ij}^{(t)} \gets e^{-\lambda M_{ij}^{(t)}}$
     \State Compute row marginal vector $a^{(t)} \in \mathbb{R}^{n_t}$ such that $a_i^{(t)} \gets \frac{\alpha_{\Phi}(\mathbf{h}_{it}, 1)}{\sum_{k=1, W_{kt} =1} \alpha_{\Phi}(\mathbf{h}_{it}, 1)}$
     \State Compute column marginal vector $b^{(t)} \in \mathbb{R}^{n_c}$ such that $b_j^{(t)} \gets \frac{\alpha_{\Phi}(\mathbf{h}_{it}, 0)}{\sum_{k=1, W_{kt} =0} \alpha_{\Phi}(\mathbf{h}_{it}, 0)}$
     \State Compute optimal transport matrix $T^{(t)} \in \mathbb{R}^{n_t^{(t)} \times n_c^{(t)}}$: 
     \State \hspace{1cm} $T^{(t)} \gets \text{Sinkhorn-Knopp}(K^{(t)}, a^{(t)}, b^{(t)})$
     \State Compute Wasserstein distance $D_t \gets \sum_{i=1}^{n_t} \sum_{j=1}^{n_c} T_{ij}^{(t)} M_{ij}^{(t)}$
   \EndFor
   \State \textbf{Return} $\sum_{t=1}^{T} D_t$
\end{algorithmic}
\end{algorithm}

\begin{algorithm}
\caption{Pseudo-code for training CDVAE}
\label{alg:cdvae_training}
\begin{algorithmic}[1]
\Require Training Data $\mathcal{D}_T = \{\{w_{it}, y_{it}, \mathbf{x}_{it}\}_{t=1}^{T}, i=1,\dots,n\}$
\Require CDVAE parameters $\phi$, $\theta_y$, $\theta_{\omega}$, $\Phi$, $\sigma$, Optimizer parameters

\For{$p \in \{1, \dots, \text{epoch}_{\text{max}}\}$}
    \For{batch $\mathcal{B} = \{\{w_{it}, y_{it}, \mathbf{x}_{it}\}_{t=1}^{T}, i=1,\dots,|\mathcal{B}|\}$}
        \State Compute approximate posterior $q_\phi(z \mid y_{\leq T}, \mathbf{x}_{\leq T}, \omega_{\leq T})$
        \State Sample latent variables $z$ from $q_\phi(z \mid y_{\leq T}, \mathbf{x}_{\leq T}, \omega_{\leq T})$
        \State Compute representation \(\Phi(\mathbf{H}_t)\) for \(t=1,\dots, T\).
        \State Compute \(\mathrm{ELBO}(\theta, \phi, \Phi, \sigma)\).
        \State Compute \(\mathcal{L}_{\mathrm{DistM}}(\phi)\).
        \State Choose $t_{1}, \dots, t_{m} \sim \mathcal{U}([1, T]) $ compute IPM term as :
    \[
        \mathcal{L}_{\mathrm{IPM}} = \sum_{i=1}^m \mathrm{IPM}_{G} \left( g_{\theta_{\omega}, \Phi}(\cdot \mid W_{t_i} = 1), \, g_{\theta_{\omega}, \Phi}(\cdot \mid W_{t_i} = 0)\right)
        \]
        \State Compute total loss \( \mathcal{L}_{\mathrm{tot}} = \mathrm{ELBO}(\theta, \phi, \Phi, \sigma) + \lambda_{\mathrm{IPM}} \mathcal{L}_{\mathrm{IPM}}(\theta_{\omega}, \Phi) + \lambda_{\mathrm{DistM}} \mathcal{L}_{\mathrm{DistM}}(\phi).\)
        \State Update parameters related to the total loss
    
        $$[\theta, \phi, \Phi, \sigma] \leftarrow  [\theta, \phi, \Phi, \sigma ]- \mu  \left( \frac{\partial \mathcal{L}_{\mathrm{tot}}(\theta, \theta_{\omega}, \phi, \Phi, \sigma)}{\partial [\theta, \phi, \Phi, \sigma] }  \right  ) $$
        \State Compute binary cross-entropy loss for the propensity network \(\mathcal{L}_{W}(\theta_{\omega}, \Phi).\) and update parameters
        $$\theta_{\omega}  \leftarrow  \theta_{\omega} - \mu_{W}  \left( \frac{\partial  \mathcal{L}_{W}(\theta_{\omega}, \Phi)}{\partial \theta_{\omega}}  \right  ) $$
    \EndFor
\EndFor
\State \textbf{Return} Trained CDVAE model
\end{algorithmic}
\end{algorithm}

\subsection{Computational complexity analysis of CDVAE}
\label{appendix:complexity_analysis_cdvae}
We provide both a theoretical and a practical computational complexity analysis. 
The training complexity of CDVAE can be summarized in the following proposition.

\begin{proposition}[Per-epoch complexity]
For a mini-batch of size $b$, sequence length $T$, raw covariate dimension $d_x$, GRU hidden size $H$, 
and fixed representation width $d_\phi (=16)$, let $\varepsilon$ denote the entropic regularization strength in the Sinkhorn algorithm for the IPM term. 
Then, one training epoch of CDVAE requires
\[
\underbrace{O\!\bigl(nT\,H(H+d_x)\bigr)}_{\text{GRU forward/back-prop}}
\;+\;
\underbrace{O\!\bigl(n\,d_x\,d_\phi\bigr)}_{\Phi\text{-projection}}
\;+\;
\underbrace{O\!\bigl(m\,b^{2}\log b/\varepsilon^{2}\bigr)}_{\substack{\text{Sinkhorn IPM}\\ m=\lceil 0.1T \rceil}},
\]
and stores $O(T\,H+n\,b)$ activations.
\end{proposition}

\paragraph{Discussion.}
The first two terms are \emph{linear in the sequence length} $T$ and \emph{(near-)linear in the covariate dimension} $d_x$ 
because the GRU unrolls once per time step and the projection layer is dense. 
The third term corresponds to the entropically regularized Sinkhorn solver; 
its cost is \emph{quadratic only in the batch size} $b$ and independent of $d_x$ once the feature map has been compressed to $d_\phi$. 

Since we evaluate the IPM on only $m = \lceil 0.1T \rceil$ randomly chosen time points, 
the overall runtime grows linearly with $T$ and almost linearly with $d_x$ for realistic batches ($b \leq 1024$). 
The sole quadratic factor arises from the $b^2$ dependence inside Sinkhorn.

\paragraph{Practical results.}
On the simulation dataset, Table~\ref{tab:training-times} reports the mean training time (in minutes) of all models, 
averaged over 10 random seeds. 
All experiments were run on a single NVIDIA Tesla T4 GPU.

\begin{table}[h!]
\centering
\begin{tabular}{l c}
\toprule
\textbf{Model} & \textbf{Training time (min)} \\
\midrule
CDVAE (ours)   & $35 \pm 3$ \\
Causal CPC     & $8 \pm 2$ \\
CT             & $27 \pm 2$ \\
G-Net          & $12 \pm 2$ \\
CRN            & $7 \pm 2$ \\
RMSN           & $6 \pm 2$ \\
\bottomrule
\end{tabular}
\caption{Mean training times of different models on the simulation dataset (10 seeds, 1$\times$ NVIDIA Tesla T4).}
\label{tab:training-times}
\end{table}

\paragraph{Interpretation.}
CRN, RMSN, and CPC lack the IPM term and thus train faster, while the Causal Transformer incurs higher cost due to transfer blocks. CDVAE’s additional complexity comes from explicit distributional balancing over $\Phi(\mathbf{H}_t)$, which we found necessary for stable ACATE estimation. The random-time subsampling of representations keeps the added cost manageable in practice.

\section{Models hyperparameters}
\label{sect:hyperparams_details}
We use Pytorch \citep{Paszke2019PyTorchAI} and Pytorch Lightening \citep{Falcon_PyTorch_Lightning_2019} to implement CDVAE and all baselines. For the selection of hyperparameters, we fine-tuned our models using a random grid search. We use the weighted reconstruction error as the selection criterion for CDVAE. We do not use a metric related to the quality of estimating ACATEs as a selection criterion; a choice cannot be used in a real scenario because ACATEs are usually unavailable. It is still rather an open problem how to design a criterion for causal cross-validation or hyperparameters tuning for a causal model. Many proxies are used in the literature, including loss over the factual outcomes \citep{lim2018RMSM, hassanpour2019DRLCFR, bica2020estimatingCRN, Bica2020TimeSDecounf}, one nearest-neighbor imputation \citep{shalit2017estimating, johansson2022generalization}, for the counterfactual outcome, inﬂuence functions \citep{alaa2019validating}, rank-preserving causal cross-validation \citep{schuler2018comparison}, Robinson residual decomposition \citep{nie2021quasi,lu2020reconsidering}. In this work, and since we intend to search for the best regularization parameters ($\lambda_{IPM}$, $\lambda_{MM}$) among other parameters related to CDVAE architecture and optimization, we chose not to consider the total loss as this may bias the choice of ($\lambda_{IPM}$, $\lambda_{MM}$) toward small values. We, therefore, use the weighted reconstruction error as an objective function for fine-tuning. Similarly, we fine-tune Causal CPC, Causal Transformer, CRN, and RMSM using the loss over the factual response as a criterion. 

We report in the following tables the search space of hyperparameters for all baselines.

\begin{table}[!htbp]
\centering
\caption{Hyper-parameters search range for RMSN}
\resizebox{0.9\textwidth}{!}{%
\begin{tabular}{|c|c|c|c|c|}
\hline
\textbf{Model} & \textbf{Sub-model} & \textbf{Hyperparameter} & \textbf{Synthetic data}& \textbf{MIMIC III}\\
\hline
\multirow{7}{*}{RMSNs} 
& \multirow{7}{*}{Propensity Treatment Network} 
& LSTM layers & 1 & 1 \\
\cline{3-5}
& & Learning rate & $0.01, 0.005, 0.001, 0.0001$ & $0.01, 0.005, 0.001, 0.0001$ \\
\cline{3-5}
& & Batch size & $32, 64, 128$ & $32, 64, 128$ \\
\cline{3-5}
& & LSTM hidden units & $6, 8, \dots, 12,14$& $4, 6, \dots, 20$\\
\cline{3-5}
& & LSTM dropout rate & - & - \\
\cline{3-5}
& & Max gradient norm & $0.5, 1, 2$ & $0.5, 1, 2$ \\
\cline{3-5}
& & Early Stopping (min delta) & 0.001& 0.001\\
\cline{3-5}
& & Early Stopping (patience) & 10& 30 \\
\hline
&\multirow{6}{*}{Propensity History Network} & LSTM layers & 1 & 1 \\
\cline{3-5}
& & Learning rate & $0.01, 0.005, 0.001, 0.0001$ & $0.01, 0.005, 0.001, 0.0001$ \\
\cline{3-5}
& & Batch size & $32, 64, 128$ & $64, 128, 256$ \\
\cline{3-5}
& & LSTM hidden units & $6, 8, \dots, 12,14$& $4, 6, \dots, 30$ \\
\cline{3-5}
& & LSTM dropout rate & - & - \\
\cline{3-5}
& & Early Stopping (min delta) & 0.001& 0.0001 \\
\cline{3-5}
& & Early Stopping (patience) & 10& 30 \\
\hline
& \multirow{6}{*}{Encoder} & LSTM layers & 1 & 1 \\
\cline{3-5}
& & Learning rate & $0.01, 0.005, 0.001, 0.0001$ & $0.01, 0.005, 0.001, 0.0001$ \\
\cline{3-5}
& & Batch size & $32, 64, 128,256$& $32, 64, 128$ \\
\cline{3-5}
& & LSTM hidden units & $6, 8, \dots, 18,20$& $6, 8, \dots, 18,20$\\
\cline{3-5}
& & LSTM dropout rate & - & - \\
\cline{3-5}
& & Early Stopping (min delta) & 0.001& 0.001\\
\cline{3-5}
& & Early Stopping (patience) & 10& 30 \\
\hline
& \multirow{6}{*}{Decoder} & LSTM layers & 1 & 1 \\
\cline{3-5}
& & Learning rate & $0.01, 0.005, 0.001, 0.0001$ & $0.01, 0.005, 0.001, 0.0001$ \\
\cline{3-5}
& & Batch size & $32, 64, 128,256$& $128, 512, 1024$ \\
\cline{3-5}
& & LSTM hidden units & $6, 8, \dots, 18,20$& $6, 8, \dots, 18,20$\\
\cline{3-5}
& & LSTM dropout rate & - & - \\
\cline{3-5}
& & Max gradient norm & $0.5, 1, 2$ & $0.5, 1, 2$ \\
\cline{3-5}
& & Early Stopping (min delta) & 0.001& 0.0001 \\
\cline{3-5}
& & Early Stopping (patience) & 10& 30 \\
\hline
\end{tabular}%
}
\end{table}

\begin{table}[!htbp]
\centering
\caption{Hyper-parameters search range for CRN}
\resizebox{0.7\textwidth}{!}{%
\begin{tabular}{|c|c|c|c|c|}
\hline
\textbf{Model} & \textbf{Sub-model} & \textbf{Hyperparameter} & \textbf{Synthetic data}& \textbf{MIMIC III)}\\
\hline
\multirow{8}{*}{CRN} 
& \multirow{8}{*}{Encoder} 
& LSTM layers & 1 & 1 \\
\cline{3-5}
& & Learning rate & $0.01, 0.005, 0.001, 0.0001$ & $0.01, 0.005, 0.001, 0.0001$ \\
\cline{3-5}
& & Batch size & $32, 64, 128,256$& $32, 64, 128$ \\
\cline{3-5}
& & LSTM hidden units & $6, 8, \dots, 18,20$& $6, 8, \dots, 18,20$\\
\cline{3-5}
& & LSTM dropout rate & - & - \\
\cline{3-5}
& & BR size & $6, 8, \dots, 18,20$& $6, 8, \dots, 18,20$\\
\cline{3-5}
& & Early Stopping (min delta) & 0.001& 0.001\\
\cline{3-5}
& & Early Stopping (patience) & 10& 30 \\
\hline
&\multirow{7}{*}{Decoder} & LSTM layers & 1 & 1 \\
\cline{3-5}
& & Learning rate & $0.01, 0.005, 0.001, 0.0001$ & $0.01, 0.005, 0.001, 0.0001$ \\
\cline{3-5}
& & Batch size & $128, 256, 512$ & $256, 512, 1024$ \\
\cline{3-5}
& & LSTM hidden units & $6, 8, \dots, 18,20$& $6, 8, \dots, 18,20$\\
\cline{3-5}
& & LSTM dropout rate & - & - \\
\cline{3-5}
& & BR size & $6, 8, \dots, 18,20$& $6, 8, \dots, 18,20$\\
\cline{3-5}
& & Early Stopping (min delta) & 0.001& 0.001\\
\cline{3-5}
& & Early Stopping (patience) & 10& 30 \\
\hline
\end{tabular}%
}
\end{table}

\begin{table}[!htbp]
\centering
\caption{Hyper-parameters search range for G-Net}
\resizebox{0.7\textwidth}{!}{%
\begin{tabular}{|c|c|c|}
\hline
\textbf{Hyperparameter} & \textbf{Cancer simulation} & \textbf{MIMIC III (SS)} \\
\hline
LSTM layers & 1 & 1  \\
Learning rate & $0.01,0.005, 0.001, 0.0001$ &  $0.01,0.005, 0.001, 0.0001$ \\
Batch size & $32, 64, 128$ &  $32, 64, 128$ \\
LSTM hidden units & $6, 8, \dots, 18,20$ & $6, 8, \dots, 18,20$ \\
FC hidden units & $6, 8, \dots, 18,20$&  $6, 8, \dots, 18,20$ \\
LSTM dropout rate & - & -  \\
R size& $6, 8, \dots, 18,20$& $4,6, \dots, 30$ \\
MC samples & 50 & 50 \\
Early Stopping (min delta)& 0.001&  0.001\\
Early Stopping (patience)& 10& 30 \\
\cline{1-3}
\end{tabular}%
}
\end{table}

\begin{table}[!htbp]
\centering
\caption{Hyper-parameters search range for Causal Transformer}
\resizebox{0.7\textwidth}{!}{%
\begin{tabular}{|c|c|c|}
\hline
\textbf{Hyperparameter} & \textbf{Cancer simulation} & \textbf{MIMIC III (SS)} \\
\hline
Transformer blocks & 1 & 1 \\
Learning rate & $0.01,0.005, 0.001, 0.0001$ & $0.01,0.005, 0.001, 0.0001$  \\
Batch size & $32, 64, 128$ & $32, 64, 128$ \\
Attention heads & $2$ & $2$ \\
Transformer units & $4,6, \dots, 20$& $4,6, \dots, 20$ \\
LSTM dropout rate & -  & - \\
BR size& $6, 8, \dots, 18,20$&  $4,6, \dots, 20$ \\
FC hidden units & $6, 8, \dots, 18,20$&   $4,6, \dots, 20$ \\
Sequential dropout rate & $0.1,0.2, 0.3$& $0.1,0.2, 0.3$ \\
Max positional encoding & $15$& $20$  \\
Early Stopping (min delta)& 0.001& 0.001 \\
Early Stopping (patience)& 10& 30 \\
\cline{1-3}
\end{tabular}%
}
\end{table}

\begin{table}[!htbp]
\centering
\caption{Hyper-parameters search range for Causal CPC}
\resizebox{0.9\textwidth}{!}{%
\begin{tabular}{|c|c|c|c|c|}
\hline
\textbf{Model} & \textbf{Sub-model} & \textbf{Hyperparameter} & \textbf{Cancer simulation} & \textbf{MIMIC III (SS)} \\
\hline
\multirow{9}{*}{Causal CPC} & \multirow{9}{*}{Encoder} 
& GRU layers & 1 & 1 \\
\cline{3-5}
& & Learning rate & $0.01, 0.005, 0.001, 0.0001$ & $0.01, 0.005, 0.001, 0.0001$ \\
\cline{3-5}
& & Batch size & $32, 64, 128$ & $64, 128, 256$ \\
\cline{3-5}
& & GRU hidden units & $6, 8, \dots, 18,20$& $6, 8, \dots, 18,20$\\
\cline{3-5}
& & GRU dropout rate & - & - \\
\cline{3-5}
& & Local features (LF) size & $6, 8, \dots, 18,20$& $4, 6, \dots, 20$ \\
\cline{3-5}
& & Context Representation (CR) size & $6, 8, \dots, 18,20$& $4, 6, \dots, 20$ \\
\cline{3-5}
& & Early Stopping (min delta) & 0.001 & 0.001 \\
\cline{3-5}
& & Early Stopping (patience) & 10& 30\\
\hline
& \multirow{13}{*}{Decoder} & GRU layers & 1 & 1 \\
\cline{3-5}
& & Learning rate (decoder w/o treatment sub-network) & $0.01, 0.005, 0.001, 0.0001$ & $0.01, 0.005, 0.001, 0.0001$ \\
\cline{3-5}
& & Learning rate (encoder fine-tuning) & $0.001, 0.0005, 0.0001, 0.00005$ & $0.001, 0.0005, 0.0001, 0.00005$ \\
\cline{3-5}
& & Learning rate (treatment sub-network) & $0.05, 0.01, 0.005, 0.0001$ & $0.05, 0.01, 0.005, 0.0001$ \\
\cline{3-5}
& & Batch size & $32, 64, 128$ & $32, 64, 128$ \\
\cline{3-5}
& & GRU hidden units & CR size & CR size \\
\cline{3-5}
& & GRU dropout rate & - & - \\
\cline{3-5}
& & BR size & CR size & CR size \\
\cline{3-5}
& & GRU layers (Treat Encoder) & 1 & 1 \\
\cline{3-5}
& & GRU hidden units (Treat Encoder) & 6 & 6 \\
\cline{3-5}
& & FC hidden units & $6, 8, \dots, 18,20$& $4, 6, \dots, 20$ \\
\cline{3-5}
& & Random time indices (m) & 10\% & 10\% \\
\cline{3-5}
& & Early Stopping (min delta) & 0.001 & 0.001 \\
\cline{3-5}
& & Early Stopping (patience) & 10& 30\\
\hline
\end{tabular}%
}
\end{table}

\begin{table}[!htbp]
\centering
\caption{Hyper-parameters search range for CDVAE}
\resizebox{0.9\textwidth}{!}{%
\begin{tabular}{|c|c|c|c|c|}
\hline
\textbf{Model} & \textbf{Sub-model} & \textbf{Hyperparameter} & \textbf{Synthetic data}& \textbf{MIMIC III}\\
\hline
\multirow{7}{*}{CDVAE} 
& \multirow{4}{*}{Inference Network} & GRU layers & 1 & 1 \\
\cline{3-5}
& & GRU hidden units & $6, 8, \dots, 12,14$& $4, 6, \dots, 20$\\
\cline{3-5}
& & GRU dropout rate& - & - \\
\cline{3-5}
& & Latent dim of \(\mathbf{z}\) & 0.001& 0.001\\
\hline
& \multirow{1}{*}{Propensity Network} & FC hidden units & $6, 8, \dots, 12,14$& $4, 6, \dots, 30$ \\
\hline
& \multirow{4}{*}{Representation Learner} & LSTM layers & 1 & 1 \\
\cline{3-5}
& & GRU hidden units & $6, 8, \dots, 18,20$& $6, 8, \dots, 18,20$\\
\cline{3-5}
& & GRU dropout rate & - & - \\
\cline{3-5}
& & Dimension of representation & $6, 8, \dots, 18,20$& $4, 6, \dots, 20$ \\
\hline
& \multirow{1}{*}{Decoder} & FC hidden units & $(dim(\mathbf{z})+dim(\Phi(\mathbf{H}_t)) / 2$& $(dim(\mathbf{z})+dim(\Phi(\mathbf{H}_t)) / 2$\\
\hline
& \multirow{4}{*}{Global} & Learning rate (w/o propensity network) & $0.01, 0.005, 0.001, 0.0001$ & $0.01, 0.005, 0.001, 0.0001$ \\
\cline{3-5}
& & Learning rate (propensity network) & $0.01, 0.005, 0.001, 0.0001$ & $0.01, 0.005, 0.001, 0.0001$ \\
\cline{3-5}
& & Batch size & $32, 64, 128,256$& $128, 512, 1024$ \\
\cline{3-5}
& & Max gradient norm & $0.5, 1, 2$ & $0.5, 1, 2$ \\
\cline{3-5}
& & Number of components in Prior & $2, 4, \dots, 18, 20$& $2, 4, \dots, 20$ \\
\hline
\end{tabular}%
}
\end{table}

\section{The Neural Architecture of CDVAE}
\label{appendix:extended_archi_CDVAE}
The extended neural architecture of CDVAE comprises multiple components which we did not explicit in Section \ref{sect:CDVAE_archi}. We first begin by detailing neural network functions related to the generative model. The Table \ref{tab:archi_phi_rep_learner} outlines the architecture for the Representation Learner $\Phi$ which encodes the context history, Table \ref{tab:archi_f_theta} presents the identical architecture for both $f_{\theta_{y}^1}$ and $f_{\theta_{y}^0}$ responsible for generating the two potential outcomes. Meanwhile, Table \ref{tab:archi_e_theta} illustrates the design of propensity network $e_{\theta_{\omega}}(.)$ built on the top of the shared representation. Lastly, Tables \ref{tab:archi_inf_net} depict the architecture used to learn both the mean and covariance matrix for the approximate posterior assumed to be Gaussian. 

\begin{table}[!htbp]
\centering
\begin{tabular}{|c|}
\hline Inputs: $\{\Phi(\mathbf{h}_t)\}_{ 1 \leq t \leq T}$, $\mathbf{z}$ \\
\hline Concatenate: $[\Phi(\mathbf{h}_t), \mathbf{z}]_{ 1 \leq t \leq T}$\\
\hline Linear Layer \\ 
\hline Weight Normalization \\
\hline ELU  \\
\hline Linear Layer \\
\hline Weight Normalization \\
\hline Linear Layer  \\
\hline Output: $\{\hat{Y}_{t+1}(\omega)\}_{ 1 \leq t \leq T-1}$ \\ 
\hline
 \end{tabular}
\caption{Architecture of the outcome model prediction, i.e., the decoder.}
\label{tab:archi_f_theta}
\end{table}

\begin{table}[!htbp]
\centering
\begin{minipage}{0.33\textwidth}
\centering
\begin{tabular}{|c|}
\hline Inputs: $\{y_{t}, \mathbf{x}_{t}, \omega_{t}\}_{ 1 \leq t \leq T}$ \\
\hline Concat: $[y_{t}, \mathbf{x}_{t}, \omega_{t}]_{ 1 \leq t \leq T}$ \\
\hline GRU layer \\
\hline Linear Layer \\
\hline Tanh  \\
\hline Outputs: $\{\Phi(\textbf{h}_t)\}_{ 1 \leq t \leq T}$\\ \hline
\end{tabular}
\caption{Architecture: representation learner $\phi$ of CDVAE}
\label{tab:archi_phi_rep_learner}
\end{minipage}%
\begin{minipage}{0.33\textwidth}
\centering
\begin{tabular}{|c|}
\hline Inputs: $\{\Phi(\mathbf{h}_t)\}_{ 1 \leq t \leq T}$ \\
\hline Linear Layer \\
\hline ELU  \\
\hline Sigmoid  \\
\hline Output: $\{\hat{W}_{t+1}(\omega)\}_{ 1 \leq t \leq T-1}$ 
 \\ \hline\end{tabular}
\caption{Architecture: propensity network  $e_{\theta_{\omega}}(.)$.}
\label{tab:archi_e_theta}
\end{minipage}
\begin{minipage}{0.33\textwidth}
\centering
\begin{tabular}{|c|} \hline 

Inputs: $\{y_{t}, \mathbf{x}_{t}, \omega_{t}\}_{1 \leq t \leq T}$ \\ \hline  
Concat: $[y_{t}, \mathbf{x}_{t}, \omega_{t}]_{1 \leq t \leq T}$ \\ \hline  
GRU layer \\ \hline 

\begin{tabular}{c|c}  Linear Layer & Linear Layer  \end{tabular}\\ \hline 

\begin{tabular}{c|c}  $\Sigma_{\phi_3}\left(\mathbf{g}_T\right)$ & $\mu_{\phi_2}\left(\mathbf{g}_T\right)$ \end{tabular}\\ \hline 

\end{tabular}
\caption{Inference network $q_\phi\left(\mathbf{z} \mid y_{\leq T}, \mathbf{x}_{\leq T}, \omega_{\leq T}\right)$}
\label{tab:archi_inf_net}
\end{minipage}
\end{table}

\end{document}